\newtheorem{lemma}{Lemma}
\newcommand{\nf}{NF}
\newcommand\newlinesymbol{\textcolor{lightgray}{\textbackslash n}}
\definecolor{mediumblue}{rgb}{0.43,0.66,0.63}
\definecolor{lightblue}{rgb}{0.8,0.9,1}
\crefname{remarkbox}{Prop.}{Props.}
\Crefname{remarkbox}{Prop.}{Props.}
\newcounter{remarkbox}
\renewcommand{\theremarkbox}{\arabic{remarkbox}}
\newtcolorbox{remarkbox}{
  enhanced,
  breakable,
  colback=gray!10,
  colframe=gray!50!black,
  coltitle=black,
  fonttitle=\bfseries,
  left=1mm, right=1mm, top=1mm, bottom=1mm,
  boxrule=0.5pt,
  arc=4pt,
  before upper={%
    \refstepcounter{remarkbox}%
    \noindent\textbf{Proposition~\theremarkbox.}\quad
  },
}
\NewDocumentCommand{\yz}
{ mO{} }{\textcolor{blue}{\textsuperscript{\textit{Yizhe}}\textsf{\textbf{\small[#1]}}}}
\def\th@remark{%
  \thm@headfont{\bfseries}%
  \normalfont %
  \thm@preskip\topsep \divide\thm@preskip\tw@
  \thm@postskip\thm@preskip
}
\crefname{section}{Sec.}{Sec.}
\crefname{appendix}{App.}{App.}
\crefname{algorithm}{Alg.}{Alg.}
\crefname{figure}{Fig.}{Fig.}
\crefname{definition}{Def.}{Def.}
\crefname{proposition}{Prop.}{Prop.}
\crefname{theorem}{Thm.}{Thm.}
\newcommand{\sigmacond}[1]{\sigma}%
\def\eqref#1{equation~\ref{#1}}
\def\1{\bm{1}}
\def\rvh{{\mathbf{h}}}
\def\rvu{{\mathbf{i}}}
\def\rvu{{\mathbf{u}}}
\def\rvx{{\mathbf{x}}}
\def\rvy{{\mathbf{y}}}
\def\rvz{{\mathbf{z}}}
\def\vmu{{\bm{\mu}}}
\def\vtheta{{\bm{\theta}}}
\def\vpi{{\bm{\pi}}}
\def\mI{{\bm{I}}}
\DeclareMathAlphabet{\mathsfit}{\encodingdefault}{\sfdefault}{m}{sl}
\SetMathAlphabet{\mathsfit}{bold}{\encodingdefault}{\sfdefault}{bx}{n}
\def\gC{{\mathcal{C}}}
\newcommand{\R}{\mathbb{R}}
\newtheorem{prop}{Proposition}{}
{}
\newcommand{\texteight}{{\small \textsc{Text8}}\xspace}
\newcommand{\owt}{{\small \textsc{OpenWebText}}\xspace}
\newcommand{\ponemix}{{p_{\text{mix-}1}}}
\newcommand{\pmulmix}{{p_{\text{mix-}d}}}
\newcommand{\ours}{\texttt{TarFlowLM}\xspace}
\pgfplotsset{compat=1.14}
\pgfplotsset{compat/show suggested version=false}
\title{Flexible Language Modeling in Continuous Space with Transformer-based Autoregressive Flows}
\author{
  Ruixiang Zhang, Shuangfei Zhai, Jiatao Gu, Yizhe Zhang, Huangjie Zheng, Tianrong Chen \\ [1mm]
  \bf{Miguel Angel Bautista, Josh Susskind, Navdeep Jaitly} \\ [1mm]
  Apple\\ [1mm]
  \texttt{\{ruixiangz, szhai, njaitly\}@apple.com} \\
}
\begin{document}
\doparttoc
\faketableofcontents

\maketitle

\begin{abstract}
  Autoregressive models have driven remarkable progress in language modeling. Their foundational reliance on discrete tokens, unidirectional context, and single-pass decoding, while central to their success, also inspires the exploration of a design space that could offer new axes of modeling flexibility.
  In this work, we explore an alternative paradigm, shifting language modeling from a discrete token space to a continuous latent space.
  We propose a novel framework {\bf \ours}, that employs transformer-based autoregressive normalizing flows~\citep{zhai2024normalizing} to model these continuous representations.
  This approach unlocks substantial flexibility, enabling the construction of models that can capture global bi-directional context through stacked, alternating-direction autoregressive transformations, support block-wise generation with flexible token patch sizes, and facilitate a hierarchical multi-pass generation process.
  We further propose new mixture-based coupling transformations designed to capture complex dependencies within the latent space shaped by discrete data, and demonstrate theoretical connections to conventional discrete autoregressive models.
  Extensive experiments on language modeling benchmarks demonstrate strong likelihood performance and highlight the flexible modeling capabilities inherent in our framework.
\end{abstract}

\section{Introduction}
Transformer-based autoregressive models~\cite{vaswani2017attention,gpt3} have emerged as the dominant paradigm for language modeling, achieving remarkable performance by predicting discrete tokens one at a time under the cross-entropy objective. By scaling both model size and training data, these models excel at next-token prediction and have become the foundation for modern natural language generation systems.

Their remarkable success provides a strong foundation and inspires further exploration into the \textbf{design space of autoregressive sequence modelling}. While the established paradigm of discrete, typically unidirectional autoregressive generation offers a powerful and well-understood framework, we consider whether alternative formulations might offer new dimensions of flexibility and open different avenues for model construction. This work investigates such a possibility: \textit{What if autoregressive language modeling were reformulated within a continuous latent space?} Moving to continuous representations allows for the preservation of the sequential factorization familiar from autoregressive methods, while potentially unlocking new modeling capabilities.

This adoption of continuous latent spaces is a direction also pursued by other generative frameworks; for instance, diffusion models have demonstrated strong capabilities in this domain. Our investigation, however, distinguishes itself by concentrating on autoregressive normalizing flows for modeling the joint distribution of these continuous sequences. We pursue this specific avenue because normalizing flows offer expressive power for density estimation, and critically, their intrinsic sequential processing aligns them closely with the fundamental mechanisms of discrete autoregressive language models. This alignment provides a unique vantage point: it allows us to explore how the core principles of autoregressive modeling can be evolved and potentially augmented when transitioned into a continuous, learnable transformation framework, thereby seeking to extend and enrich this successful paradigm.

Building on this perspective, we propose a novel framework that employs Transformer-based autoregressive normalizing flows~\citep{zhai2024normalizing} to model these continuous latent representations. This shift from discrete token space to a continuous latent space is key to unlocking substantial modeling flexibility. Our formulation facilitates the construction of models capable of capturing global bi-directional context through stacked, alternating-direction autoregressive transformations. It also supports block-wise generation with adaptable token patch sizes and enables a hierarchical multi-pass generation process, allowing for the observation and influence of sequence formation through intermediate stages. These capabilities arise naturally from the continuous and invertible nature of the flow-based transformations.

To this end, we present \ours, a Transformer-based Auto-Regressive Flow Language Model that uses mixture-based coupling transformations to effectively model the complex, multi-modal distributions created when mapping discrete data to a continuous latent space. Our key theoretical contribution is establishing the equivalence that transforms these mixture distributions into exact, invertible normalizing flow layers. Specifically, we show that a single-dimensional Mixture of Gaussians~(MoG) distribution can be realized as a 1D Mixture-CDF flow, and a multi-dimensional MoG distribution as a Mixture-Rosenblatt flow. We also draw theoretical links between our continuous approach and standard discrete autoregressive models. Experiments on language modeling benchmarks show that our method achieves strong likelihood performance and, importantly, enables greater modeling flexibility. Our work expands the possibilities of language modeling by extending autoregressive methods into the continuous domain.

  \section{Background}
  \label{sec:background}

  Normalizing flows offer a method for constructing flexible probability distributions over continuous variables~\citep{papamakarios2021normalizing}. The core idea is to start with a simple base distribution $p_{\text{base}}(\rvu)$ defined on $\R^d$, often a standard Gaussian, and transform samples $\rvu \sim p_{\text{base}}(\rvu)$ using an invertible and differentiable function $f: \R^d \to \R^d$, known as a diffeomorphism. This process yields a variable $\rvz = f^{-1}(\rvu)$ that follows a potentially much more complex distribution $p(\rvz)$. A key advantage of this approach is that the probability density function $p(\rvz)$ can be computed exactly using the change of variables formula:
  \begin{equation}
    \log p(\rvz) = \log p_{\text{base}}(f(\rvz)) + \log \left| \det J_f(\rvz) \right|
    \label{eq:change_of_variables}
  \end{equation}
  Here, $J_f(\rvz)$ is the Jacobian matrix of the transformation $f$ evaluated at $\rvz$, and $|\det(\cdot)|$ denotes the absolute value of the determinant. Complex distributions are typically modeled by composing multiple simple transformations $f = f^{(L)} \circ \dots \circ f^{(1)}$, where each layer $f^{(\ell)}$ is designed to be easily invertible and have a tractable Jacobian determinant. The log-density then becomes a sum of log-determinant terms from each layer, plus the log-density of the final transformed variable under the base distribution:
  \begin{equation}
    \log p(\rvz) = \log p_{\text{base}}(\rvu) + \sum_{\ell=1}^L \log \left| \det J_{f^{(\ell)}}(\rvh^{(\ell-1)}) \right|
    \label{eq:log_flow_stacked}
  \end{equation}
  where $\rvh^{(0)} = \rvz$, $\rvh^{(\ell)} = f^{(\ell)}(\rvh^{(\ell-1)})$, and $\rvu = \rvh^{(L)}$.

  The main challenge in designing normalizing flows lies in choosing transformations $f^{(\ell)}$ that are both expressive enough to model complex data distributions and computationally efficient, particularly regarding the calculation of the inverse $f^{-1}$ and the Jacobian determinant $\det J_f$.

  A prominent and widely used class is autoregressive flows~\citep{kingma2016improved, papamakarios2017masked}. These structure the transformation $f$ such that $u_i$ depends on $z_j$ with $j<i$. This ensures the Jacobian $J_f$ is triangular, making its determinant the product of diagonal entries, computable in $\mathcal{O}(d)$ time. A common choice for the element-wise transformation is the affine function: $z_i = \alpha_i u_i + \beta_i$. In the autoregressive setting, the scale $\alpha_i > 0$ and shift $\beta_i$ for dimension $i$ are functions of the preceding dimensions, e.g., $\alpha_i = \alpha_i(\rvz_{<i})$ and $\beta_i = \beta_i(\rvz_{<i})$. If the base variables $u_i$ are drawn from a standard Gaussian $\mathcal{N}(0,1)$, this affine transformation implies that each $z_i$ conditioned on the context $\rvz_{<i}$ follows a Gaussian distribution $\mathcal{N}(z_i; \mu_i = \beta_i(\rvz_{<i}), \sigma_i^2 = \alpha_i(\rvz_{<i})^2)$.
  To ensure all variables influence each other, multiple autoregressive layers are typically stacked, often with variable permutations between them.
  
  \begin{figure}[t]
    \centering
    \begin{minipage}{0.56\textwidth}
      \includegraphics[width=\linewidth]{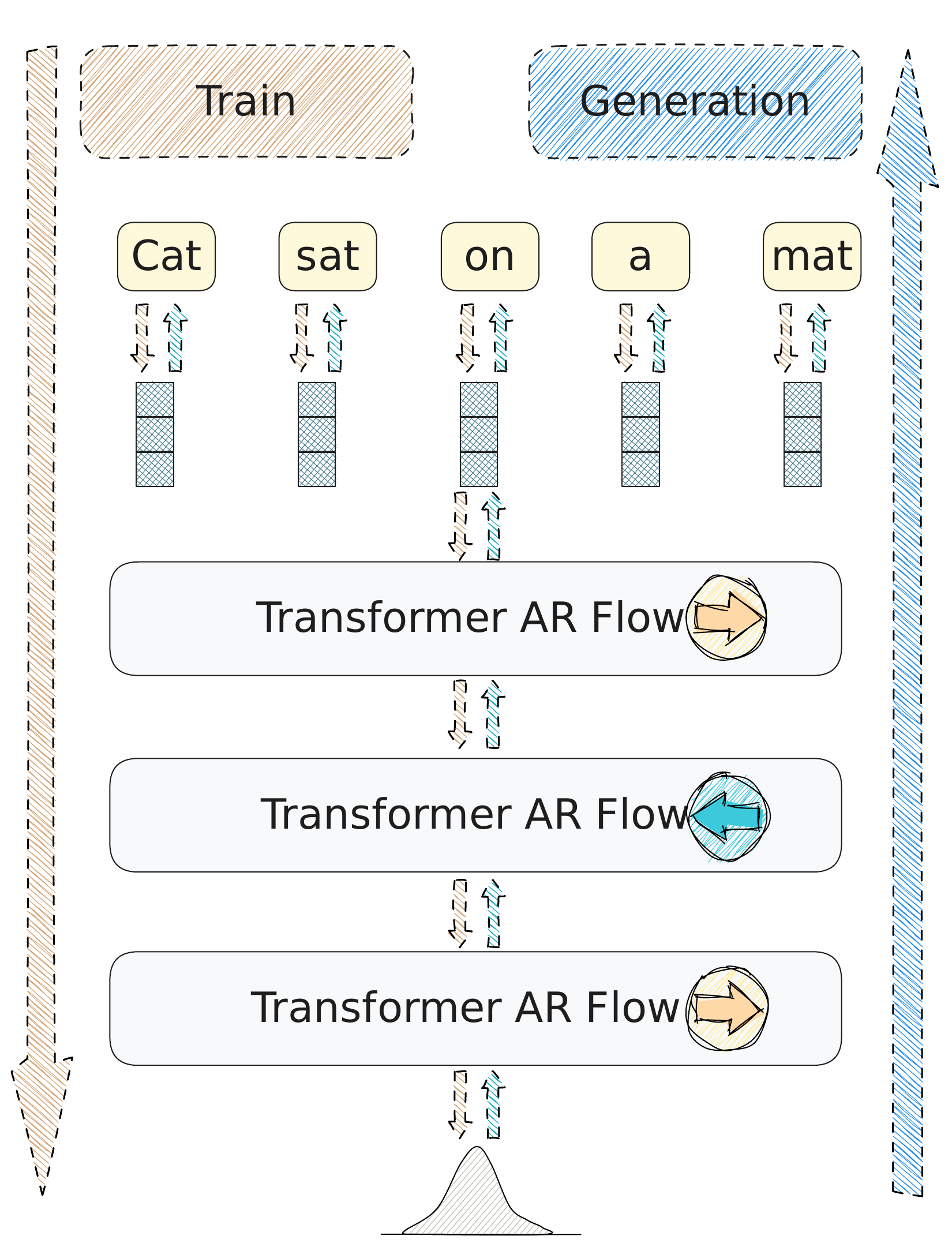}
    \end{minipage}
    \hfill
    \begin{minipage}{0.43\textwidth}
      \includegraphics[width=\linewidth]{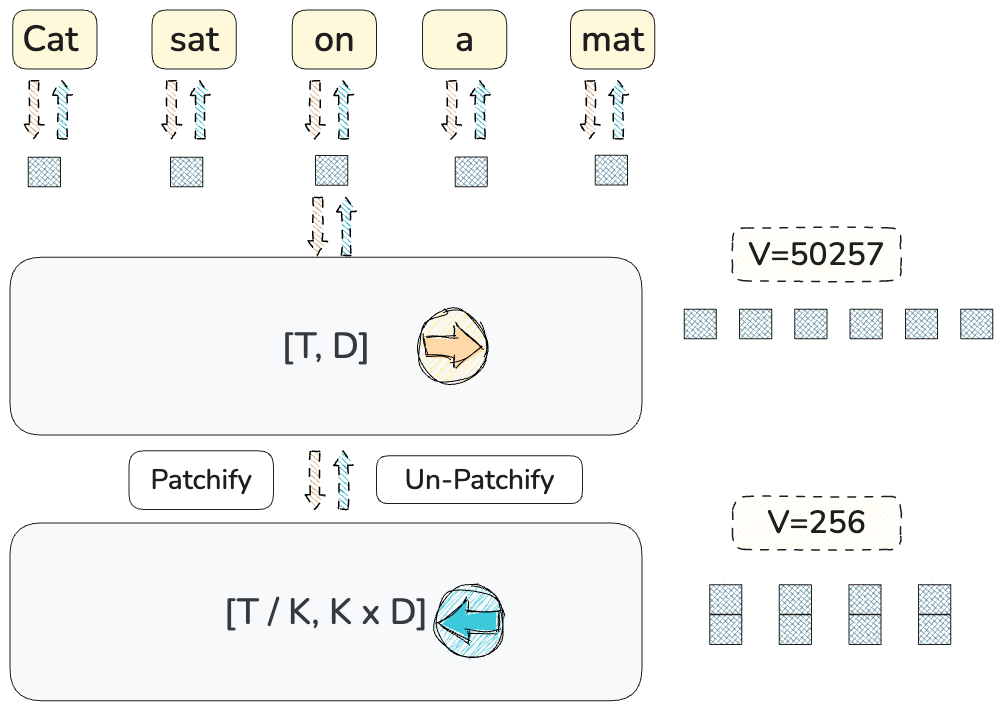}
      \subcaption{Seq. length $T$, Patch size $K$, vocab. size $V$.}
      \vspace{0.5em}
      \includegraphics[width=\linewidth]{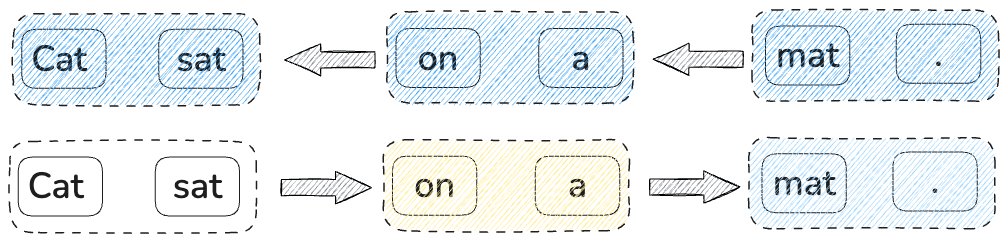}
      \subcaption{Forward pass, patch size $2$.}
      \vspace{0.5em}
      \includegraphics[width=0.5\linewidth]{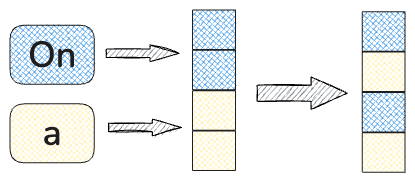}
      \subcaption{Latent embedding channel mixing.}
    \end{minipage}
    \caption{Our approach transforms discrete token sequences into a continuous latent space modeled by stacked Transformer-based autoregressive normalizing flows, creating a fully invertible pipeline for training and generation.
    \textbf{Left:} During \textcolor{orange}{training (Red top-down arrow)}, tokens are encoded into Gaussian latents by a VAE encoder, transformed through flow layers into standard Gaussian variables, and reconstructed by a tied decoder. At \textcolor{blue}{inference (Blue bottom-up arrow)}, samples from a standard Gaussian are inverted through the flows and decoded into text.
    \textbf{Right:} Key flexibility features:
    (a) Tokens can be grouped into patches and embedded into latent space using mixture-of-Gaussians transformations with dynamic vocabulary sizes;
    (b) Block-wise autoregressive flows alternate directions, enabling context exchange between initially independent tokens;
    (c) Channel mixing permutes latent dimensions between flows to facilitate cross-token information sharing.}
    \label{fig:model_overview}
  \end{figure}

  \section{Autoregressive Language Modeling in Continuous Latent Space}
  \label{sec:continuous_latent_modeling}

  we use italics for scalars ($x_t$), bold for vectors ($\rvz_t$), and bold with indices for sequences ($\rvx_{1:T}$, $\rvz_{1:T}$).

  We propose utilizing the Variational Autoencoder (VAE) framework~\cite{kingma2013auto}. The core idea is to map sequences of discrete tokens $\rvx_{1:T} = (x_1, \ldots, x_T)$ into sequences of continuous latent variables $\rvz_{1:T} = (\rvz_1, \ldots, \rvz_T)$, where each $\rvz_t \in \R^d$. We then model the joint distribution $p(\rvz_{1:T})$, the prior, to capture sequential dependencies. This approach aims to preserve the sequential factorization structure familiar from autoregressive models while gaining the benefits of end-to-end differentiability inherent in continuous spaces.

  Our model consists of three components (see Figure~\ref{fig:model_overview}): an encoder $q(\rvz_{1:T}|\rvx_{1:T})$ and decoder $p(\rvx_{1:T}|\rvz_{1:T})$ that map between discrete token sequences and continuous latent sequences, and an autoregressive prior $p(\rvz_{1:T})$ over the latent space.
  The model parameters (covering encoder, decoder, and prior) are learned by maximizing the Evidence Lower Bound (ELBO) on the data log-likelihood $\log p(\rvx_{1:T})$:
  \begin{equation}
    \mathcal{L}(\rvx_{1:T}) =
    \mathbb{E}_{\rvz_{1:T} \sim q(\cdot|\rvx_{1:T})}
    \underbrace{\log p(\rvx_{1:T}|\rvz_{1:T})}_{\text{decoder}}
    + \underbrace{\log p(\rvz_{1:T})}_{\text{prior}}
    - \underbrace{\log q(\rvz_{1:T}|\rvx_{1:T})}_{\text{encoder}}
    \label{eq:elbo_general}
  \end{equation}

  \subsection{The Bridge: Encoder and Decoder}
  \label{sec:encoder_decoder_bridge}

  The encoder and decoder facilitate the transition between the discrete data space of tokens and the continuous latent space of vectors $\rvz_t$.

  \paragraph{Encoder $q(\rvz_{1:T}|\rvx_{1:T})$.}
  We use a factorized encoder,
  \(
    q(\rvz_{1:T}|\rvx_{1:T}) = \prod_{t=1}^T q(\rvz_t | x_t)
    \label{eq:encoder_factorized_bridge}
  \),
  where each token $k \in \{1, \ldots, V\}$ is assigned a learnable mean $\vmu_k \in \R^d$ and variance $\sigma_k^2$ for an isotropic Gaussian:
  \(
    q(\rvz_t | x_t = k) = \mathcal{N}(\rvz_t; \vmu_k, \sigma_k^2 \mI) \equiv \mathcal{N}_k(\rvz_t)
    \label{eq:encoder_component_bridge}
  \).
  Thus, the codebook $\{\vmu_k, \sigma_k^2\}_{k=1}^V$ maps each token to a Gaussian distribution in latent space.

  \paragraph{Decoder $p(\rvx_{1:T}|\rvz_{1:T})$.}
  The decoder maps each latent vector $\rvz_t$ back to a distribution over tokens, with again a factorized structure $p(\rvx_{1:T}|\rvz_{1:T}) = \prod_{t=1}^T p(x_t | \rvz_t)$.
  We reuse the \textit{encoder}'s Gaussian parameters $\{\vmu_k, \sigma_k^2\}_{k=1}^V$ for the decoder with a Bayesian posterior parameterization assuming an uniform prior:
  \(
    p(x_t=k | \rvz_t) = \frac{p(x_t=k) q(\rvz_t | x_t=k)}{\sum_{j=1}^V p(x_t=j) q(\rvz_t | x_t=j)} = \frac{\mathcal{N}_k(\rvz_t)}{\sum_{j=1}^V \mathcal{N}_j(\rvz_t)} \label{eq:tied_decoder_bridge}
  \).
  In other words, the probability of decoding token $k$ from $\rvz_t$ is determined by how likely $\rvz_t$ is under the $k$-th encoder Gaussian, compared to all other tokens.

  \begin{table}[t]
    \centering
    \caption{Comparison of 1D and $d$-D Mixture of Gaussians (MoG) normalizing flow layers, transforming input $z/\rvz$ to standard Gaussian $u/\rvu$. We use $\gC$ to denote the conditional context.}
    \label{tab:mixture_flow_summary_combined}
    \scalebox{0.88}{
      \begin{tabular}{@{}lll@{}}
        \toprule
        & $1$-D Mixture CDF Layer                                     & $d$-D Mixture Rosenblatt Layer\\
        \midrule
        Input                             & Scalar $z \in \R$                                    & Vector $\rvz \in \R^d$ \\
        Output                            & Scalar $u \in \R$                                    & Vector $\rvu \in \R^d$ \\
        Base PDF ($p_{\text{base}}$)      & $\mathcal{N}(u;0,1)$                                 & $\mathcal{N}(\rvu;\mathbf{0},\mI_d)$ \\
        Input PDF ($p(\cdot|\mathcal{C})$) & $\ponemix(z|\mathcal{C}) = \sum_{j} \pi_j \mathcal{N}(z; m_j, s_j^2)$          & $\pmulmix(\rvz|\mathcal{C}) = \sum_{j} \pi_j \mathcal{N}(\rvz; \mathbf{m}_j, s_j^2\mI_d)$ \\
        Forward Map ($f$)                 & $u = \Phi^{-1}(F_{\text{mix}}(z; \mathcal{C}))$       & $u_i = \Phi^{-1}(F_i(z_i | \mathcal{C}, \rvz_{<i}))$, for $i=1..d$ \\
        & $F_{\text{mix}}$: CDF of $p(z|\mathcal{C})$           & (Rosenblatt, see Alg.~\ref{alg:d_dim_mixture_forward_transform_main}) \\
        Inverse Map ($f^{-1}$)            & $z = F_{\text{mix}}^{-1}(\Phi(u); \mathcal{C})$       & Solve $F_i(z_i | \dots) = \Phi(u_i)$ for $z_i$, for $i=1..d$ \\
        $\log |\det J_f|$  & $\log \ponemix(z|\mathcal{C}) - \log p_{\text{base}}(u)$     & $\log \pmulmix(\rvz|\mathcal{C}) - \log p_{\text{base}}(\rvu)$ \\
        \bottomrule
      \end{tabular}
    }
  \end{table}

  \subsection{Autoregressive Prior Modeling for $p(\rvz_{1:T})$}
  \label{sec:prior_modeling_and_flow_realization}

  To model the prior $p(\rvz_{1:T})$ over latent sequences, we use the standard autoregressive factorization:
  \(
    p(\rvz_{1:T}) = \prod_{t=1}^T p(\rvz_t | \rvz_{<t})
  \),
  where $\rvz_{<t} = (\rvz_1, \ldots, \rvz_{t-1})$ is the history. The main task is to define the conditional distribution $p(\rvz_t | \rvz_{<t})$.
  We present two forms of parameterizing conditional $p(\rvz_t | \rvz_{<t})$, one with dimension-wise autoregressive factorization~\cref{sec:scalar_wise_prior_and_flow}, the other wise token-wise autoregressive factorization~\cref{sec:vector_wise_prior_and_flow}.
  In each formulation, we use mixture-based probability distribution, and show that we can equivalently convert the mixture probability density into an equivallent invertible normalizing flow layer.
  And finally show how stacking such flow layers yields an expressive and flexible prior model $p(\rvz_{1:T})$.

  \subsubsection{Dimension-wise Autoregressive $1$-D Mixture Modeling}
  \label{sec:scalar_wise_prior_and_flow}

  We start from decomposing the conditional $p(\rvz_t | \rvz_{<t})$ using the chain rule across the dimensions of $\rvz_t$:
  \begin{equation}
    p(\rvz_t | \rvz_{<t}) = \prod_{i=1}^{d} p(z_{t,i} | \rvz_{<t}, \rvz_{t,<i})
    \label{eq:full_ar_factorization_1d_structure}
  \end{equation}
  Here, $z_{t,i}$ is the $i$-th scalar component of $\rvz_t$.
  $\rvz_{t,<i} = (z_{t,1}, \ldots, z_{t,i-1})$ are its preceding components.
  We parameterize each scalar conditional density $p(z_{t,i} | \rvz_{<t}, \rvz_{t,<i})$ as a mixture of $V$ $1$-dimensional Gaussian distributions.
  The parameter bundle $\left[\bm{\vpi}_{t,i}, \mathbf{m}_{t,i}, \bm{\sigma}_{t,i}^2 \right]$ is predicted by a causal Transformer-based model from the context $(\rvz_{<t}, \rvz_{t,<i})$.
  The conditional density is then, dropping the subscripts \(t,i\) here for notational brievity:
  \begin{equation}
    p(z_{t,i} | \rvz_{<t}, \rvz_{t,<i}) = \sum_{k=1}^{V} \vpi[k] \mathcal{N}(z_{t,i}; \mathbf{m}[k], \bm{\sigma}^2[k]), \quad \left[\vpi, \mathbf{m}, \bm{\sigma}^2 \right] = \text{Transformer}(\rvz_{<t}, \rvz_{t,<i})
    \label{eq:scalar_mixture_density_prior}
  \end{equation}
  We denote this $1$-D mixture PDF as $\ponemix(z_{t,i})$. We discuss the connection to the mixture-of-logistics coupling~\citep{ho2019flow++} in~\cref{ap:mog_and_mol_coupling}.

  {
    \paragraph{Transforming $1$-D Mixture Density to an Invertible Flow.}
    The conditional 1D mixture density $\ponemix(\cdot)$ from~\cref{eq:scalar_mixture_density_prior} naturally defines an invertible normalizing flow layer. This layer transforms an input $z_{t,i}$ into $u_{t,i}$, such that $u_{t,i}$ follows a standard normal distribution $\mathcal{N}(0,1)$. The transformation is given by
    \(
      u_{t,i} = \Phi^{-1}(F_{\text{mix-}1}(z_{t,i}; \rvz_{<t}, \rvz_{t,<i}))
    \),
    where $F_{\text{mix-}1}$ is the CDF of $\ponemix$ and $\Phi^{-1}$ is the inverse standard normal CDF. This mapping is invertible and its key properties are summarized in Table~\ref{tab:mixture_flow_summary_combined} (see Appendix~\ref{ap:1d_mog_cdf_flow_layer} for details).

    The log absolute Jacobian determinant for this transformation, central to normalizing flows, takes the form
    \(
      \log \left| \frac{\partial u_{t,i}}{\partial z_{t,i}} \right| = \log \ponemix(z_{t,i} | \rvz_{<t}, \rvz_{t,<i}) - \log \mathcal{N}(u_{t,i}; 0, 1)
    \).
    This expresses the log-density of $z_{t,i}$ under the mixture model as the sum of the log-density of $u_{t,i}$ under the standard normal and the log-Jacobian:
    \begin{equation}
      \log \ponemix(z_{t,i} | \rvz_{<t}, \rvz_{t,<i}) = \log \mathcal{N}(u_{t,i}; 0, 1) + \log \left| \frac{\partial u_{t,i}}{\partial z_{t,i}} \right|.
    \end{equation}
    In this way, learning the mixture parameters $\left[\vpi, \mathbf{m}, \bm{\sigma}^2 \right]$ is equivalent to learning a 1D Mixture-CDF flow layer.
    We summarize this in the following proposition, where we omit the condition $\rvz_{<t}, \rvz_{t,<i}$ for brevity.

\begin{tcolorbox}[
  colback=gray!5,
  colframe=gray!50,
  coltitle=black,
  title={\small Proposition 1. (Single-dim) Equivalence of mixture distribution and Mixture-CDF Flow.}
]
Let $p_{\mathrm{mix-1}}(z) = \sum_{k=1}^{V} \pi_k \mathcal{N}(z; m_k, \sigma_k^2)$ be a 1D MoG probability density function, and let $F_{\mathrm{mix-1}}(z)$ be its corresponding cumulative distribution function (CDF). The transformation $f: \mathbb{R} \to \mathbb{R}$ defined by $u = f(z) = \Phi^{-1}(F_{\mathrm{mix-1}}(z))$, where $\Phi$ is the standard normal CDF, is an exact normalizing flow between the density $p_{\mathrm{mix-1}}(z)$ and the standard normal density $\mathcal{N}(u; 0, 1)$.
\begin{proof}
  The result follows directly from the probability integral transform theorem, see Appendix~\ref{ap:1d_mog_cdf_flow_layer} for details.
\end{proof}
\end{tcolorbox}

  }

  \begin{table}[t]
    \vspace{-4em}
    \centering
    \footnotesize
    \begin{tabular}{lllr|lr}
      \toprule
      \multicolumn{4}{c|}{\textbf{\texteight (Test BPC $\downarrow$)}} & \multicolumn{2}{c}{\textbf{OpenWebText (Validation Perplexity $\downarrow$)}} \\
      Space & Type & Method & BPC & Method & Perplexity \\
      \midrule
      C & Diffusion & Plaid \citep{plaid} & $\le$ 1.48 & Gaussian Diffusion & $\leq 27.28$ \\
      C & Diffusion & BFN \citep{graves2023bfn} & $\le$ 1.41 & --- &  \\
      \midrule
      D & AR & MAC \citep{shih2022training} & $\le$ 1.40 & --- &  \\
      D & AR & Transformer AR~\citep{d3pm} & \textbf{1.23} & Transformer AR & \textbf{17.54} \\
      \midrule
      D & Diffusion & SEDD Absorb \citep{sedd} & $\le$ 1.39 & SEDD Absorb & $\leq 24.10$ \\
      D & Diffusion & MD4~\citep{shi2024simplified} & $\le$ 1.37 & MD4~\citep{shi2024simplified} & $\leq 22.13$ \\
      D & Diffusion & TCSM~\citep{zhang2025target} & $\le$ 1.25 & GenMD4~\citep{shi2024simplified} & $\leq 21.80$ \\
      D & Diffusion & EDLM~\citep{xu2024energy} & $\le$ 1.24 & MDLM~\citep{mdlm} & $\leq 23.21$ \\
      \midrule
      C & \nf & Latent NF~\citep{ziegler2019latent} & $\le$ 1.61 & --- & \\
      C & \nf & CNF \citep{lippe2020categorical} & $\le$ 1.45 & --- &  \\
      C & \nf & Argmax Flow \citep{hoogeboom2021argmax} & $\le$ 1.45 & --- & \\
      \midrule
      \rowcolor{gray!20} C & \nf & \ours Affine & $\le$  1.54 & \ours Affine & $\leq 148.21$ \\
      \rowcolor{gray!20} C & \nf & \ours Mix-1 & $\le$ 1.37 & \ours Mix-1 & $\leq {27.11}$ \\
      \rowcolor{gray!20} C & \nf & \ours Mix-d & $\le$ 1.30 & \ours Mix-d & $\leq 22.64$ \\
      \bottomrule
    \end{tabular}
    \caption{\small Performance comparison across different model types and datasets. C=Continuous space, D=Discrete space, AR=Autoregressive, NF=Normalizing Flows.}
    \label{tab:combined_results}
    \end{table}
  \subsubsection{Token-wise Autoregressive $d$-D Mixture Modeling}
  \label{sec:vector_wise_prior_and_flow}

  Alternatively, $p(\rvz_t | \rvz_{<t})$ can be modeled directly as a $d$-dimensional distribution.
  We model $p(\rvz_t | \rvz_{<t})$ directly as a $d$-dimensional distribution using a mixture of Gaussians with a shared global codebook $\Phi_S = \{(\bm{\mu}_k, \sigma_k^{2})\}_{k=1}^{V}$ containing $V$ Gaussian components. The network only predicts the $V$ mixture weights $\bm{\pi}_{t}(\rvz_{<t})$ (a $V$-dim vector) from the history $\rvz_{<t}$:
  \begin{equation}
    p(\rvz_t | \rvz_{<t}) = \sum_{k=1}^{V} \bm{\pi}_{t}(\rvz_{<t})[k] \mathcal{N}(\rvz_t; \bm{\mu}_{k}, \sigma_{k}^{2} \mI), \quad \bm{\pi}_{t}(\rvz_{<t}) = \text{Transformer}(\rvz_{<t})
    \label{eq:ar_mixture_shared_prior_conditional_structure_main}
  \end{equation}
  This approach requires predicting only $V$ parameters per time step, making it computationally efficient. We denote this $d$-dimensional mixture PDF as $\pmulmix(\rvz_t | \rvz_{<t})$.

  {

    \paragraph{Connection to Discrete AR Language Models.} This formulation is closely linked to discrete autoregressive language models; see~\cref{app:connection_discrete_lm} for details.

  }
  {
    \paragraph{Transforming $d$-D Mixture Density to an Invertible Flow.}
    The $d$-dimensional conditional mixture density $\pmulmix$ can also be realized as an invertible normalizing flow layer. Unlike the 1D case, where the CDF and its inverse are available in closed form, a direct CDF-based transformation is not tractable for general $d$-dimensional mixtures. Instead, we use a sequential transformation $\rvu = \mathbf{g}_d(\rvz; \rvz_{<t})$ based on Rosenblatt's theorem~\citep{rosenblatt1952remarks}, which maps $\rvz \in \R^d$ to $\rvu \in \R^d$ such that $\rvu$ follows a standard Gaussian $\mathcal{N}(\mathbf{0}, \mI_d)$. This process, described in Algorithm~\ref{alg:d_dim_mixture_forward_transform_main}, proceeds dimension by dimension: at each step $i$, a 1D Mixture-CDF transform is applied to $z_i$, conditioned on the previous components $\rvz_{<i}$. The full derivation, including the inverse transformation, is given in Appendix~\ref{ap:multi_dimensional_mixture}.

    The log absolute Jacobian determinant of this transformation has a form directly analogous to the 1D case (see Appendix~\ref{ap:multi_dimensional_mixture}):
    \(
      \log \left| \det J_{\mathbf{g}_d}(\rvz) \right| = \log \pmulmix(\rvz_t|\rvz_{<t}) - \log \mathcal{N}(\rvu; \mathbf{0}, \mI_d)
      \label{eq:logdet_d_dim_mix_main_text}
    \).
    This relationship, also summarized in Table~\ref{tab:mixture_flow_summary_combined}, shows that the log-probability of $\rvz$ under the $d$-dimensional mixture is given by the change of variables formula:
    \begin{equation}
      \pmulmix(\rvz_t|\rvz_{<t}) = \log \mathcal{N}(\rvu; \mathbf{0}, \mI_d) + \log \left| \det J_{\mathbf{g}_d}(\rvz) \right|
      \label{eq:logdet_d_dim_mix_main_text}
    \end{equation}    
    We summarize this in the following proposition, where we omit the condition $\rvz_{<t}$ for brevity.

    \begin{tcolorbox}[
      colback=gray!5,
      colframe=gray!50,
      coltitle=black,
      title={\small Proposition 2. (Multi-dim) Equivalence of mixture distribution and Mixture-Rosenblatt Flow.}
    ]
    Let $\pmulmix(\rvz) = \sum_{k=1}^{V} \pi_k \mathcal{N}(\rvz; \mathbf{m}_k, \sigma_k^2\mathbf{I}_d)$ be a $d$-dimensional MoG PDF. Let the transformation $\mathbf{g}: \mathbb{R}^d \to \mathbb{R}^d$ be defined sequentially for $i=1, \dots, d$ by
  \[
  u_i = g_i(\rvz) = \Phi^{-1}\bigl(F_i(z_i | \rvz_{<i})\bigr),
  \]
  where $F_i(z_i | \rvz_{<i})$ is the CDF of the true conditional probability distribution $p(z_i | \rvz_{<i})$ derived from $\pmulmix(\rvz)$~(see \cref{lemma:mog_conditional}). This transformation is an exact normalizing flow between $\pmulmix(\rvz)$ and the standard $d$-dimensional normal distribution $\mathcal{N}(\mathbf{u}; \mathbf{0}, \mathbf{I}_d)$.
    \begin{proof}
      See Appendix~\ref{ap:multi_dimensional_mixture} for details.
    \end{proof}
    \end{tcolorbox}
  
  We also provide a proof of the invertibility and differentiability of the proposed Rosenblatt Flow in \cref{prop:diffeomorphism} in Appendix~\cref{ap:multi_dimensional_mixture}.
}

\subsection{Stacking Flow Layers for Expressive Priors}
\label{sec:stacking_flow_layers_expressive_priors}

To construct a flexible prior $p(\rvz_{1:T})$ over the latent sequence, we stack multiple invertible flow layers. Each flow $f^{(\ell)}$ transforms its input $\rvh_{1:T}^{(\ell-1)}$ to output $\rvh_{1:T}^{(\ell)}$, starting from $\rvh_{1:T}^{(0)} \triangleq \rvz_{1:T}$ and ending with $\rvu_{1:T} \triangleq \rvh_{1:T}^{(L)}$, which is modeled by a standard Gaussian base distribution $p_{\text{base}}(\rvu_{1:T})$.

The model is trained by maximizing the ELBO. For a single sample, the per-token objective is:
\begin{align}
  \mathcal{L}_t \approx
  -\log \left( \sum_{k=1}^V \mathcal{N}_k(\rvz_t) \right)
  + \log \mathcal{N}(\rvu_t; \mathbf{0}, \mI)
  + \sum_{\ell=1}^L \left[ \log p_{\text{mix},t}^{(\ell)}(\rvh_t^{(\ell-1)} | \gC_t^{(\ell)}) - \log \mathcal{N}(\rvh_t^{(\ell)}; \mathbf{0}, \mI) \right]
  \label{eq:elbo_t_stacked_flows_main_revised}
\end{align}
where the first term applies when using the tied encoder-decoder (Eq.~\eqref{eq:tied_decoder_bridge}). Here, $p_{\text{mix},t}^{(\ell)}$ is the mixture density for the input to layer $\ell$, and $\gC_t^{(\ell)}$ denotes the conditioning information.

For each flow layer $f^{(\ell)}$, the objective is twofold: (a) maximize the log-likelihood of its input under the learned mixture $p_{\text{mix},t}^{(\ell)}$, and (b) encourage the output $\rvh_t^{(\ell)}$ to match the standard normal distribution. This is achieved by maximizing $\log p_{\text{mix},t}^{(\ell)}(\rvh_t^{(\ell-1)} | \gC_t^{(\ell)}) - \log \mathcal{N}(\rvh_t^{(\ell)}; \mathbf{0}, \mI)$ for each layer and token.

This stacked structure enables the model to incrementally transform the latent variables and capture complex dependencies. Different flow types and autoregressive directions can be combined across layers for greater expressiveness.

\begin{figure}[t]
  \vspace{-5em}
  \centering
  \begin{minipage}{0.54\textwidth}
    \centering
    \includegraphics[width=\linewidth]{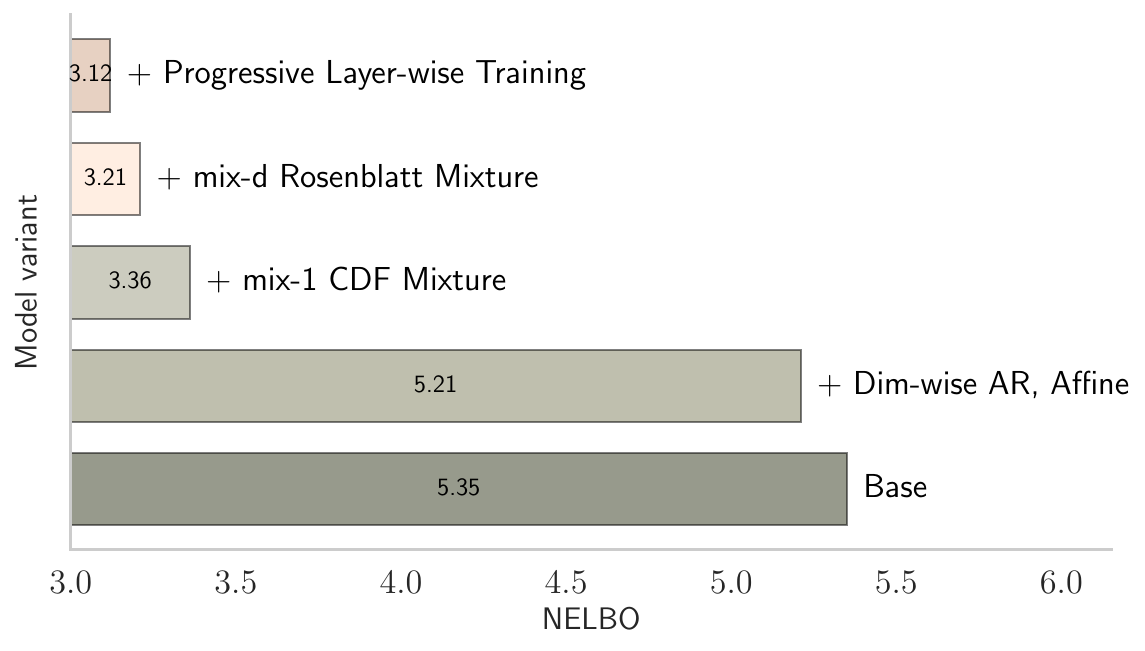}
    \caption{Ablation on model variants.}
    \label{fig:model_variants_comparision}
  \end{minipage}
  \hfill
  \begin{minipage}{0.45\textwidth}
    \centering
    \includegraphics[width=\linewidth]{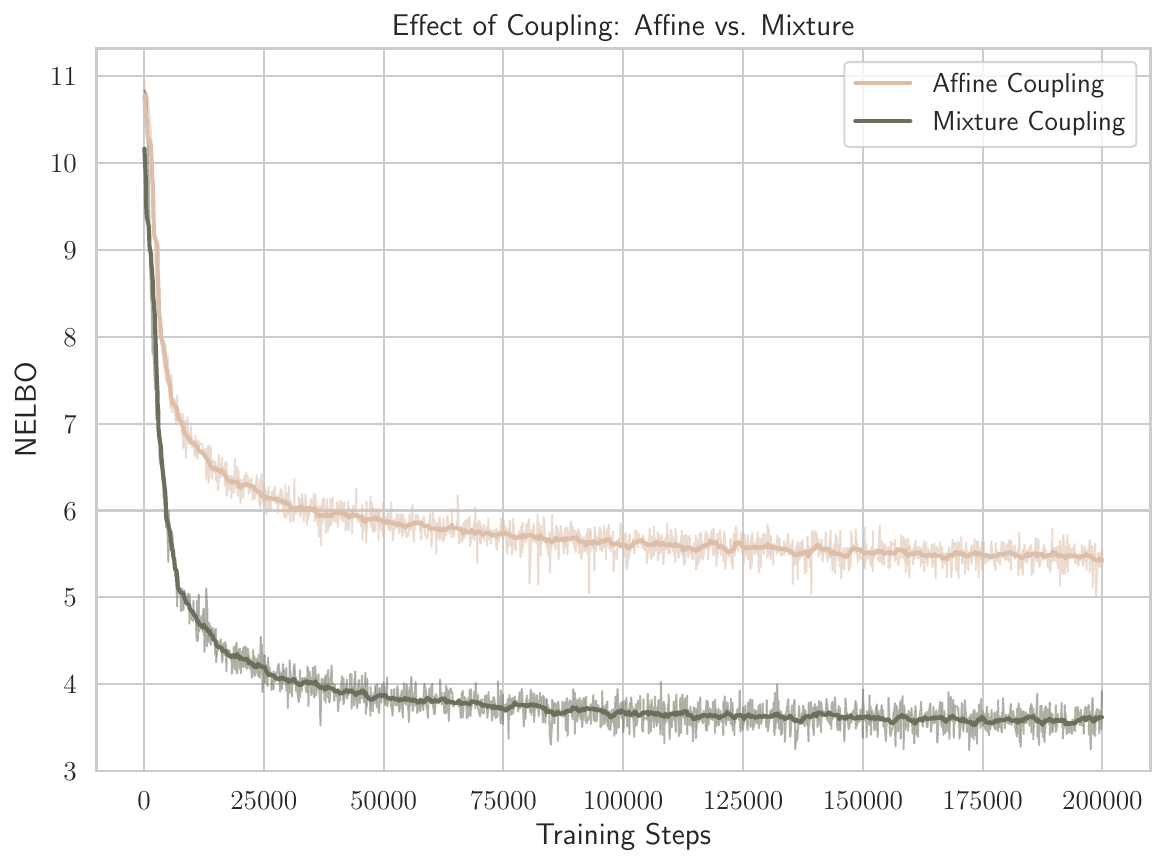}
    \caption{Affine vs. Mixture coupling.}
    \label{fig:loss_comparison_affine_vs_mixture}
  \end{minipage}
\end{figure}

\paragraph{Progressive Layer-wise Training}
Our stacked normalizing flow prior naturally supports progressive layer-wise training, which simplifies optimization for deep flow models. We start by training the first block of flow layers (e.g., $f^{(1)}$ to $f^{(k_1)}$), possibly together with the VAE encoder and decoder, until their ELBO contribution (Eq.~\eqref{eq:elbo_t_stacked_flows_main_revised}) stabilizes on a validation set. Once converged, we freeze their parameters and add the next block of layers (e.g., $f^{(k_1+1)}$ to $f^{(k_2)}$), training only the new layers to further transform the fixed output from the previous block. This process repeats, adding and training new layers while keeping earlier ones fixed, until all $L$ flow layers are trained. Progressive training improves stability, as each new layer incrementally normalizes the representation, and helps the model reach better optima in deep architectures.

{
\section{Experiments}
We conduct experiments to evaluate our proposed continuous latent space framework for autoregressive language modeling. The primary objectives are twofold: first, to assess its likelihood performance on standard language modeling benchmarks, and second, to demonstrate the flexible modeling capabilities enabled by the continuous and flow-based formulation.

{
  \paragraph{Datasets and Evaluation.}
We evaluate our models on standard language modeling benchmarks, specifically \texteight~\citep{text8} and \owt~\citep{Gokaslan2019OpenWeb}. For character-level tasks, we report bits per character (BPC), while for word-level tasks, we use perplexity (PPL) or negative ELBO (NELBO) on the respective test sets, given the VAE-based nature of our model.
}

\paragraph{Model Configurations.}
We conduct our experiments using the GPT2-Small architecture, following the setup in~\citep{d3pm,mdlm,shi2024simplified}. This model has 12 layers, a hidden size of 768, and 12 attention heads. All conditional distributions within the flow layers are parameterized by Transformers, as described in Sections~\ref{sec:scalar_wise_prior_and_flow} and~\ref{sec:vector_wise_prior_and_flow}.
We primarily explore two model configurations:
\begin{itemize}
    \item \textbf{\texttt{Mix-1}}: This variant employs a dimension-wise autoregressive 1-D mixture prior ($p(z_{t,i} \mid \rvz_{<t}, \rvz_{t,<i})$; see Sec.~\ref{sec:scalar_wise_prior_and_flow}). The model consists of three flow layers with alternating directions (left-to-right and right-to-left), starting from the $\rvz \rightarrow \rvu$ direction. The number of Transformer layers in each flow is $[2,2,8]$. Unless otherwise specified in ablation studies, we use $V=64$ mixture components for \owt{} and $V=27$ for \texteight{}. To achieve dimension-wise autoregression, we use the same MLP-based MAF~\citep{papamakarios2017masked} in~\citep{lippe2020categorical}.
    \item \textbf{\texttt{Mix-d}}: This variant uses a token-wise autoregressive $d$-dimensional mixture prior ($p(\rvz_t \mid \rvz_{<t})$; see Sec.~\ref{sec:vector_wise_prior_and_flow}). The model is composed of three flow blocks, trained progressively as outlined in Sec.~\ref{sec:stacking_flow_layers_expressive_priors}. The first flow layer uses \texttt{Mix-d} Mixture-Rosenblatt coupling, with the Gaussian mixture codebook tied to the encoder. The remaining two layers use \texttt{Mix-1} Mixture-CDF coupling. For \owt, we set $V=50257$ in the first flow layer and $V=64$ in the next two; for \texteight, we use $V=27$ in the first layer and $V=2$ in the remaining layers.
\end{itemize}
We use a latent embedding dimension of $d=16$ for all \owt{} experiments and $d=5$ for all \texteight{} experiments.

{
\paragraph{Perplexity evaluation}
We assess the language modeling capabilities of our approach by training models on three standard benchmarks: \texteight, and \owt.
For each dataset, we evaluate the NELBO of our trained models on their respective validation or test splits.
Specifically, for \texteight, we follow the established character-level setup and data splits, typically using fixed-length text chunks for training.
For \owt, we train models using the common GPT-2 tokenization and a context length of 1,024 tokens, reserving a portion of the dataset for validation, upon which NELBO is reported.
We report results in~\cref{tab:combined_results}.
}

{

\begin{figure}[t]
  \vspace{-4em}
  \centering
  \begin{minipage}{0.49\textwidth}
    \centering
    \includegraphics[width=\linewidth]{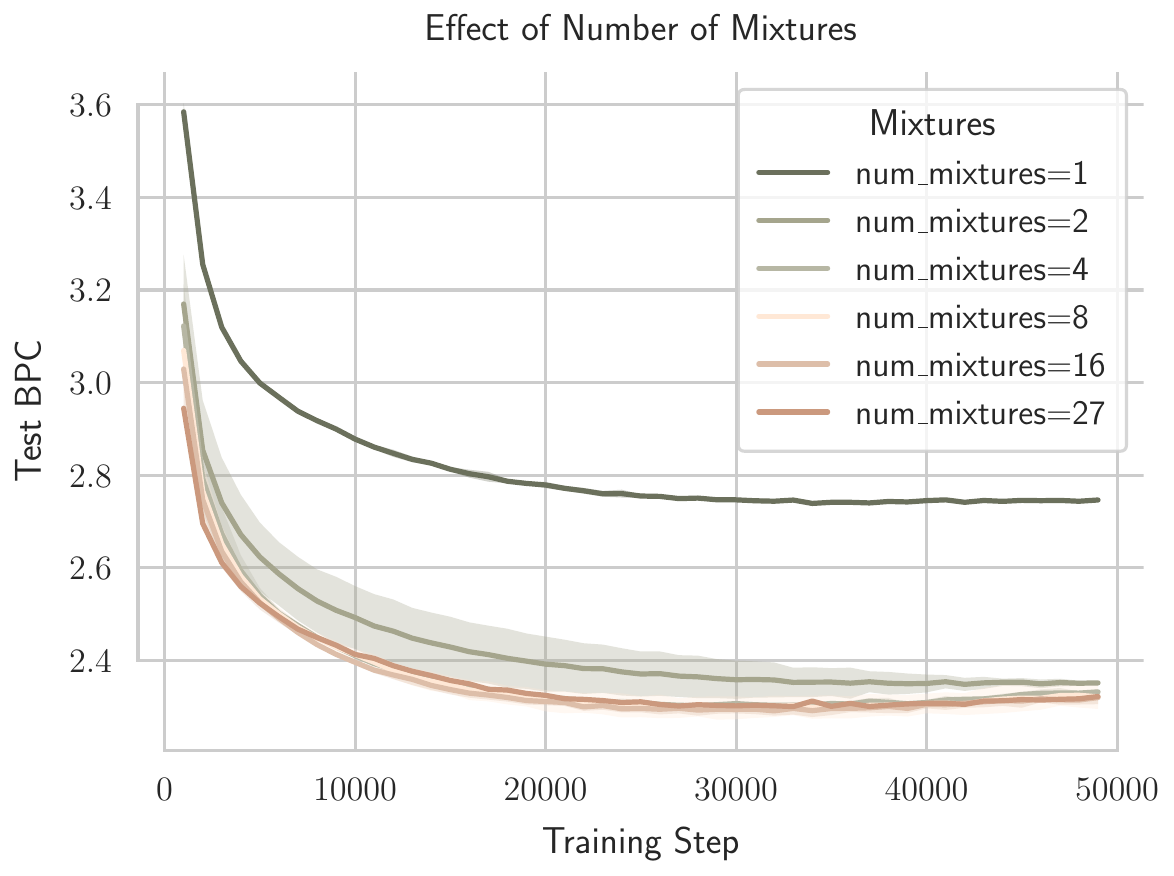}
    \caption{Effect of mixture components numbers.}
    \label{fig:text8_ablation_num_mixtures}
  \end{minipage}
  \hfill
  \begin{minipage}{0.49\textwidth}
    \centering
    \includegraphics[width=\linewidth]{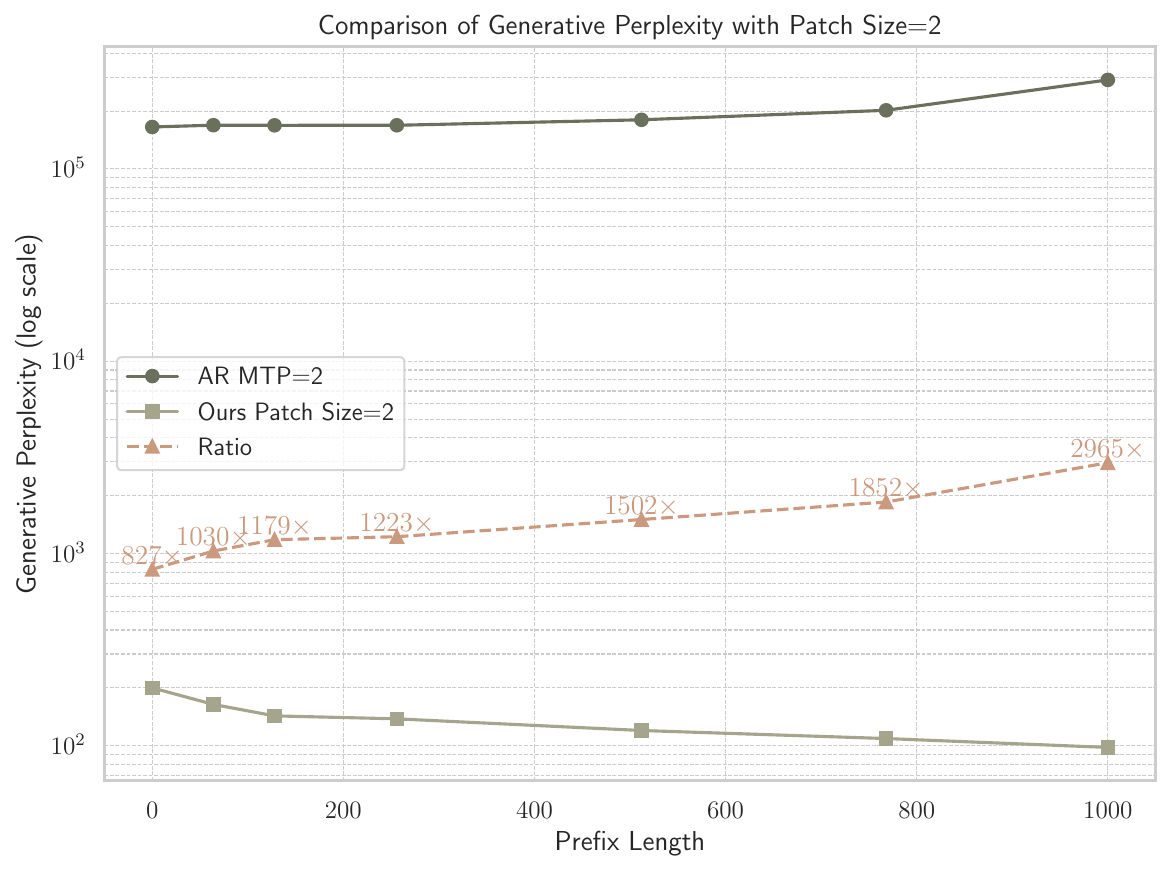}
    \caption{Generative PPL evaluation with patch size 2.}
    \label{fig:ppl_patch_size_2}
  \end{minipage}
\end{figure}

\begin{table}[t]
\caption{Unconditional samples from \ours.}
\label{tab:unconditional_samples}
\centering
\begin{tabular}{p{6.53cm}p{6.53cm}}
  \toprule
  \scriptsize{\texttt{ seeming that he is the only person here. He is tall. He is balding, with a face that is neither well-shaped, nor is it really human."\newlinesymbol{}"The man is a man, isn't he?" The man in the middle was curious. "How does he get there?"\newlinesymbol{}"He does. He's not an ordinary person. He's not a monster. He's a human. He's a human with extraordinary skills. He's an amazing man."\newlinesymbol{}The other man had been watching the man, who he called human, in silence for almost an hour. He didn't look too happy, but he was still a human, so it didn't matter. He was looking at that man, not at his body, but at the man himself, which was why he was so excited. He had watched him for so long, so long}} &
  \scriptsize{\texttt{Soul! I'm gonna be in love with my dog!" She said, giving him a big hug and then making sure she'd come home by now! (Hoping that she'd give him a hug too...) The two boys came to their parents' bedroom and went to sleep. After that, they continued to play a game of chess with their parents. (It was a bit of a joke in our world, so I'll just say...) The two boys had a long chat. A lot of them were talking about the idea of what a dog should look like... I guess I will make him a cute dog! Anyway, they started talking about things like... "Do you want to make me your dog?" Then they began to tease me, "Tell me, how long are you gonna stay?"}}\\
  \bottomrule
\end{tabular}
\end{table}

{

\begin{figure}[ht]
  \centering
  \includegraphics[width=\textwidth]{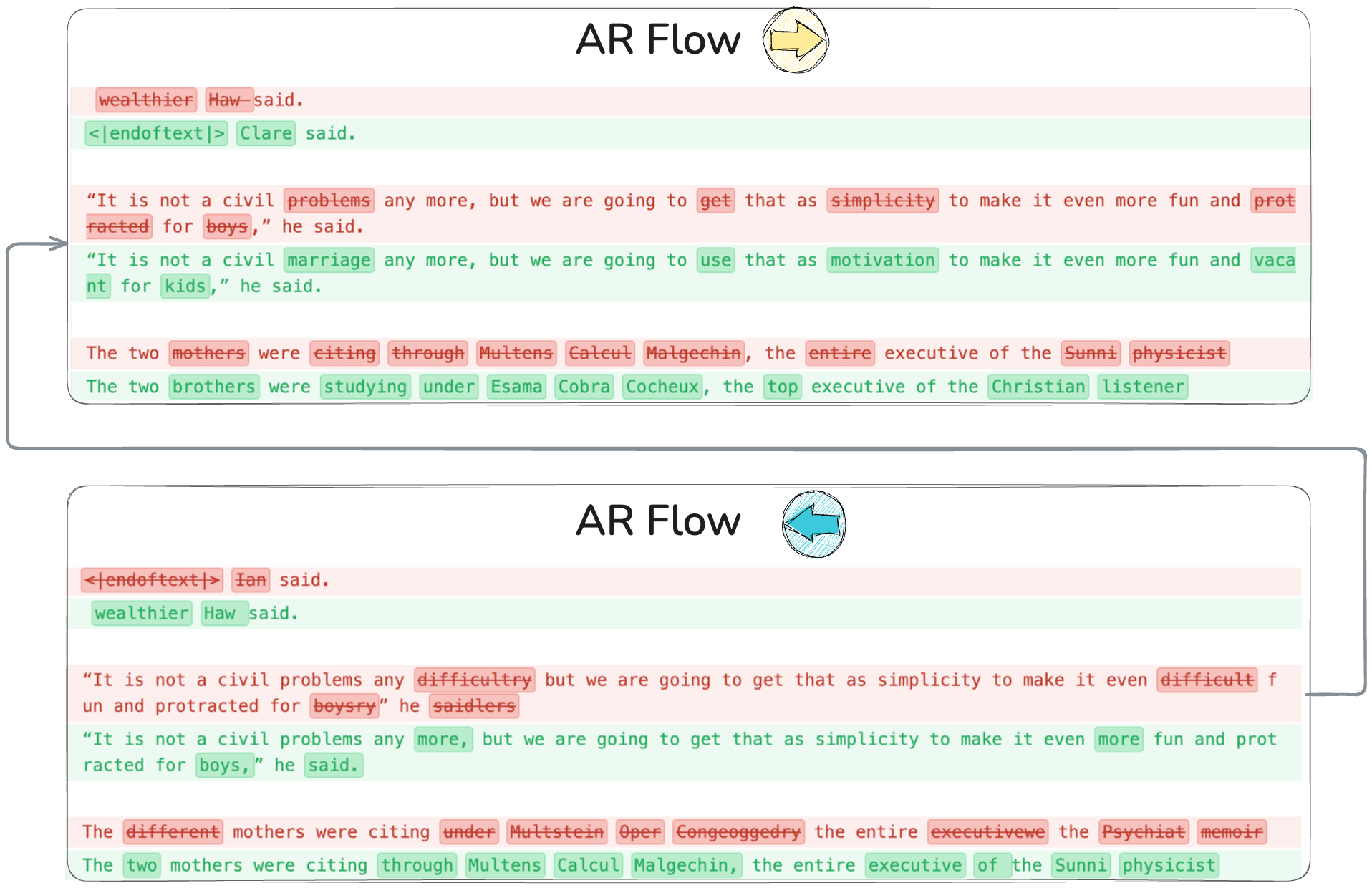}
  \caption{Demonstration of the stacked autoregressive flows for text editing in a continuous latent space. Each panel shows the transformation of the decoded text from input (green) to output (red), while also highlighing the difference (edits) made by the flow layer.}
  \label{fig:generation_process}
\end{figure}

\paragraph{Importance of Mixture Coupling}
We provide empirical evidence for the effectiveness of the mixture coupling layers introduced earlier. Specifically, we compare two model variants on the \owt dataset: (1) a baseline using only affine coupling layers (6 flow blocks, each with a 2-layer causal transformer, totaling 12 transformer layers), and (2) an identical architecture but with Mix-1D (mixture) coupling layers. As shown in Figure~\cref{fig:loss_comparison_affine_vs_mixture}, incorporating mixture couplings leads to a substantial improvement in negative ELBO, clearly demonstrating their importance for modeling discrete text data.

{
\paragraph{Flexible Patch Size: Block-wise Multi-token Generation}

Our framework offers greater flexibility in sequence modeling by moving beyond the traditional single-token generation approach. Instead, it processes "patches" of multiple tokens at once, allowing for simultaneous prediction and generation. In contrast, standard discrete autoregressive language models—even those trained with multi-token objectives~\cite{gloeckle2024better,liu2024deepseek}—still generate tokens strictly one at a time to maintain autoregressive consistency, since generating multiple tokens together breaks their core assumptions.
We address this by using a continuous latent space and stacked normalizing flows to capture dependencies within each token patch. First, stacked autoregressive flows with alternating directions (see Figure~\ref{fig:model_overview}, patch size two) enable later layers to propagate and refine joint information, even if earlier blocks treat tokens as conditionally independent (as shown by the shaded area). Second, intra-patch dependencies are further modeled by mixing or permuting latent embedding dimensions between flow layers (channel mixing, Algorithm~\ref{alg:channel_mixing}), allowing information to flow between token positions within a patch. Together, these mechanisms let the model effectively learn intra-patch dependencies.
Block-wise generation also changes the computational landscape: it shifts some of the workload from sequence length to model depth (number of flow layers), reducing the effective sequence length. As shown in Figure~\ref{fig:transformer_flops_comparision}, this can significantly lower the total FLOPs needed to process a sequence (e.g., 1024 tokens) compared to standard transformers, especially with larger patch sizes—all within a generalized autoregressive framework.
We also trained a discrete autoregressive language model with a two-token prediction objective and compared its generative perplexity (measured by GPT-2 Large) to our model, conditioning on varying prefix lengths. The results show a large gap, highlighting how standard AR LMs break down when the next-single-token assumption is violated.
}

{
\paragraph{Flexible vocabulary size.}
A key flexibility of our model is the ability to freely choose the number of mixture components in each stacked autoregressive flow layer, especially when using mixture-based transformations (see Sections~\ref{sec:scalar_wise_prior_and_flow} and~\ref{sec:vector_wise_prior_and_flow}). This number, denoted as $V$ in Eq.~\eqref{eq:scalar_mixture_density_prior} (1D) and Eq.~\eqref{eq:ar_mixture_shared_prior_conditional_structure_main} ($d$-dimensional), serves as an internal "vocabulary" for the flow, and can be set independently for each layer.

This design enables strategies like coarse-to-fine modeling: earlier layers can use fewer components to capture broad structure, while later layers use more to refine details. Importantly, the internal mixture size does not need to match the data's discrete vocabulary.

Empirically, this flexibility is robust. On \texteight, as shown in~\cref{fig:text8_ablation_num_mixtures}, the model performs consistently well across a wide range of mixture counts ($2$ to $27$), even though the dataset's vocabulary is $27$ characters. On \owt, using $64$ or more mixture components in the \texttt{mix-1d} coupling layers matches the performance of using the full GPT-2 tokenizer size ($50257$). In summary, our framework allows the internal "vocabulary" of mixture components to be tuned for the task, decoupling it from the data's vocabulary and providing a practical lever for model design and efficiency.

}

\begin{figure}[t]
  \centering
  \begin{minipage}{0.49\textwidth}
    \centering
    \includegraphics[width=\linewidth]{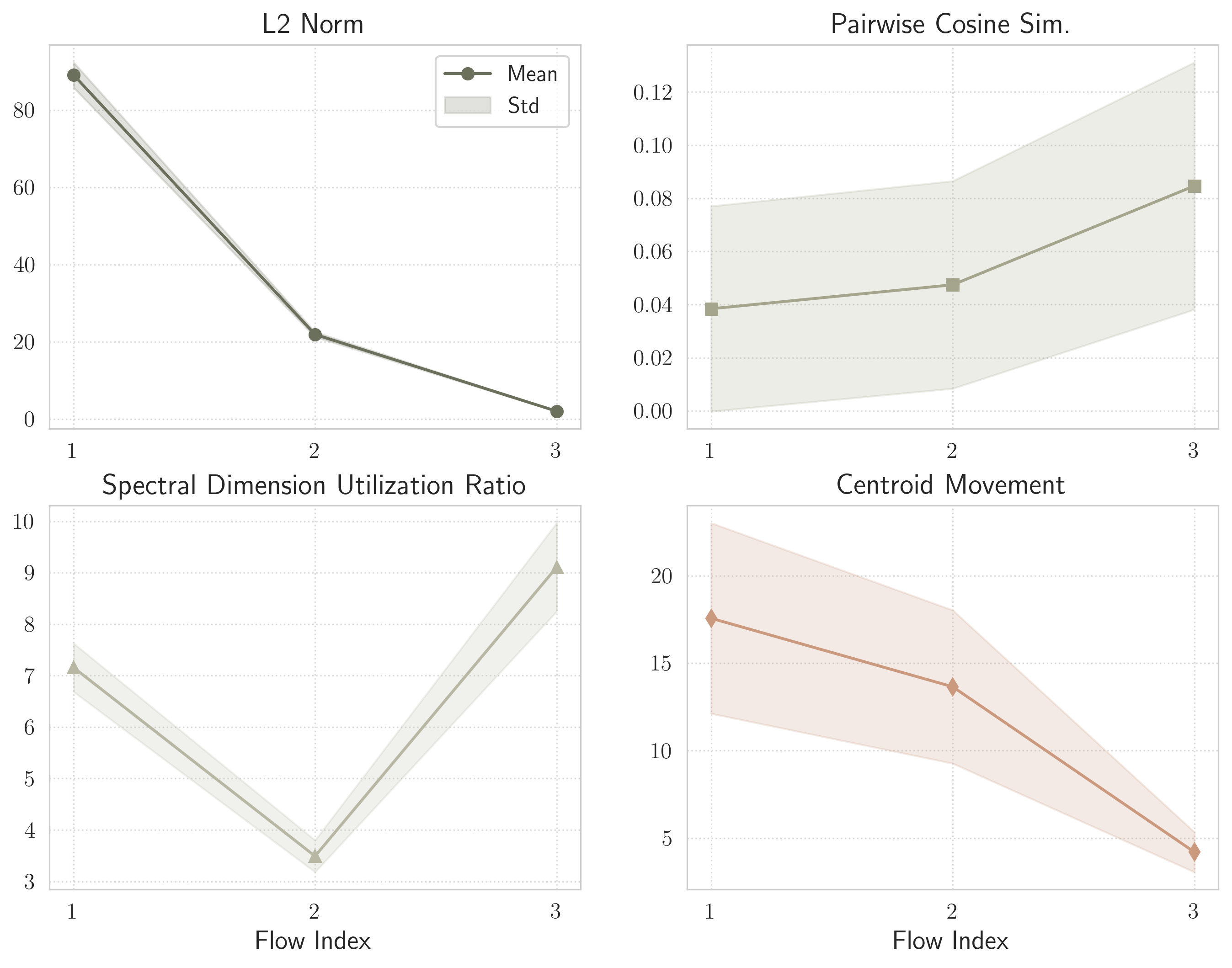}
    \label{fig:latent_evolution_analysis}
  \end{minipage}
  \hfill
  \begin{minipage}{0.50\textwidth}
    \centering
    \includegraphics[width=\linewidth]{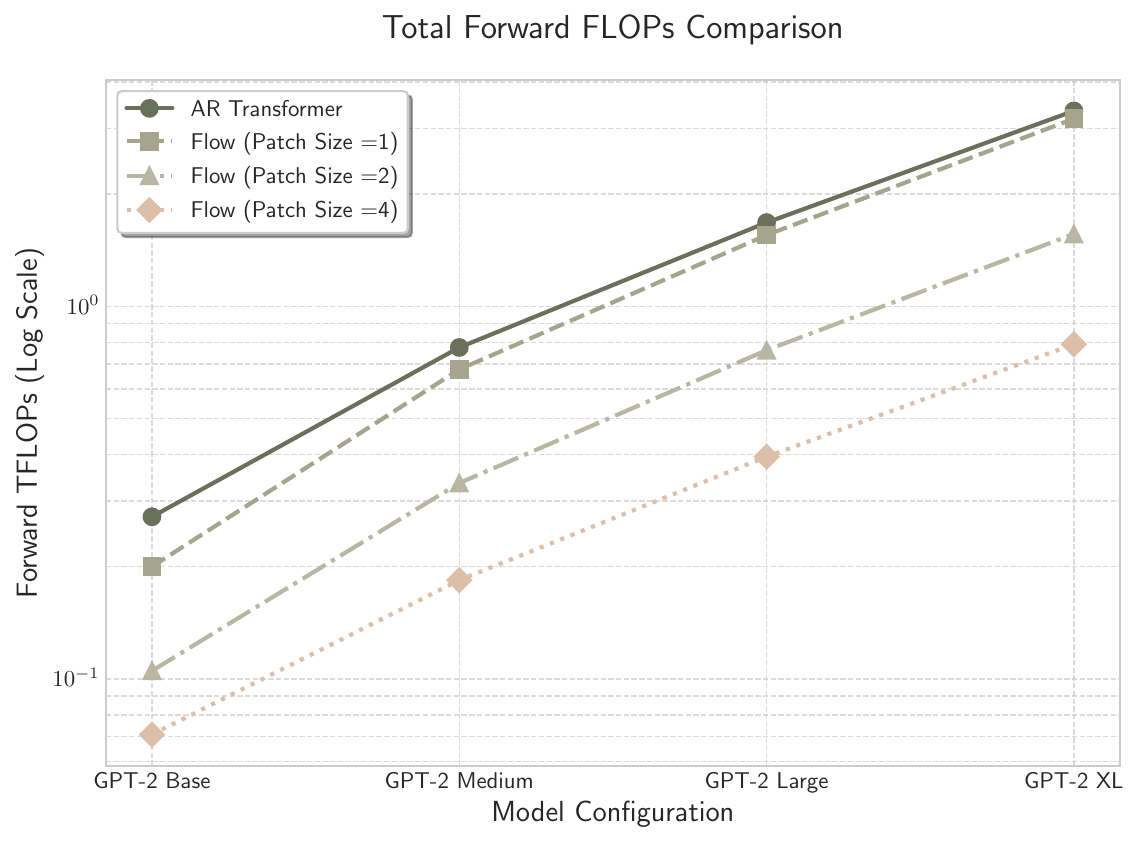}
    \label{fig:transformer_flops_comparison}
  \end{minipage}
  \caption{Left: Latent evolution analysis. Right: Transformer FLOPs comparison.}
  \label{fig:transformer_flops_comparision}
\end{figure}

\paragraph{Flexible text editing in continuous space.}
Our framework models language in a continuous latent space using stacked autoregressive normalizing flows, enabling us to decode intermediate representations at each flow layer. This provides a unique, step-by-step view of how the model incrementally edits and refines latent states into coherent text—a process that is difficult to realize in discrete token spaces.
A key advantage of this approach is its natural support for coarse-to-fine generation: early flow layers make broad, global changes to the latent sequence, while later layers focus on increasingly fine-grained adjustments. We quantify this progression using several metrics, including the mean L2 norm of token embeddings, mean pairwise cosine similarity, Participation Ratio (PR), and centroid movement between layers. Typically, we observe that the mean L2 norm decreases and cosine similarity increases across layers, indicating that representations become more compact and internally coherent. The centroid movement diminishes with depth, reflecting the transition from coarse to fine editing. Alternating flow directions (e.g., left-to-right and right-to-left) further enrich the representations by integrating information from both past and future context.
This hierarchical, coarse-to-fine refinement is a direct benefit of operating in a continuous space, where smooth, learnable edits are possible at every stage. For further details and quantitative results, see Appendix~\ref{ap:latent_metrics}.

\paragraph{Model Ablation}  
We performed ablation studies on \owt to evaluate key architectural choices (see Figure~\ref{fig:model_variants_comparision}). Adding dimension-wise autoregressive affine flows to a base model yields only minor NELBO improvement. Replacing affine couplings with 1-D mixture-CDF transforms greatly improves NELBO, and using full $d$-dimensional Mixture-Rosenblatt flow further boosts performance. Progressive layer-wise training—freezing earlier flow blocks—achieves the best results, demonstrating the benefit of a simple curriculum over flow depth.
We also examined the effect of the number of Gaussian components \(V\) in $1$-D mixture-CDF couplings on \texteight (Figure~\ref{fig:text8_ablation_num_mixtures}). We did not use MAF module to isolate the mixture coupling effect. A single component (affine flow) gives high BPC, while all other $>1$ number of mixtures can work well.

}

}
{
\section{Conclusion}
This work introduced \ours, a novel framework that recasts language modeling in continuous latent spaces using Transformer-based autoregressive normalizing flows. Our approach achieves strong likelihood performance while enabling significant flexibility, including bi-directional context, block-wise generation, and hierarchical refinement. These findings highlight the potential of continuous latent variable models with mixture-based transformations to advance flexible and expressive sequence generation.

}

\clearpage

\section*{Limitations}
\label{sec:limitations}

While our framework, \ours, demonstrates strong likelihood performance and notable modeling flexibility, we acknowledge certain limitations inherent to the current instantiation, particularly concerning sampling efficiency. Autoregressive normalizing flows, by their nature, involve a sequential generation process. Each step in the generation depends on the previously generated ones, which can lead to slower sampling speeds compared to models that allow for more parallelized generation, such as some non-autoregressive or diffusion-based approaches.

In this work, our primary focus was on establishing the viability of transformer-based autoregressive flows for flexible language modeling in continuous latent spaces, emphasizing likelihood estimation and the exploration of novel modeling capabilities like block-wise generation and hierarchical refinement. Consequently, a systematic investigation into optimizing sampling speed or exploring advanced sampling techniques (e.g., parallel decoding strategies, distillation) for this class of flow-based language models was beyond the scope of the current paper.

Improving the sampling efficiency of autoregressive flow models for text generation remains an important and active area of research. We consider this a promising direction for future work, which could involve developing specialized architectural modifications, exploring alternative flow parametrizations, or adapting techniques from other generative modeling paradigms to accelerate the sampling process without compromising the expressive power and flexibility demonstrated in this paper.

\section*{Broader Impacts}
\label{sec:broader_impacts}

The research presented in this paper, \ours, introduces a novel framework for language modeling in continuous latent spaces using transformer-based autoregressive normalizing flows. As with many advancements in machine learning and artificial intelligence, particularly in the domain of generative models, this work has the potential for a range of societal impacts, both positive and negative.

\paragraph{Potential Positive Societal Impacts.}
The enhanced flexibility offered by our approach—such as bi-directional context modeling, block-wise generation, and hierarchical refinement—could lead to significant positive developments.
\begin{itemize}
    \item \textbf{Improved Creative Tools:} Models capable of more nuanced and controllable generation could serve as powerful assistive tools for writers, artists, and designers, helping to brainstorm, draft, and refine creative content. The ability to edit text in a continuous latent space and observe intermediate generation steps might offer new paradigms for human-AI collaboration in creative tasks.
    \item \textbf{Enhanced Controllability in NLP Applications:} The flexible architectural components, like adaptable patch sizes and independent mixture component selection per layer, might enable finer-grained control over the generation process. This could be beneficial for applications requiring specific stylistic attributes, content constraints, or conditional generation tasks (e.g., personalized dialogue systems, summarization with specific focuses).
    \item \textbf{Advancements in Fundamental Understanding:} By exploring language modeling in continuous latent spaces, this research contributes to a broader understanding of how complex sequential data can be represented and manipulated. Such foundational insights can spur further innovation in machine learning for sequences beyond just text, potentially impacting fields like bioinformatics or time-series analysis.
    \item \textbf{Potential for More Efficient Modeling:} While sampling efficiency is noted as a limitation, the architectural flexibility (e.g., block-wise processing potentially reducing effective sequence length for attention mechanisms) hints at avenues for developing more computationally efficient models for certain tasks or sequence lengths, which could make advanced NLP capabilities more accessible.
\end{itemize}

\paragraph{Potential Negative Societal Impacts and Risks.}
Advancements in generative language modeling, including the techniques explored in this paper, also carry inherent risks that warrant careful consideration.
\begin{itemize}
    \item \textbf{Economic Disruption:} As generative AI tools become more capable, there is potential for economic disruption in professions that involve content creation or communication. While these tools can also augment human capabilities, the long-term societal adjustments need to be considered.
    \item \textbf{Security Concerns:} The potential for generating human-like text could be used in social engineering attacks, to create more convincing phishing emails, or to automate an abusive online presence.
\end{itemize}

\paragraph{Considerations for Responsible Development.}
While this work is primarily foundational, focusing on a novel modeling paradigm, it is crucial for the research community to engage in ongoing discussions about the responsible development and deployment of such technologies. Future research building upon \ours should ideally incorporate investigations into:
\begin{itemize}
    \item Techniques for detecting synthetically generated text from such continuous-space flow models.
    \item Methods for identifying and mitigating biases in the learned representations and generated outputs.
    \item Developing frameworks for controllable and ethical AI, ensuring that generative capabilities are aligned with human values and safety.
\end{itemize}
Our aim is to contribute to the open scientific exploration of generative models. We believe that by understanding the capabilities and limitations of new approaches like \ours, the community is better equipped to foresee potential impacts and work towards harnessing these technologies for societal benefit while mitigating potential harms.

\bibliographystyle{plainnat}
\bibliography{references}

\newpage
\appendix

\part{Appendix} %
\parttoc

{
  \section{Details for $1$-D Mixture-CDF Flow Layer}
  \label{ap:1d_mog_cdf_flow_layer}
  {
  
  This section details the derivation for the 1D Mixture of Gaussians CDF (Cumulative Distribution Function) normalizing flow layer, as introduced in Section~\ref{sec:scalar_wise_prior_and_flow} of the main text.
  The conditional mixture density for a single scalar component $z_{t,i}$ given its context $(\rvz_{<t}, \rvz_{t,<i})$ is defined in Eq.~\eqref{eq:scalar_mixture_density_prior} as:
  \begin{equation*}
  p(z_{t,i} | \rvz_{<t}, \rvz_{t,<i}) = \sum_{k=1}^{V} \vpi[k] \mathcal{N}(z_{t,i}; \mathbf{m}[k], \bm{\sigma}^2[k])
  \end{equation*}
  where the parameter bundle $\left[\vpi, \mathbf{m}, \bm{\sigma}^2 \right]$ (containing $V$ mixture weights, means, and variances respectively) is predicted by a Transformer model from the context $(\rvz_{<t}, \rvz_{t,<i})$. We denote this 1D mixture probability density function (PDF) as $\ponemix(z_{t,i} | \rvz_{<t}, \rvz_{t,<i})$. This conditional PDF can be used to construct an invertible transformation suitable for normalizing flows.
  
  The CDF of $\ponemix(z_{t,i} | \rvz_{<t}, \rvz_{t,<i})$, denoted $F_{\text{mix-}1}(z_{t,i}; \rvz_{<t}, \rvz_{t,<i})$, is a weighted sum of the CDFs of the individual Gaussian components:
  \begin{equation*}
  F_{\text{mix-}1}(z_{t,i}; \rvz_{<t}, \rvz_{t,<i}) = \sum_{k=1}^{V} \vpi[k] \Phi\left( \frac{z_{t,i} - \mathbf{m}[k]}{\bm{\sigma}[k]} \right)
  \end{equation*}
  where $\Phi(\cdot)$ is the CDF of the standard normal distribution $\mathcal{N}(0,1)$, and $\bm{\sigma}[k] = \sqrt{\bm{\sigma}^2[k]}$ is the standard deviation of the $k$-th component. The mixture parameters $\vpi[k], \mathbf{m}[k], \bm{\sigma}[k]$ are functions of $(\rvz_{<t}, \rvz_{t,<i})$.
  
  Using the probability integral transform, if a random variable $Z_{t,i}$ is drawn from $\ponemix(Z_{t,i} | \rvz_{<t}, \rvz_{t,<i})$, then $F_{\text{mix-}1}(Z_{t,i}; \rvz_{<t}, \rvz_{t,<i})$ is uniformly distributed on $[0, 1]$. To map $z_{t,i}$ to a variable $u_{t,i}$ that follows a standard normal distribution $\mathcal{N}(0,1)$, we compose the mixture CDF with the inverse standard normal CDF, $\Phi^{-1}(\cdot)$:
  \begin{equation}
  u_{t,i} = \Phi^{-1}(F_{\text{mix-}1}(z_{t,i}; \rvz_{<t}, \rvz_{t,<i}))
  \label{eq:cdf_transform_1d_appendix}
  \end{equation}
  This transformation is invertible. The variable $z_{t,i}$ can be recovered from $u_{t,i}$ using $z_{t,i} = F_{\text{mix-}1}^{-1}(\Phi(u_{t,i}); \rvz_{<t}, \rvz_{t,<i})$, where $F_{\text{mix-}1}^{-1}$ is the inverse CDF (quantile function) of the mixture. $F_{\text{mix-}1}^{-1}$ generally does not have a closed-form expression and may require numerical methods for its evaluation.
  
  For this transformation to serve as a normalizing flow layer, its Jacobian determinant is needed. In this 1D case, this is the absolute value of the derivative $\frac{\partial u_{t,i}}{\partial z_{t,i}}$. We compute this using the chain rule. Let $y_{\text{cdf}} = F_{\text{mix-}1}(z_{t,i}; \rvz_{<t}, \rvz_{t,<i})$. Then $u_{t,i} = \Phi^{-1}(y_{\text{cdf}})$, and
  \begin{equation*}
  \frac{\partial u_{t,i}}{\partial z_{t,i}} = \frac{d \Phi^{-1}(y_{\text{cdf}})}{d y_{\text{cdf}}} \Bigg|_{y_{\text{cdf}}=F_{\text{mix-}1}(z_{t,i}; \rvz_{<t}, \rvz_{t,<i})} \cdot \frac{\partial F_{\text{mix-}1}(z_{t,i}; \rvz_{<t}, \rvz_{t,<i})}{\partial z_{t,i}}
  \end{equation*}
  We evaluate each term:
  \begin{enumerate}
  \item The derivative of the inverse standard normal CDF, $\frac{d \Phi^{-1}(y_{\text{cdf}})}{d y_{\text{cdf}}}$:
  If $u_{t,i} = \Phi^{-1}(y_{\text{cdf}})$, then $y_{\text{cdf}} = \Phi(u_{t,i})$. Differentiating $y_{\text{cdf}} = \Phi(u_{t,i})$ with respect to $u_{t,i}$ yields $\frac{d y_{\text{cdf}}}{d u_{t,i}} = \mathcal{N}(u_{t,i};0,1)$, where $\mathcal{N}(u_{t,i};0,1)$ is the PDF of the standard normal distribution. Therefore, $\frac{d u_{t,i}}{d y_{\text{cdf}}} = \frac{d \Phi^{-1}(y_{\text{cdf}})}{d y_{\text{cdf}}} = \frac{1}{\mathcal{N}(u_{t,i};0,1)}$.
  
  \item The derivative of the mixture CDF with respect to $z_{t,i}$, $\frac{\partial F_{\text{mix-}1}(z_{t,i}; \rvz_{<t}, \rvz_{t,<i})}{\partial z_{t,i}}$:
  \begin{align*}
    \frac{\partial F_{\text{mix-}1}(z_{t,i}; \rvz_{<t}, \rvz_{t,<i})}{\partial z_{t,i}} &= \frac{\partial}{\partial z_{t,i}} \sum_{k=1}^{V} \vpi[k] \Phi\left( \frac{z_{t,i} - \mathbf{m}[k]}{\bm{\sigma}[k]} \right) \\
    &= \sum_{k=1}^{V} \vpi[k] \mathcal{N}\left( \frac{z_{t,i} - \mathbf{m}[k]}{\bm{\sigma}[k]}; 0, 1 \right) \cdot \frac{1}{\bm{\sigma}[k]}
  \end{align*}
  Recognizing that $\frac{1}{\bm{\sigma}[k]} \mathcal{N}\left( \frac{z_{t,i} - \mathbf{m}[k]}{\bm{\sigma}[k]}; 0, 1 \right)$ is the PDF $\mathcal{N}(z_{t,i}; \mathbf{m}[k], \bm{\sigma}^2[k])$, the sum becomes:
  \begin{equation*}
    \frac{\partial F_{\text{mix-}1}(z_{t,i}; \rvz_{<t}, \rvz_{t,<i})}{\partial z_{t,i}} = \sum_{k=1}^{V} \vpi[k] \mathcal{N}(z_{t,i}; \mathbf{m}[k], \bm{\sigma}^2[k]) = \ponemix(z_{t,i} | \rvz_{<t}, \rvz_{t,<i})
  \end{equation*}
  Thus, the derivative of the mixture CDF is its PDF.
  \end{enumerate}
  Combining these results, the derivative of the transformation is:
  \begin{equation*}
  \frac{\partial u_{t,i}}{\partial z_{t,i}} = \frac{\ponemix(z_{t,i} | \rvz_{<t}, \rvz_{t,<i})}{\mathcal{N}(u_{t,i};0,1)}
  \end{equation*}
  The log absolute Jacobian determinant is therefore:
  \begin{equation}
  \log \left| \frac{\partial u_{t,i}}{\partial z_{t,i}} \right| = \log \ponemix(z_{t,i} | \rvz_{<t}, \rvz_{t,<i}) - \log \mathcal{N}(u_{t,i};0,1)
  \label{eq:logdet_1d_mix_cdf_appendix}
  \end{equation}
  This equation relates the log-density of $z_{t,i}$ under the mixture model $\ponemix$ to the log-density of $u_{t,i}$ under the standard normal base distribution and the log-Jacobian term. Rearranging Eq.~\eqref{eq:logdet_1d_mix_cdf_appendix} according to the change of variables formula ($\log p(z) = \log p(u) + \log |\det J_{f}|$):
  \begin{equation*}
  \log \ponemix(z_{t,i} | \rvz_{<t}, \rvz_{t,<i}) = \log \mathcal{N}(u_{t,i};0,1) + \log \left| \frac{\partial u_{t,i}}{\partial z_{t,i}} \right|
  \end{equation*}
  This confirms that learning the mixture parameters $\left[\vpi, \mathbf{m}, \bm{\sigma}^2 \right]$ is equivalent to learning this 1D Mixture-CDF flow layer.
  
  When such a transformation is used as a layer $\ell$ in a stack of flows, $z_{t,i}$ corresponds to the $i$-th scalar component of the input to this layer, $h_{t,i}^{(\ell-1)}$. The output $u_{t,i}$ corresponds to the $i$-th scalar component $h_{t,i}^{(\ell)}$. The parameters for the mixture density $\ponemix(h_{t,i}^{(\ell-1)} | \text{cond}_{t,i}^{(\ell)})$ for layer $\ell$ are determined by a neural network conditioned on $\text{cond}_{t,i}^{(\ell)}$, which is the layer-specific conditioning information. The term $\log \ponemix(h_{t,i}^{(\ell-1)} | \text{cond}_{t,i}^{(\ell)})$ in the log-determinant calculation (Eq.~\eqref{eq:logdet_1d_mix_cdf_appendix}, adapted for layer $\ell$) represents the log-probability of the layer's input under the specific 1D mixture model implemented by that layer.
  }
  
  \subsection{Proof of \cref{prop:mog_cdf_flow}}

  \begin{prop}
  \label{prop:mog_cdf_flow}
  Let $p_{\mathrm{mix-1}}(z) = \sum_{k=1}^{V} \pi_k \mathcal{N}(z; m_k, \sigma_k^2)$ be a 1D MoG probability density function (PDF), and let $F_{\mathrm{mix-1}}(z)$ be its corresponding cumulative distribution function (CDF). The transformation $f: \mathbb{R} \to \mathbb{R}$ defined by $u = f(z) = \Phi^{-1}(F_{\mathrm{mix-1}}(z))$, where $\Phi$ is the standard normal CDF, is an exact normalizing flow between the density $p_{\mathrm{mix-1}}(z)$ and the standard normal density $\mathcal{N}(u; 0, 1)$.
  \end{prop}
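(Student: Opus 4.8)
The plan is to unpack the statement into two claims and dispatch each with standard facts. Claim one: the map $f=\Phi^{-1}\circ F_{\mathrm{mix-1}}$ is a diffeomorphism of $\mathbb{R}$ (smooth, bijective, with smooth inverse). Claim two: $f$ transports the density $p_{\mathrm{mix-1}}$ to $\mathcal{N}(\cdot;0,1)$. I would prove the first by composing monotonicity and smoothness properties of Gaussian CDFs, and the second either via the probability integral transform or, equivalently, by a direct one-dimensional change of variables; I would include the latter since it also re-derives the log-Jacobian identity quoted in Eq.~\eqref{eq:logdet_1d_mix_cdf_appendix} and Table~\ref{tab:mixture_flow_summary_combined}.

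For the diffeomorphism claim, I would first record that $F_{\mathrm{mix-1}}(z)=\sum_{k=1}^V \pi_k\,\Phi\!\big((z-m_k)/\sigma_k\big)$ with $\pi_k>0$, $\sum_k\pi_k=1$, $\sigma_k>0$. Each summand is $C^\infty$, strictly increasing, and tends to $0$ as $z\to-\infty$ and to $1$ as $z\to+\infty$; hence $F_{\mathrm{mix-1}}$ is a $C^\infty$ strictly increasing bijection from $\mathbb{R}$ onto the open interval $(0,1)$, with $F_{\mathrm{mix-1}}'(z)=p_{\mathrm{mix-1}}(z)$ by termwise differentiation, which is strictly positive for every $z$ because each Gaussian component has full support. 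Likewise $\Phi\colon\mathbb{R}\to(0,1)$ is a $C^\infty$ bijection with $\Phi'=\mathcal{N}(\cdot;0,1)>0$, so by the inverse function theorem $\Phi^{-1}\colon(0,1)\to\mathbb{R}$ is $C^\infty$. Since the image of $F_{\mathrm{mix-1}}$ is exactly the domain $(0,1)$ of $\Phi^{-1}$, the composition $f$ is a well-defined $C^\infty$ bijection $\mathbb{R}\to\mathbb{R}$ with $C^\infty$ inverse $f^{-1}(u)=F_{\mathrm{mix-1}}^{-1}(\Phi(u))$, i.e. a diffeomorphism.

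For the pushforward claim, the quickest route is the probability integral transform: if $Z\sim p_{\mathrm{mix-1}}$ then $F_{\mathrm{mix-1}}(Z)$ is uniform on $[0,1]$, and applying $\Phi^{-1}$ to a uniform variable yields a $\mathcal{N}(0,1)$ variable by inverse-CDF sampling, so $f(Z)\sim\mathcal{N}(0,1)$. Equivalently, to exhibit the flow explicitly, I would compute $f'(z)=(\Phi^{-1})'\!\big(F_{\mathrm{mix-1}}(z)\big)\cdot F_{\mathrm{mix-1}}'(z)=p_{\mathrm{mix-1}}(z)/\mathcal{N}(f(z);0,1)$, using $(\Phi^{-1})'(y)=1/\mathcal{N}(\Phi^{-1}(y);0,1)$; rearranging gives $p_{\mathrm{mix-1}}(z)=\mathcal{N}(f(z);0,1)\,|f'(z)|$, which is exactly the one-dimensional change-of-variables identity and matches Eq.~\eqref{eq:logdet_1d_mix_cdf_appendix}.

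The only delicate point, which I would present most carefully, is showing $f$ is a genuine diffeomorphism onto all of $\mathbb{R}$ rather than merely an injective smooth map: this rests on (i) $\mathrm{im}(F_{\mathrm{mix-1}})=(0,1)$ exactly, so $\Phi^{-1}$ is always applied within its domain and the range of $f$ is all of $\mathbb{R}$; and (ii) $F_{\mathrm{mix-1}}'=p_{\mathrm{mix-1}}>0$ pointwise, so the Jacobian never degenerates. Both hinge on the full support of the Gaussian mixands and would fail, say, for compactly supported components. Everything else is routine.
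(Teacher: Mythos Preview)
Your proposal is correct and takes essentially the same approach as the paper: both rely on the probability integral transform (the paper spells it out as a direct CDF computation in forward and inverse directions), and both use that $F_{\mathrm{mix-1}}'=p_{\mathrm{mix-1}}>0$ to ensure strict monotonicity. Your treatment is in fact slightly more thorough, explicitly verifying the diffeomorphism property and re-deriving the log-Jacobian identity, whereas the paper handles the inverse direction separately via the same CDF manipulation; neither adds anything the other lacks in substance.
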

  
  \begin{proof}
  The proof demonstrates the equivalence from both forward and inverse directions. A key property is that $F_{\mathrm{mix-1}}(z)$ is strictly increasing, since its derivative, $F'_{\mathrm{mix-1}}(z) = p_{\mathrm{mix-1}}(z)$, is a sum of strictly positive Gaussian densities and is therefore strictly positive. Thus, $F_{\mathrm{mix-1}}$ is a bijection from $\mathbb{R}$ to $(0,1)$.
  
  \textbf{Forward Direction ($Z \to U$).}
  Assume a random variable $Z \sim p_{\mathrm{mix-1}}(z)$. We show that the transformed variable $U = f(Z)$ follows a standard normal distribution by computing its CDF:
  \begin{align*}
  P(U \le u) &= P\bigl(\Phi^{-1}(F_{\mathrm{mix-1}}(Z)) \le u\bigr) \\
  &= P\bigl(F_{\mathrm{mix-1}}(Z) \le \Phi(u)\bigr) && \text{(since } \Phi \text{ is strictly increasing)} \\
  &= F_{\mathrm{mix-1}}\bigl(F_{\mathrm{mix-1}}^{-1}(\Phi(u))\bigr) && \text{(by definition, } P(Z \le z') = F_{\mathrm{mix-1}}(z')\text{)} \\
  &= \Phi(u).
  \end{align*}
  The CDF of $U$ is $\Phi(u)$, which is the CDF of the standard normal distribution. Hence, $U \sim \mathcal{N}(0, 1)$.
  
  \textbf{Inverse Direction ($U \to Z$).}
  Assume a random variable $U \sim \mathcal{N}(0, 1)$. We show that $Z = f^{-1}(U) = F_{\mathrm{mix-1}}^{-1}(\Phi(U))$ is distributed according to $p_{\mathrm{mix-1}}(z)$ by computing its CDF:
  \begin{align*}
  P(Z \le z) &= P\bigl(F_{\mathrm{mix-1}}^{-1}(\Phi(U)) \le z\bigr) \\
  &= P\bigl(\Phi(U) \le F_{\mathrm{mix-1}}(z)\bigr) && \text{(since } F_{\mathrm{mix-1}} \text{ is strictly increasing)}.
  \end{align*}
  By the probability integral transform, the random variable $\Phi(U)$ is uniformly distributed on $[0,1]$. Therefore, $P\bigl(\Phi(U) \le v\bigr) = v$ for any $v \in [0,1]$. Since $F_{\mathrm{mix-1}}(z) \in (0,1)$, we have:
  \[
  P(Z \le z) = F_{\mathrm{mix-1}}(z).
  \]
  The CDF of $Z$ is $F_{\mathrm{mix-1}}(z)$. Differentiating with respect to $z$ yields the PDF of $Z$ as $p_Z(z) = \frac{d}{dz}F_{\mathrm{mix-1}}(z) = p_{\mathrm{mix-1}}(z)$, which completes the proof.
  \end{proof}
}

{

\section{Details for $d$-D Mixture-Rosenblatt Flow Layer}
\label{ap:multi_dimensional_mixture}

We use $\gC$ to denote the conditioning context variable.
A $d$-dimensional conditional mixture density, such as $p_{\text{mix}}(\rvz | \gC) = \sum_{k=1}^{V} \pi_k(\gC) \mathcal{N}(\rvz; \mathbf{m}_k(\gC), s_k^2(\gC)\mI_d)$, can be realized as an exact and invertible normalizing flow layer. This construction is used for modeling complex conditional distributions within the flow framework. For example, it can represent the prior's conditional distribution $p(\rvz_t|\rvz_{<t})$ as defined in Eq.~\eqref{eq:ar_mixture_shared_prior_conditional_structure_main}, or a general conditional density $p_{\text{mix},t}^{(\ell)}(\rvh_t^{(\ell-1)} | \text{cond}_t^{(\ell)})$ for an intermediate layer $\ell$ in a stack of flows (see Section~\ref{sec:stacking_flow_layers_expressive_priors}).

In this description, $\rvz \in \R^d$ denotes the input variable to this flow layer. If this layer is the first transformation applied to a latent variable $\rvz_t$ from the VAE, then $\rvz$ corresponds to $\rvz_t$ (which is also $\rvh_t^{(0)}$ in the notation of stacked flows). If it is an intermediate layer $\ell$ in the stack, $\rvz$ corresponds to $\rvh_t^{(\ell-1)}$. The term $\gC$ represents the conditioning information, such as $\rvz_{<t}$ for the prior, or a layer-specific context $\text{cond}_t^{(\ell)}$. The parameters $\mathbf{m}_k$ and $s_k^2$ of the Gaussian components can either be predicted based on $\gC$, or, as in Eq.~\eqref{eq:ar_mixture_shared_prior_conditional_structure_main}, they can be part of a shared codebook (e.g., $\{\bm{\mu}_k, \sigma_k^2\}_{k=1}^V$) where only the mixture weights $\pi_k(\gC)$ are predicted.

The goal is to define an invertible transformation $\mathbf{g}_d: \R^d \to \R^d$, denoted $\rvu = \mathbf{g}_d(\rvz; \gC)$, such that the output variable $\rvu \in \R^d$ follows a standard $d$-dimensional Gaussian distribution, i.e., $\rvu \sim \mathcal{N}(\mathbf{0}, \mI_d)$. This transformation is constructed by processing the components of $\rvz=(z_1, \ldots, z_d)$ sequentially, from $i=1$ to $d$, using a method related to the Rosenblatt transformation. The forward pass is detailed in Algorithm~\ref{alg:d_dim_mixture_forward_transform_main}.

\textbf{Forward Transformation $\rvz \mapsto \rvu$}:
The transformation from $\rvz$ to $\rvu$ is built sequentially, processing one component at a time. At each step $i$, $u_i$ is computed from $z_i$ using $\gC$ and the previously processed components $\rvz_{<i} = (z_1, \ldots, z_{i-1})$. Mixture component weights are maintained and updated throughout this process.

The procedure begins with initialization: for the first component $z_1$, the initial mixture weights are set by the parameters of $p_{\text{mix}}(\rvz|\gC)$,
\begin{equation}
\alpha_k^{(1)}(\gC) = \pi_k(\gC) \quad \text{for } k=1, \ldots, V.
\label{eq:appendix_initial_weights_sequential_flow}
\end{equation}

For each dimension $i = 1, \ldots, d$, the following steps are performed:
The forward transformation proceeds as follows. For each dimension $i = 1, \ldots, d$:

\textbf{Conditional Density of $z_i$.} The conditional probability density of $z_i$ given $\gC$ and $\rvz_{<i}$ is a 1D mixture of Gaussians, denoted $p_{\text{mix},i}(z_i | \gC, \rvz_{<i})$. This is computed by marginalizing over the mixture components using the current weights $\alpha_k^{(i)}$:
\begin{equation}
  p_{\text{mix},i}(z_i | \gC, \rvz_{<i}) = \sum_{k=1}^{V} \alpha_k^{(i)}(\gC, \rvz_{<i}) \mathcal{N}(z_i; m_{k,i}(\gC), s_k^2(\gC))
  \label{eq:appendix_conditional_1d_mixture_dim_i}
\end{equation}
Here, $m_{k,i}(\gC)$ is the $i$-th element of the mean vector $\mathbf{m}_k(\gC)$ for component $k$, and $s_k^2(\gC)$ is the (isotropic) variance for component $k$. By construction, $p_{\text{mix},i}(z_i | \gC, \rvz_{<i})$ is equivalent to $p(z_i | \gC, \rvz_{<i})$.

\textbf{Conditional CDF of $z_i$.} The corresponding 1D conditional cumulative distribution function (CDF) for $z_i$ is
\begin{equation}
  F_i(z_i | \gC, \rvz_{<i}) = \int_{-\infty}^{z_i} p_{\text{mix},i}(\xi | \gC, \rvz_{<i}) d\xi = \sum_{k=1}^{V} \alpha_k^{(i)}(\gC, \rvz_{<i}) \Phi\left(\frac{z_i - m_{k,i}(\gC)}{s_k(\gC)}\right)
  \label{eq:appendix_conditional_1d_cdf_dim_i}
\end{equation}
where $\Phi(\cdot)$ is the CDF of the standard normal distribution and $s_k(\gC) = \sqrt{s_k^2(\gC)}$.

\textbf{Transformation to $u_i$.} The $i$-th component $u_i$ of the transformed variable $\rvu$ is obtained by applying the probability integral transform to $z_i$ using $F_i$, followed by the inverse CDF of the standard normal:
\begin{equation}
  u_i = \Phi^{-1}(F_i(z_i | \gC, \rvz_{<i}))
  \label{eq:appendix_u_i_transform}
\end{equation}
If $z_i$ is drawn from $p_{\text{mix},i}(z_i | \gC, \rvz_{<i})$, then $F_i(z_i | \gC, \rvz_{<i})$ is uniformly distributed on $[0,1]$, and thus $u_i$ is standard normal. The sequential conditioning ensures that $u_1, \ldots, u_d$ are mutually independent and standard normally distributed.

\textbf{Update Mixture Weights (if $i<d$).} After processing $z_i$ and obtaining $u_i$, if $i$ is not the last dimension, the mixture weights for the next dimension, $\alpha_k^{(i+1)}$, are updated using Bayes' rule to incorporate the information from $z_i$:
\begin{align}
  \alpha_k^{(i+1)}(\gC, \rvz_{\le i}) &= p(\text{component}=k | \gC, \rvz_{\le i}) \nonumber \\
  &= \frac{p(z_i | \text{component}=k, \gC, \rvz_{<i}) \cdot p(\text{component}=k | \gC, \rvz_{<i})}{p(z_i | \gC, \rvz_{<i})} \nonumber \\
  &= \frac{\mathcal{N}(z_i; m_{k,i}(\gC), s_k^2(\gC)) \cdot \alpha_k^{(i)}(\gC, \rvz_{<i})}{p_{\text{mix},i}(z_i | \gC, \rvz_{<i})}
  \label{eq:appendix_weight_update_dim_i}
\end{align}
where $\rvz_{\le i} = (z_1, \ldots, z_i)$. These updated weights are then used for the next dimension.

\textbf{Inverse Transformation $\rvu \mapsto \rvz$}:
The transformation $\mathbf{g}_d$ is invertible. To compute the inverse, $\rvz = \mathbf{g}_d^{-1}(\rvu; \gC)$, we proceed as follows, iterating from $i=1$ to $d$:

\textbf{Initialization (for $i=1$):} Set the initial mixture weights $\alpha_k^{(1)}(\gC)$ as in Eq.~\eqref{eq:appendix_initial_weights_sequential_flow}.

\textbf{Sequential Inversion (for $i=1, \ldots, d$):} For each dimension $i$, given $u_i$ and the current weights $\alpha_k^{(i)}(\gC, \rvz_{<i})$ (where $\rvz_{<i}$ have been determined in previous steps), solve for $z_i$ by finding the value that satisfies
\begin{equation}
  F_i(z_i | \gC, \rvz_{<i}) = \Phi(u_i)
  \label{eq:appendix_inverse_z_i_solve}
\end{equation}
where $F_i$ is defined in Eq.~\eqref{eq:appendix_conditional_1d_cdf_dim_i}. Since $F_i$ is a strictly monotonic function (as the CDF of a continuous distribution with positive density), a unique solution for $z_i$ exists. This inversion typically requires a numerical 1D root-finding algorithm.

After determining $z_i$, if $i < d$, update the mixture weights to $\alpha_k^{(i+1)}(\gC, \rvz_{\le i})$ using Eq.~\eqref{eq:appendix_weight_update_dim_i}, now with the newly found $z_i$. This process is repeated sequentially for each dimension until all components of $\rvz$ are recovered.

\textbf{Jacobian Determinant}:
The Jacobian matrix $J_{\mathbf{g}_d}(\rvz)$ of the forward transformation $\rvu = \mathbf{g}_d(\rvz; \gC)$ has entries $(J_{\mathbf{g}_d})_{\ell i} = \frac{\partial u_\ell}{\partial z_i}$. Due to the sequential construction, where $u_\ell$ depends only on $z_1, \ldots, z_\ell$ (and $\gC$), the Jacobian matrix is lower-triangular. Its determinant is the product of its diagonal entries: $\det J_{\mathbf{g}_d}(\rvz) = \prod_{i=1}^d \frac{\partial u_i}{\partial z_i}$.

To find $\frac{\partial u_i}{\partial z_i}$, we differentiate Eq.~\eqref{eq:appendix_u_i_transform}, or its form $\Phi(u_i) = F_i(z_i | \gC, \rvz_{<i})$. Differentiating both sides with respect to $z_i$ (treating $\gC$ and $\rvz_{<i}$ as constant for this partial derivative):
\begin{equation}
\frac{\partial (\Phi(u_i))}{\partial z_i} = \frac{\partial (F_i(z_i | \gC, \rvz_{<i}))}{\partial z_i}
\end{equation}
Using the chain rule on the left side yields $\phi(u_i) \frac{\partial u_i}{\partial z_i}$, where $\phi(\cdot)$ is the PDF of $\mathcal{N}(0,1)$. The right side is the derivative of a CDF with respect to its variable, which is its PDF: $\frac{\partial F_i}{\partial z_i} = p_{\text{mix},i}(z_i | \gC, \rvz_{<i})$.
Thus,
\begin{equation}
\phi(u_i) \frac{\partial u_i}{\partial z_i} = p_{\text{mix},i}(z_i | \gC, \rvz_{<i})
\end{equation}
Since $\phi(u_i) > 0$, we can write:
\begin{equation}
\frac{\partial u_i}{\partial z_i} = \frac{p_{\text{mix},i}(z_i | \gC, \rvz_{<i})}{\phi(u_i)}
\label{eq:appendix_jacobian_diag_entry}
\end{equation}
The log absolute Jacobian determinant is then:
\begin{align}
\log \left| \det J_{\mathbf{g}_d}(\rvz) \right| &= \sum_{i=1}^d \log \left| \frac{\partial u_i}{\partial z_i} \right| \nonumber \\
&= \sum_{i=1}^d \left( \log p_{\text{mix},i}(z_i | \gC, \rvz_{<i}) - \log \phi(u_i) \right)
\label{eq:appendix_logdet_sum_explicit}
\end{align}
By the chain rule of probability, the log-density of the input variable $\rvz$ under the conditional mixture $p_{\text{mix}}(\rvz|\gC)$ is:
\begin{equation}
\log p_{\text{mix}}(\rvz|\gC) = \log \left( \prod_{i=1}^d p(z_i | \gC, \rvz_{<i}) \right) = \sum_{i=1}^d \log p(z_i | \gC, \rvz_{<i})
\end{equation}
In our construction, $p(z_i | \gC, \rvz_{<i})$ is precisely $p_{\text{mix},i}(z_i | \gC, \rvz_{<i})$ from Eq.~\eqref{eq:appendix_conditional_1d_mixture_dim_i}. So, $\sum_{i=1}^d \log p_{\text{mix},i}(z_i | \gC, \rvz_{<i}) = \log p_{\text{mix}}(\rvz|\gC)$.
The log-density of the transformed variable $\rvu$ under the standard $d$-dimensional Gaussian base distribution $p_{\text{base}}(\rvu) = \mathcal{N}(\rvu; \mathbf{0}, \mI_d)$ is $\log p_{\text{base}}(\rvu) = \sum_{i=1}^d \log \mathcal{N}(u_i; 0, 1) = \sum_{i=1}^d \log \phi(u_i)$.
Substituting these into Eq.~\eqref{eq:appendix_logdet_sum_explicit}, we arrive at the expression for the log-determinant in normalizing flows:
\begin{equation}
\log \left| \det J_{\mathbf{g}_d}(\rvz) \right| = \log p_{\text{mix}}(\rvz|\gC) - \log \mathcal{N}(\rvu; \mathbf{0}, \mI_d)
\label{eq:appendix_logdet_g_d_dim_final}
\end{equation}
This confirms that the sequential transformation $\mathbf{g}_d$ correctly implements the desired conditional mixture density $p_{\text{mix}}(\rvz|\gC)$ as a normalizing flow layer. When such a layer is used, for example as layer $f_t^{(\ell)}$ in a stack, it transforms an input $\rvh_t^{(\ell-1)}$ (which plays the role of $\rvz$ in this derivation) to an output $\rvh_t^{(\ell)}$ (which plays the role of $\rvu$). The parameters defining the mixture (initial weights $\pi_k(\text{cond}_t^{(\ell)})$, means $\mathbf{m}_k(\text{cond}_t^{(\ell)})$, and variances $s_k^2(\text{cond}_t^{(\ell)})$) are predicted by a neural network based on a conditioning context $\text{cond}_t^{(\ell)}$. 
}

{

\subsection{Proof of \cref{prop:rosenblatt_flow}}

Here we provide formal proofs for the normalizing flow layers based on the Mixture of Gaussians (MoG) CDF, as described in Appendices~\ref{ap:1d_mog_cdf_flow_layer} and \ref{ap:multi_dimensional_mixture}. We establish that these transformations are exact, invertible, and correctly model the target mixture densities.

The core of the $d$-dimensional Mixture-Rosenblatt flow lies in sequentially computing the true conditional probability distribution $p(z_i | \rvz_{<i})$ for each dimension. The following lemma formally derives this distribution and shows that it takes the form of a 1D Mixture of Gaussians, which is analytically tractable.

\begin{lemma}[Conditional Distribution of an Isotropic MoG]
\label{lemma:mog_conditional}
Let $\rvz \in \R^d$ be a random vector whose probability density function is a mixture of $V$ isotropic Gaussians:
\[
p(\rvz) = \sum_{k=1}^{V} \pi_k \mathcal{N}(\rvz; \mathbf{m}_k, \sigma_k^2\mathbf{I}_d),
\]
where $\sum_k \pi_k = 1$, $\pi_k > 0$. Then for any dimension $i \in \{1, \dots, d\}$, the true conditional probability density of its component $z_i$ given the preceding components $\rvz_{<i} = (z_1, \dots, z_{i-1})$ is also a 1D Mixture of Gaussians, given by:
\[
p(z_i | \rvz_{<i}) = \sum_{k=1}^{V} \alpha_k^{(i)}(\rvz_{<i}) \mathcal{N}(z_i; m_{k,i}, \sigma_k^2),
\]
where $m_{k,i}$ is the $i$-th element of $\mathbf{m}_k$, and the mixture weights $\alpha_k^{(i)}(\rvz_{<i})$ are the posterior probabilities of component membership given $\rvz_{<i}$:
\[
\alpha_k^{(i)}(\rvz_{<i}) \triangleq P(K=k | \rvz_{<i}) = \frac{\pi_k \mathcal{N}(\rvz_{<i}; \mathbf{m}_{k,<i}, \sigma_k^2\mathbf{I}_{i-1})}{\sum_{j=1}^V \pi_j \mathcal{N}(\rvz_{<i}; \mathbf{m}_{j,<i}, \sigma_j^2\mathbf{I}_{i-1})}.
\]
(For the base case $i=1$, $\rvz_{<1}$ is empty and $\alpha_k^{(1)} = \pi_k$).
\end{lemma}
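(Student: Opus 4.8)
The plan is to make the latent mixture‑component variable explicit and then exploit the fact that an isotropic (equivalently, diagonal) covariance makes the coordinates of each Gaussian component conditionally independent. Concretely, introduce a discrete random variable $K \in \{1,\dots,V\}$ with $P(K=k) = \pi_k$ and $\rvz \mid K=k \sim \mathcal{N}(\mathbf{m}_k, \sigma_k^2 \mathbf{I}_d)$; the stated density $p(\rvz)$ is then precisely the marginal of this augmented model, so it suffices to compute $p(z_i \mid \rvz_{<i})$ in the augmented model and read off the component parameters and weights.

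First I would record the two consequences of isotropy that do all the work. (i) Conditioned on $K=k$, the density of $\rvz$ factorizes over coordinates: $p(\rvz \mid K=k) = \prod_{j=1}^d \mathcal{N}(z_j; m_{k,j}, \sigma_k^2)$, where $m_{k,j}$ is the $j$‑th entry of $\mathbf{m}_k$. Consequently the sub‑vector $\rvz_{<i}$ has conditional law $\mathcal{N}(\rvz_{<i}; \mathbf{m}_{k,<i}, \sigma_k^2 \mathbf{I}_{i-1})$, and $z_i$ is conditionally independent of $\rvz_{<i}$ given $K=k$, with $p(z_i \mid K=k, \rvz_{<i}) = \mathcal{N}(z_i; m_{k,i}, \sigma_k^2)$.

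Next I would marginalize out $K$ against its posterior. Writing $p(z_i \mid \rvz_{<i}) = \sum_{k=1}^V P(K=k \mid \rvz_{<i})\, p(z_i \mid K=k, \rvz_{<i})$ and substituting the component density from step (i) yields the claimed 1D MoG form with component parameters $(m_{k,i}, \sigma_k^2)$. The weights are identified as $\alpha_k^{(i)}(\rvz_{<i}) = P(K=k \mid \rvz_{<i})$, which by Bayes' rule equals $\pi_k\, p(\rvz_{<i} \mid K=k) \big/ \sum_{j=1}^V \pi_j\, p(\rvz_{<i} \mid K=j)$; inserting the Gaussian marginal of $\rvz_{<i}$ from step (i) gives exactly the displayed formula for $\alpha_k^{(i)}$. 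The base case $i=1$ is immediate: $\rvz_{<1}$ is empty, so the posterior reduces to the prior and $\alpha_k^{(1)} = \pi_k$.

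There is no substantive obstacle here; the one point that must be stated carefully is the decoupling of coordinates under conditioning on $K=k$, since this is the unique place where the isotropy (diagonal covariance) hypothesis is used and it would genuinely fail for general component covariances. Everything else is the standard ``mixture $=$ marginal of a latent‑class model'' bookkeeping together with Bayes' rule, so the writeup will be short once step (i) is spelled out.
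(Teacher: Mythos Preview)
Your proposal is correct and follows essentially the same approach as the paper: introduce the latent component indicator $K$, use the law of total probability $p(z_i \mid \rvz_{<i}) = \sum_k P(K=k \mid \rvz_{<i})\, p(z_i \mid K=k, \rvz_{<i})$, invoke the diagonal-covariance conditional independence to simplify the second factor to $\mathcal{N}(z_i; m_{k,i}, \sigma_k^2)$, and compute the posterior weights via Bayes' rule. Your explicit remark that isotropy is the sole place the hypothesis is genuinely used mirrors the paper's emphasis.
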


\begin{proof}
We derive the conditional distribution $p(z_i | \rvz_{<i})$ by introducing a latent categorical variable $K \in \{1, \dots, V\}$ that indicates which Gaussian component $\rvz$ is drawn from. The hierarchical model is:
\begin{enumerate}
    \item Sample a component index $K=k$ with probability $P(K=k) = \pi_k$.
    \item Sample $\rvz$ from the chosen component: $p(\rvz | K=k) = \mathcal{N}(\rvz; \mathbf{m}_k, \sigma_k^2\mathbf{I}_d)$.
\end{enumerate}
The target conditional density can be found by marginalizing over this latent variable using the law of total probability:
\begin{equation} \label{eq:total_prob_expansion}
p(z_i | \rvz_{<i}) = \sum_{k=1}^{V} p(z_i | \rvz_{<i}, K=k) P(K=k | \rvz_{<i}).
\end{equation}
We now analyze the two terms in the summation separately.

\textbf{1. The Component Density: $p(z_i | \rvz_{<i}, K=k)$.}
This term is the conditional density of $z_i$ given $\rvz_{<i}$, under the condition that we know the sample originates from component $k$. For a given component $k$, $\rvz$ follows a single multivariate Gaussian distribution $\mathcal{N}(\mathbf{m}_k, \sigma_k^2\mathbf{I}_d)$. The covariance matrix $\sigma_k^2\mathbf{I}_d$ is diagonal, which implies that all dimensions of $\rvz$ are mutually independent *when conditioned on $K=k$*.
Therefore, the value of $z_i$ is independent of the preceding values $\rvz_{<i}$:
\[
p(z_i | \rvz_{<i}, K=k) = p(z_i | K=k).
\]
The distribution $p(z_i | K=k)$ is the $i$-th marginal of the multivariate Gaussian $\mathcal{N}(\rvz; \mathbf{m}_k, \sigma_k^2\mathbf{I}_d)$, which is a 1D Gaussian:
\[
p(z_i | K=k) = \mathcal{N}(z_i; m_{k,i}, \sigma_k^2).
\]

\textbf{2. The Mixture Weights: $P(K=k | \rvz_{<i})$.}
This term represents the posterior probability of being in component $k$ after observing the first $i-1$ dimensions. We denote this by $\alpha_k^{(i)}(\rvz_{<i})$ and compute it using Bayes' rule:
\[
P(K=k | \rvz_{<i}) = \frac{p(\rvz_{<i} | K=k) P(K=k)}{p(\rvz_{<i})} = \frac{p(\rvz_{<i} | K=k) P(K=k)}{\sum_{j=1}^V p(\rvz_{<i} | K=j) P(K=j)}.
\]
The required terms are:
\begin{itemize}
    \item $P(K=k) = \pi_k$ (the prior probability of selecting component $k$).
    \item $p(\rvz_{<i} | K=k)$ is the marginal density of the first $i-1$ dimensions of the $k$-th Gaussian. Due to the isotropic structure, this is $\mathcal{N}(\rvz_{<i}; \mathbf{m}_{k,<i}, \sigma_k^2\mathbf{I}_{i-1})$, where $\mathbf{m}_{k,<i}$ contains the first $i-1$ elements of $\mathbf{m}_k$.
\end{itemize}
Substituting these into the formula gives the expression for the weights as stated in the lemma.

\textbf{Conclusion.}
Substituting the derived component density and mixture weights back into Eq.~\eqref{eq:total_prob_expansion}, we obtain:
\[
p(z_i | \rvz_{<i}) = \sum_{k=1}^{V} \underbrace{\alpha_k^{(i)}(\rvz_{<i})}_{\text{Mixture Weight}} \cdot \underbrace{\mathcal{N}(z_i; m_{k,i}, \sigma_k^2)}_{\text{1D Gaussian Component}}.
\]
This confirms that the true conditional distribution is a 1D Mixture of Gaussians with analytically computable parameters.
\end{proof}

With the analytical form of the conditional distribution established, we can now state and prove the proposition for the $d$-dimensional flow in a more direct manner.

\begin{prop}
  \label{prop:rosenblatt_flow}
Let $p_{\mathrm{mix}}(\rvz) = \sum_{k=1}^{V} \pi_k \mathcal{N}(\rvz; \mathbf{m}_k, \sigma_k^2\mathbf{I}_d)$ be a $d$-dimensional isotropic MoG PDF. Let the transformation $\mathbf{g}: \mathbb{R}^d \to \mathbb{R}^d$ be defined by the sequential application for $i=1, \dots, d$:
\[
u_i = g_i(\rvz) = \Phi^{-1}\bigl(F_i(z_i | \rvz_{<i})\bigr),
\]
where $F_i(z_i | \rvz_{<i})$ is the cumulative distribution function (CDF) of the 1D MoG conditional density $p(z_i | \rvz_{<i})$ as derived in Lemma~\ref{lemma:mog_conditional}. This transformation is an exact normalizing flow between $p_{\mathrm{mix}}(\rvz)$ and the standard $d$-dimensional normal distribution $\mathcal{N}(\mathbf{u}; \mathbf{0}, \mathbf{I}_d)$.
\end{prop}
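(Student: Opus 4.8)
The plan is to reduce the $d$-dimensional claim to $d$ conditional invocations of the one-dimensional result \cref{prop:mog_cdf_flow}, stitched together along the Rosenblatt chain, and then to close the argument with the change-of-variables formula. The enabling fact, already established in \cref{lemma:mog_conditional}, is that for each $i$ the true conditional $p(z_i \mid \rvz_{<i})$ induced by $p_{\mathrm{mix}}$ is again a $1$-D mixture of Gaussians, with analytically available weights $\alpha_k^{(i)}(\rvz_{<i})$ and unchanged component statistics $(m_{k,i},\sigma_k^2)$. Consequently each $F_i(\cdot \mid \rvz_{<i})$ is a smooth, strictly increasing CDF and a bijection from $\R$ onto $(0,1)$, so $g_i = \Phi^{-1}\circ F_i(\cdot\mid\rvz_{<i})$ is well defined on all of $\R$ and $\mathbf{g}$ is a bona fide map $\R^d\to\R^d$.

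First I would show $\mathbf{g}$ is a $C^1$ diffeomorphism. Since $u_i$ depends only on $z_1,\dots,z_i$ (the conditioning weights $\alpha_k^{(i)}$ depend only on $\rvz_{<i}$), the Jacobian $J_{\mathbf{g}}(\rvz)$ is lower triangular, so $\det J_{\mathbf{g}}(\rvz)=\prod_{i=1}^{d}\partial u_i/\partial z_i$. Differentiating $\Phi(u_i)=F_i(z_i\mid\rvz_{<i})$ and using $F_i' = p(z_i\mid\rvz_{<i})$ gives $\partial u_i/\partial z_i = p(z_i\mid\rvz_{<i})/\phi(u_i)>0$, since a mixture of Gaussians is everywhere strictly positive and $\phi>0$ (this is \eqref{eq:appendix_jacobian_diag_entry}). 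Hence $\det J_{\mathbf{g}}>0$ everywhere, so $\mathbf{g}$ is a local diffeomorphism. For global invertibility I would recover $\rvz$ from $\rvu$ sequentially: $z_1$ is the unique root of $F_1(z_1)=\Phi(u_1)$ (strict monotonicity, $\Phi(u_1)\in(0,1)$), which fixes $\alpha_k^{(2)}(z_1)$; then $z_2$ is the unique root of $F_2(z_2\mid z_1)=\Phi(u_2)$, and so on down to $z_d$. This gives $\mathbf{g}^{-1}$ explicitly, so $\mathbf{g}$ is a bijection, and with a strictly positive Jacobian it is a diffeomorphism.

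Next I would push $p_{\mathrm{mix}}$ forward through $\mathbf{g}$. By change of variables, with $\rvz=\mathbf{g}^{-1}(\rvu)$,
\begin{align*}
p_{\rvu}(\rvu) &= p_{\mathrm{mix}}(\rvz)\,\bigl|\det J_{\mathbf{g}}(\rvz)\bigr|^{-1} \\
&= p_{\mathrm{mix}}(\rvz)\Bigl(\prod_{i=1}^{d}\frac{p(z_i\mid\rvz_{<i})}{\phi(u_i)}\Bigr)^{-1}.
\end{align*}
The chain rule of probability gives $\prod_{i=1}^{d} p(z_i\mid\rvz_{<i}) = p_{\mathrm{mix}}(\rvz)$, and it is essential here that the densities fed into the $F_i$ are the \emph{true} conditionals — precisely the content of \cref{lemma:mog_conditional}. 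Therefore the $p_{\mathrm{mix}}(\rvz)$ factors cancel and $p_{\rvu}(\rvu)=\prod_{i=1}^{d}\phi(u_i)=\mathcal{N}(\rvu;\mathbf{0},\mathbf{I}_d)$, which is \eqref{eq:appendix_logdet_g_d_dim_final} read as a density identity. The converse direction (if $\rvu\sim\mathcal{N}(\mathbf{0},\mathbf{I}_d)$ then $\mathbf{g}^{-1}(\rvu)\sim p_{\mathrm{mix}}$) follows symmetrically; alternatively, an induction on $i$ in which \cref{prop:mog_cdf_flow} is applied conditionally on $\rvz_{<i}$ shows $u_i\mid\rvz_{<i}\sim\mathcal{N}(0,1)$ regardless of the conditioning value, so $u_i$ is independent of $u_{1:i-1}$ and standard normal, giving joint standard normality.

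I expect the only load-bearing ingredient to be \cref{lemma:mog_conditional} itself (already proved): without the closed form of $p(z_i\mid\rvz_{<i})$ the map $\mathbf{g}$ is not tractable and the telescoping cancellation above is invisible. The rest is routine bookkeeping — confirming the triangular structure of the Jacobian, checking that all intermediate values lie in the open interval $(0,1)$ so that $\Phi^{-1}$ and each $F_i^{-1}$ act on their natural domains, and recording strict monotonicity of each $F_i$ (immediate from $F_i' = p(z_i\mid\rvz_{<i})>0$). So the main obstacle is conceptual rather than technical: one must observe that the Rosenblatt construction, applied to an \emph{isotropic} base mixture, keeps every conditional inside the analytically closed family of $1$-D Gaussian mixtures, which is exactly what makes the scheme both exact and computable.
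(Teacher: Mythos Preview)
Your proposal is correct and covers the same ground as the paper's proof: both rely on \cref{lemma:mog_conditional} to identify the conditionals, the lower-triangular Jacobian with diagonal entries $p(z_i\mid\rvz_{<i})/\phi(u_i)$, and the chain-rule telescoping that yields $\log|\det J_{\mathbf g}|=\log p_{\mathrm{mix}}(\rvz)-\log\mathcal N(\rvu;\mathbf 0,\mathbf I_d)$. The only organizational difference is emphasis: the paper leads with the probabilistic argument (conditional probability integral transform plus induction to get independence of the $u_i$) and records the density cancellation as a consistency check, whereas you lead with the analytic change-of-variables cancellation and mention the conditional-PIT induction as the alternative; you also fold in the diffeomorphism verification that the paper states separately as \cref{prop:diffeomorphism}.
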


\begin{proof}
The proof relies on the chain rule of probability, $p_{\mathrm{mix}}(\rvz) = \prod_{i=1}^d p(z_i | \rvz_{<i})$, and the result of Lemma~\ref{lemma:mog_conditional}.

\textbf{Forward Direction ($\mathbf{Z} \to \mathbf{U}$).}
Assume a random vector $\mathbf{Z} \sim p_{\mathrm{mix}}(\rvz)$. We show by induction that the components of the transformed vector $\mathbf{U} = \mathbf{g}(\mathbf{Z})$ are independent and identically distributed as $\mathcal{N}(0, 1)$.
By Lemma~\ref{lemma:mog_conditional}, $F_i(Z_i | \mathbf{Z}_{<i})$ is the true conditional CDF of $Z_i$. By the conditional probability integral transform, the random variable $W_i = F_i(Z_i | \mathbf{Z}_{<i})$ is uniformly distributed on $[0,1]$ and is independent of the conditioning random vector $\mathbf{Z}_{<i}$. Consequently, $U_i = \Phi^{-1}(W_i)$ is distributed as $\mathcal{N}(0,1)$ and is also independent of $\mathbf{Z}_{<i}$. Since $(U_1, \dots, U_{i-1})$ is an invertible function of $\mathbf{Z}_{<i}$, $U_i$ is also independent of $(U_1, \dots, U_{i-1})$.
By induction, all components of $\mathbf{U}$ are i.i.d.\ $\mathcal{N}(0,1)$, so $\mathbf{U} \sim \mathcal{N}(\mathbf{0}, \mathbf{I}_d)$.

\textbf{Inverse Direction ($\mathbf{U} \to \mathbf{Z}$).}
Assume a random vector $\mathbf{U} \sim \mathcal{N}(\mathbf{0}, \mathbf{I}_d)$. The inverse transformation $\mathbf{Z} = \mathbf{g}^{-1}(\mathbf{U})$ is computed by sequentially solving $F_i(Z_i | \mathbf{Z}_{<i}) = \Phi(U_i)$ for $Z_i$. This procedure is equivalent to ancestral sampling from the factorized distribution: for each $i=1, \dots, d$, one samples $Z_i$ from the true conditional distribution $p(z_i|\mathbf{Z}_{<i})$ via inverse transform sampling. The joint density of the resulting vector $\mathbf{Z}$ is therefore $\prod_{i=1}^d p(z_i | \rvz_{<i}) = p_{\mathrm{mix}}(\rvz)$.

\textbf{Jacobian Determinant.}
The Jacobian of $\mathbf{g}$ is lower-triangular, as $u_i$ depends only on $\rvz_{\le i}$. Its determinant is the product of the diagonal entries $\frac{\partial u_i}{\partial z_i}$. Treating $\rvz_{<i}$ as constant, we have:
\[
\frac{\partial u_i}{\partial z_i} = \frac{d}{dz_i}\Phi^{-1}\bigl(F_i(z_i|\rvz_{<i})\bigr) = \frac{F_i'(z_i|\rvz_{<i})}{\phi(u_i)} = \frac{p(z_i|\rvz_{<i})}{\phi(u_i)},
\]
where $\phi$ is the standard normal PDF and we used the fact that the derivative of a CDF is its PDF. The log-determinant is:
\[
\log |\det J_\mathbf{g}(\rvz)| = \sum_{i=1}^d \log \left| \frac{\partial u_i}{\partial z_i} \right| = \sum_{i=1}^d \left( \log p(z_i|\rvz_{<i}) - \log\phi(u_i) \right).
\]
Using the chain rule of probability for the first term and the definition of the multivariate normal for the second term, this simplifies to:
\[
\log |\det J_\mathbf{g}(\rvz)| = \log p_{\mathrm{mix}}(\rvz) - \log \mathcal{N}(\mathbf{u}; \mathbf{0}, \mathbf{I}_d).
\]
This confirms that the change of variables formula is satisfied exactly. The recursive Bayesian update algorithm described in Appendix~\ref{ap:multi_dimensional_mixture} is the practical implementation of the analytical derivation in Lemma~\ref{lemma:mog_conditional}.
\end{proof}
}

  \subsection{Proof of \cref{prop:diffeomorphism}}
  
  We now prove that the multidimensional transformation is a global diffeomorphism, which is a stronger condition than mere invertibility and differentiability, ensuring the transformation is smooth and stable.
  
  \begin{prop}
  \label{prop:diffeomorphism}
  Let $p_{\mathrm{mix}}(\rvz) = \sum_{k=1}^{V} \pi_k \mathcal{N}(\rvz; \mathbf{m}_k, \sigma_k^2\mathbf{I}_d)$ be a $d$-dimensional MoG PDF. The Rosenblatt transformation $\mathbf{g}: \mathbb{R}^d \to \mathbb{R}^d$, defined sequentially by
  \[
  u_i = g_i(\rvz) = \Phi^{-1}\bigl(F_i(z_i | \rvz_{<i})\bigr) \quad \text{for } i=1, \dots, d,
  \]
  where $F_i$ is the CDF of the true conditional density $p(z_i | \rvz_{<i})$, is a global $C^\infty$-diffeomorphism.
  \end{prop}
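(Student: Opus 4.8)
The plan is to establish the three defining properties of a global $C^\infty$-diffeomorphism: (i) $\mathbf{g}$ is $C^\infty$ on all of $\mathbb{R}^d$; (ii) $\mathbf{g}$ is a bijection of $\mathbb{R}^d$; and (iii) $\mathbf{g}^{-1}$ is $C^\infty$. For (i), I would start from the explicit form of the component maps given by Lemma~\ref{lemma:mog_conditional}: $u_i = g_i(\rvz) = \Phi^{-1}\bigl(F_i(z_i \mid \rvz_{<i})\bigr)$ with $F_i(z_i \mid \rvz_{<i}) = \sum_{k=1}^V \alpha_k^{(i)}(\rvz_{<i})\,\Phi\!\bigl((z_i - m_{k,i})/\sigma_k\bigr)$. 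The posterior weights $\alpha_k^{(i)}(\rvz_{<i})$ are ratios whose denominator $\sum_j \pi_j \mathcal{N}(\rvz_{<i}; \mathbf{m}_{j,<i}, \sigma_j^2\mathbf{I}_{i-1})$ is strictly positive everywhere, so each $\alpha_k^{(i)}$, and hence $F_i$, is $C^\infty$ jointly in $(z_1, \dots, z_i)$. The key observation is that $F_i$ is a convex combination of quantities $\Phi((z_i - m_{k,i})/\sigma_k) \in (0,1)$ with positive weights summing to one, so $F_i(z_i \mid \rvz_{<i}) \in (0,1)$ strictly and never touches the endpoints where $\Phi^{-1}$ fails to be smooth. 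Since $\Phi^{-1}$ is $C^\infty$ on $(0,1)$, the composition $g_i$ is $C^\infty$, and therefore so is $\mathbf{g} = (g_1, \dots, g_d)$.

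For (ii) I would use the triangular structure of $\mathbf{g}$. For each fixed $\rvz_{<i}$, the map $z_i \mapsto F_i(z_i \mid \rvz_{<i})$ is strictly increasing — its derivative is the strictly positive density $p(z_i \mid \rvz_{<i})$ — and surjective onto $(0,1)$, while $\Phi^{-1}: (0,1) \to \mathbb{R}$ is a strictly increasing bijection; hence $z_i \mapsto u_i$ is a strictly increasing bijection of $\mathbb{R}$ onto $\mathbb{R}$. Bijectivity of $\mathbf{g}$ then follows by induction on $i$: $z_1$ is the unique solution of $g_1(z_1) = u_1$, and once $\rvz_{<i}$ has been determined, $z_i$ is the unique solution of $F_i(z_i \mid \rvz_{<i}) = \Phi(u_i)$, exactly as in the inverse procedure described in Appendix~\ref{ap:multi_dimensional_mixture}. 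This yields a well-defined two-sided inverse on $\mathbb{R}^d$.

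For (iii) I would invoke the Jacobian computation already carried out in the proof of Proposition~\ref{prop:rosenblatt_flow}: $J_{\mathbf{g}}(\rvz)$ is lower triangular with diagonal entries $\partial u_i/\partial z_i = p(z_i \mid \rvz_{<i})/\phi(u_i)$, which are strictly positive for every $\rvz$, so $\det J_{\mathbf{g}}(\rvz) = \prod_{i=1}^d p(z_i \mid \rvz_{<i})/\phi(u_i) > 0$ everywhere. Thus $\mathbf{g}$ is a $C^\infty$ local diffeomorphism at every point; combined with the global bijectivity from (ii), the inverse function theorem gives $\mathbf{g}^{-1} \in C^\infty(\mathbb{R}^d)$, so $\mathbf{g}$ is a global $C^\infty$-diffeomorphism.

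I expect the only delicate point to be the joint smoothness claim in step (i): one must verify that the weights $\alpha_k^{(i)}$, and hence $F_i$, depend smoothly on all conditioning coordinates simultaneously (not merely on $z_i$), and that the image of $F_i$ stays bounded away from $\{0,1\}$ on compact sets so the composition with $\Phi^{-1}$ retains its regularity there. Both facts reduce to the global strict positivity of the normalizing denominator and of the Gaussian CDF; everything else is routine bookkeeping of compositions of smooth maps.
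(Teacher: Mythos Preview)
Your proposal is correct and follows essentially the same approach as the paper: establish smoothness of $\mathbf{g}$ from the explicit MoG conditional structure, show the Jacobian is lower triangular with strictly positive diagonal (hence nonvanishing), prove global bijectivity via the sequential inversion, and then combine local diffeomorphism with bijectivity to conclude. Your convex-combination argument that $F_i(z_i\mid\rvz_{<i})\in(0,1)$ strictly is in fact slightly more explicit than the paper's treatment of why $\Phi^{-1}$ can be composed smoothly.
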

  
  \begin{proof}
  Our proof strategy is to first show that $\mathbf{g}$ is a local diffeomorphism everywhere on $\mathbb{R}^d$ and then establish that it is also a global bijection. These two properties together imply that $\mathbf{g}$ is a global diffeomorphism.
  
  \textit{Local Diffeomorphism.}
  A map is a local $C^\infty$-diffeomorphism at a point if it is smooth ($C^\infty$) in a neighborhood of that point and its Jacobian determinant is non-zero at that point.
  
  \textit{(a) Smoothness of $\mathbf{g}$:} The map $\mathbf{g}$ is smooth if each component function $u_i = g_i(\rvz)$ is smooth. The function $g_i = \Phi^{-1} \circ F_i$ is a composition of functions. The conditional density $p(z_i | \rvz_{<i})$ is a ratio of marginal densities of the MoG. Since marginals of a MoG are themselves finite sums of Gaussian PDFs, they are smooth functions. As the denominator is strictly positive, $p(z_i | \rvz_{<i})$ is a smooth function of all its arguments, $(z_i, \rvz_{<i})$. By differentiation under the integral sign, its CDF $F_i(z_i | \rvz_{<i})$ is also smooth. Since $\Phi^{-1}$ is smooth on its domain, the composition $g_i$ is smooth. Thus, the overall map $\mathbf{g}$ is smooth on $\mathbb{R}^d$.
  
  \textit{(b) Non-Vanishing Jacobian Determinant:} The Jacobian matrix $J_{\mathbf{g}}(\rvz)$ is lower-triangular due to the sequential construction of $\mathbf{g}$. Its determinant is the product of its diagonal entries:
  \[
  \det J_{\mathbf{g}}(\rvz) = \prod_{i=1}^d \frac{\partial u_i}{\partial z_i} = \prod_{i=1}^d \frac{p(z_i|\rvz_{<i})}{\phi(u_i)}.
  \]
  Since both the conditional density $p(z_i|\rvz_{<i})$ and the standard normal PDF $\phi(u_i)$ are strictly positive everywhere, each diagonal term is strictly positive. Consequently, $\det J_{\mathbf{g}}(\rvz) > 0$ for all $\rvz \in \mathbb{R}^d$.
  
  By the Inverse Function Theorem, since $\mathbf{g}$ is smooth and its Jacobian determinant is non-zero everywhere, $\mathbf{g}$ is a local $C^\infty$-diffeomorphism at every point in $\mathbb{R}^d$.
  
  \textit{Global Bijectivity.}
  We show that $\mathbf{g}$ is a bijection by constructing a unique inverse $\rvz = \mathbf{g}^{-1}(\mathbf{u})$ for any $\mathbf{u} \in \mathbb{R}^d$. The inverse is found by solving $\Phi(u_i) = F_i(z_i | \rvz_{<i})$ for each $z_i$ sequentially. For any $i=1, \dots, d$, and for any fixed, previously determined context $\rvz_{<i}$, the function $F_i(\cdot | \rvz_{<i})$ is a strictly increasing bijection from $\mathbb{R} \to (0,1)$. Thus, a unique solution $z_i = F_i^{-1}(\Phi(u_i) | \rvz_{<i})$ exists at each step. This inductive construction yields a unique $\rvz$ for any $\mathbf{u}$, proving that $\mathbf{g}$ is a global bijection.
  
  A map that is a local diffeomorphism at every point and is also a global bijection is a global diffeomorphism. Having established both properties for $\mathbf{g}$, we conclude that it is a global $C^\infty$-diffeomorphism from $\mathbb{R}^d$ to $\mathbb{R}^d$. This confirms that the transformation is smooth, globally invertible with a smooth inverse, and possesses a well-defined, non-zero Jacobian determinant, making it a robust and theoretically sound building block for normalizing flows.
  \end{proof}

{

\section{Detailed Derivation of a Flow Layer's Learning Objective}
\label{app:flow_layer_objective_derivation}

This appendix provides a detailed, step-by-step derivation of the learning objective for a single layer within the normalizing flow prior. The aim is to elucidate how the optimization process shapes the behavior of each transformation in the flow.

We consider a generic invertible transformation $f_t^{(\ell)}: \R^d \to \R^d$ representing a single layer $\ell$ at time step $t$ in the flow. Let $\rvx = \rvh_t^{(\ell-1)}$ be the input to this layer and $\rvy = \rvh_t^{(\ell)}$ be its output, so $\rvy = f_t^{(\ell)}(\rvx; \text{cond}_t^{(\ell)})$. The function $f_t^{(\ell)}$ is parameterized, and its parameters (and thus its behavior) are learned based on a conditioning context $\text{cond}_t^{(\ell)}$. For brevity in the derivations that follow, we will often drop the explicit indices $(t, \ell)$ and the conditioning context, writing $f(\rvx)$, $p_{\text{model}}(\rvx)$, etc., with the understanding that these are specific to a layer and its context.

\paragraph{The Log-Determinant of the Jacobian}
The change of variables formula relates the probability density of $\rvx$ to the probability density of $\rvy$:
\begin{equation}
p_X(\rvx) = p_Y(f(\rvx)) \left| \det J_f(\rvx) \right|
\label{eq:app_change_of_variables}
\end{equation}
where $p_X$ is the density of $\rvx$, $p_Y$ is the density of $\rvy$, and $J_f(\rvx)$ is the Jacobian matrix of $f$ evaluated at $\rvx$.

Normalizing flow layers like those discussed in Sections~\ref{sec:scalar_wise_prior_and_flow} and~\ref{sec:vector_wise_prior_and_flow} (e.g., Mixture-of-CDFs or transformations based on $d$-dimensional mixtures) are constructed such that they define a conditional model density for their input. Let $p_{\text{model}}(\rvx | \text{cond})$ be this parameterized density that the layer learns for its input $\rvx$. The transformation $f$ is designed such that if $\rvx$ were drawn from $p_{\text{model}}(\rvx | \text{cond})$, its output $\rvy = f(\rvx)$ would be drawn from a simpler, fixed target density, $p_{\text{target}}(\rvy)$ (typically a standard Gaussian).
Substituting $p_X(\rvx) = p_{\text{model}}(\rvx | \text{cond})$ and $p_Y(\rvy) = p_{\text{target}}(\rvy)$ into Eq.~\eqref{eq:app_change_of_variables}, we get:
\begin{equation}
p_{\text{model}}(\rvx | \text{cond}) = p_{\text{target}}(f(\rvx)) \left| \det J_f(\rvx) \right|
\end{equation}
Rearranging and taking the logarithm gives the expression for the log absolute Jacobian determinant:
\begin{equation}
\log \left| \det J_f(\rvx) \right| = \log p_{\text{model}}(\rvx | \text{cond}) - \log p_{\text{target}}(f(\rvx))
\label{eq:app_log_jacobian_form}
\end{equation}
This equation matches the form of ~\eqref{eq:elbo_t_stacked_flows_main_revised} in the main text. It shows that the log-determinant term, crucial for computing the density of $\rvz_{1:T}$, is composed of the log-likelihood of the input $\rvx$ under the layer's learned model $p_{\text{model}}$ and the log-likelihood of the output $\rvy=f(\rvx)$ under the fixed target density $p_{\text{target}}$.

\paragraph{The Expected Log-Jacobian in the ELBO}
The parameters of the entire model, including those defining $p_{\text{model}}(\rvx | \text{cond})$ for each layer, are learned by maximizing the Evidence Lower Bound (ELBO), as shown in Eq.~\eqref{eq:elbo_general}. The ELBO includes the term $\mathbb{E}_{\rvz_{1:T} \sim q(\cdot|\rvx_{1:T})} [\log p(\rvz_{1:T})]$. The log-prior $\log p(\rvz_{1:T})$ is given by a sum of log-base-density terms and log-determinant terms from each layer:
\begin{equation}
\log p(\rvz_{1:T}) = \sum_{t=1}^T \left( \log p_{\text{base}}(\rvu_t) + \sum_{\ell=1}^L \log \left| \det J_{f_t^{(\ell)}}(\rvh_t^{(\ell-1)}) \right| \right)
\end{equation}
Let $q_{\text{in}}(\rvx)$ denote the actual distribution of the input $\rvx = \rvh_t^{(\ell-1)}$ to a specific layer $f \equiv f_t^{(\ell)}$. This distribution $q_{\text{in}}(\rvx)$ is induced by the input data $\rvx_{1:T}$, the encoder $q(\rvz_{1:T}|\rvx_{1:T})$, and all preceding flow transformations $f^{(1)}, \ldots, f^{(\ell-1)}$.
The ELBO maximization objective, with respect to the contribution of this layer's Jacobian to the prior, involves maximizing:
\begin{equation}
\mathcal{J}_{\text{layer}} = \mathbb{E}_{q_{\text{in}}(\rvx)} \left[ \log \left| \det J_f(\rvx) \right| \right]
\end{equation}
Substituting Eq.~\eqref{eq:app_log_jacobian_form}:
\begin{align}
\mathcal{J}_{\text{layer}} &= \mathbb{E}_{q_{\text{in}}(\rvx)} \left[ \log p_{\text{model}}(\rvx | \text{cond}) - \log p_{\text{target}}(f(\rvx)) \right] \\
&= \mathbb{E}_{q_{\text{in}}(\rvx)} \left[ \log p_{\text{model}}(\rvx | \text{cond}) \right] - \mathbb{E}_{q_{\text{in}}(\rvx)} \left[ \log p_{\text{target}}(f(\rvx)) \right]
\label{eq:app_expected_logjac_separated}
\end{align}
Let $q_{\text{out}}(\rvy)$ be the actual distribution of the output $\rvy = f(\rvx)$ when $\rvx \sim q_{\text{in}}(\rvx)$. The second term in Eq.~\eqref{eq:app_expected_logjac_separated} can be rewritten using this definition:
\begin{equation}
\mathbb{E}_{q_{\text{in}}(\rvx)} \left[ \log p_{\text{target}}(f(\rvx)) \right] = \mathbb{E}_{q_{\text{out}}(\rvy)} \left[ \log p_{\text{target}}(\rvy) \right]
\end{equation}
So, Eq.~\eqref{eq:app_expected_logjac_separated} becomes:
\begin{equation}
\mathcal{J}_{\text{layer}} = \mathbb{E}_{q_{\text{in}}(\rvx)} \left[ \log p_{\text{model}}(\rvx | \text{cond}) \right] - \mathbb{E}_{q_{\text{out}}(\rvy)} \left[ \log p_{\text{target}}(\rvy) \right]
\label{eq:app_expected_logjac_final_form}
\end{equation}

\paragraph{Relating to KL Divergence and Differential Entropy}
We use the definitions of differential entropy $H(q)$ and Kullback-Leibler (KL) divergence $\mathrm{KL}(q \| p)$:
\begin{align}
H(q) &= -\mathbb{E}_{q(\rvz)}[\log q(\rvz)] = -\int q(\rvz) \log q(\rvz) d\rvz \\
\mathrm{KL}(q \| p) &= \mathbb{E}_{q(\rvz)}\left[\log \frac{q(\rvz)}{p(\rvz)}\right] = \int q(\rvz) \log \frac{q(\rvz)}{p(\rvz)} d\rvz
\end{align}
From the definition of KL divergence, we have $\mathrm{KL}(q \| p) = \mathbb{E}_{q(\rvz)}[\log q(\rvz)] - \mathbb{E}_{q(\rvz)}[\log p(\rvz)] = -H(q) - \mathbb{E}_{q(\rvz)}[\log p(\rvz)]$.
Rearranging this gives a useful identity for an expected log-likelihood term:
\begin{equation}
\mathbb{E}_{q(\rvz)}[\log p(\rvz)] = -H(q) - \mathrm{KL}(q \| p)
\label{eq:app_expected_logp_identity}
\end{equation}
Applying Eq.~\eqref{eq:app_expected_logp_identity} to the two terms in Eq.~\eqref{eq:app_expected_logjac_final_form}:
\begin{align}
\mathbb{E}_{q_{\text{in}}(\rvx)} \left[ \log p_{\text{model}}(\rvx | \text{cond}) \right] &= -H(q_{\text{in}}) - \mathrm{KL}(q_{\text{in}} \| p_{\text{model}}) \\
\mathbb{E}_{q_{\text{out}}(\rvy)} \left[ \log p_{\text{target}}(\rvy) \right] &= -H(q_{\text{out}}) - \mathrm{KL}(q_{\text{out}} \| p_{\text{target}})
\end{align}
Substituting these back into the expression for $\mathcal{J}_{\text{layer}}$:
\begin{align}
\mathcal{J}_{\text{layer}} &= \left[ -H(q_{\text{in}}) - \mathrm{KL}(q_{\text{in}} \| p_{\text{model}}) \right] - \left[ -H(q_{\text{out}}) - \mathrm{KL}(q_{\text{out}} \| p_{\text{target}}) \right] \\
&= H(q_{\text{out}}) - H(q_{\text{in}}) - \mathrm{KL}(q_{\text{in}} \| p_{\text{model}}) + \mathrm{KL}(q_{\text{out}} \| p_{\text{target}})
\label{eq:app_expanded_expected_jacobian_detailed_full}
\end{align}
This matches the main text.

\paragraph{The KL Divergence Identity for Normalizing Flows}
A key property of normalizing flows is the invariance of KL divergence under invertible transformations, specifically relating the KL divergence at the input of a layer to the KL divergence at its output. We aim to show:
\begin{equation}
\mathrm{KL}(q_{\text{in}} \| p_{\text{model}}) = \mathrm{KL}(q_{\text{out}} \| p_{\text{target}})
\label{eq:app_flow_kl_identity_detailed_full}
\end{equation}
Recall that $q_{\text{in}}(\rvx)$ is the actual density of the layer's input and $q_{\text{out}}(\rvy)$ is the actual density of the layer's output, where $\rvy = f(\rvx)$. These are related by the change of variables formula:
\begin{equation}
q_{\text{in}}(\rvx) = q_{\text{out}}(f(\rvx)) \left| \det J_f(\rvx) \right|
\label{eq:app_q_change_of_variables}
\end{equation}
Similarly, $p_{\text{model}}(\rvx | \text{cond})$ is the density parameterized by the layer for its input, and $p_{\text{target}}(\rvy)$ is the fixed target density for the output. The transformation $f$ is constructed such that these are also related by the change of variables formula:
\begin{equation}
p_{\text{model}}(\rvx | \text{cond}) = p_{\text{target}}(f(\rvx)) \left| \det J_f(\rvx) \right|
\label{eq:app_p_change_of_variables}
\end{equation}
Now, consider the ratio $\frac{q_{\text{in}}(\rvx)}{p_{\text{model}}(\rvx | \text{cond})}$:
\begin{equation}
\frac{q_{\text{in}}(\rvx)}{p_{\text{model}}(\rvx | \text{cond})} = \frac{q_{\text{out}}(f(\rvx)) \left| \det J_f(\rvx) \right|}{p_{\text{target}}(f(\rvx)) \left| \det J_f(\rvx) \right|} = \frac{q_{\text{out}}(f(\rvx))}{p_{\text{target}}(f(\rvx))}
\label{eq:app_ratio_identity}
\end{equation}
The KL divergence $\mathrm{KL}(q_{\text{in}} \| p_{\text{model}})$ is defined as:
\begin{equation}
\mathrm{KL}(q_{\text{in}} \| p_{\text{model}}) = \int q_{\text{in}}(\rvx) \log \frac{q_{\text{in}}(\rvx)}{p_{\text{model}}(\rvx | \text{cond})} d\rvx
\end{equation}
Substitute Eq.~\eqref{eq:app_ratio_identity} into the logarithm:
\begin{equation}
\mathrm{KL}(q_{\text{in}} \| p_{\text{model}}) = \int q_{\text{in}}(\rvx) \log \frac{q_{\text{out}}(f(\rvx))}{p_{\text{target}}(f(\rvx))} d\rvx
\end{equation}
Now, perform a change of variables in the integration from $\rvx$ to $\rvy = f(\rvx)$. We have $\rvx = f^{-1}(\rvy)$, and $d\rvx = \left| \det J_{f^{-1}}(\rvy) \right| d\rvy$. The integral becomes:
\begin{equation}
\mathrm{KL}(q_{\text{in}} \| p_{\text{model}}) = \int q_{\text{in}}(f^{-1}(\rvy)) \log \frac{q_{\text{out}}(\rvy)}{p_{\text{target}}(\rvy)} \left| \det J_{f^{-1}}(\rvy) \right| d\rvy
\end{equation}
From the change of variables for $q_{\text{in}}$ (Eq.~\eqref{eq:app_q_change_of_variables}, rearranged for $f^{-1}$), we know that $q_{\text{in}}(f^{-1}(\rvy)) \left| \det J_{f^{-1}}(\rvy) \right| = q_{\text{out}}(\rvy)$.
Substituting this into the integral:
\begin{equation}
\mathrm{KL}(q_{\text{in}} \| p_{\text{model}}) = \int q_{\text{out}}(\rvy) \log \frac{q_{\text{out}}(\rvy)}{p_{\text{target}}(\rvy)} d\rvy
\end{equation}
The right-hand side is, by definition, $\mathrm{KL}(q_{\text{out}} \| p_{\text{target}})$. Thus, the identity in Eq.~\eqref{eq:app_flow_kl_identity_detailed_full} is established.

\paragraph{Final Expression for Expected Log-Jacobian}
Substituting the KL divergence identity (Eq.~\eqref{eq:app_flow_kl_identity_detailed_full}) into the expanded expression for $\mathcal{J}_{\text{layer}}$ (Eq.~\eqref{eq:app_expanded_expected_jacobian_detailed_full}):
\begin{align}
\mathcal{J}_{\text{layer}} &= H(q_{\text{out}}) - H(q_{\text{in}}) - \mathrm{KL}(q_{\text{in}} \| p_{\text{model}}) + \mathrm{KL}(q_{\text{in}} \| p_{\text{model}}) \\
&= H(q_{\text{out}}) - H(q_{\text{in}})
\label{eq:app_final_expected_jacobian_detailed_full}
\end{align}
This confirms the results the main text: the expected log-determinant of the Jacobian for an invertible transformation is the change in differential entropy from its input distribution to its output distribution.

\paragraph{Interpretation of the Learning Objective for a Single Layer}
The overall ELBO maximization drives the learning of parameters $\vtheta$, which include the parameters defining $p_{\text{model}}(\rvx | \text{cond})$ for each layer. The contribution of a single layer's Jacobian to the ELBO is $\mathcal{J}_{\text{layer}} = H(q_{\text{out}}) - H(q_{\text{in}})$.

However, the parameters of the current layer $f$ directly influence $p_{\text{model}}(\rvx | \text{cond})$. To maximize $\mathcal{J}_{\text{layer}}$, consider its expression before the KL terms cancel:
$\mathcal{J}_{\text{layer}} = H(q_{\text{out}}) - H(q_{\text{in}}) - \mathrm{KL}(q_{\text{in}} \| p_{\text{model}}) + \mathrm{KL}(q_{\text{out}} \| p_{\text{target}})$.
Using the identity $\mathrm{KL}(q_{\text{in}} \| p_{\text{model}}) = \mathrm{KL}(q_{\text{out}} \| p_{\text{target}})$, this is also:
$\mathcal{J}_{\text{layer}} = \mathbb{E}_{q_{\text{in}}(\rvx)} \left[ \log p_{\text{model}}(\rvx | \text{cond}) \right] - \mathbb{E}_{q_{\text{out}}(\rvy)} \left[ \log p_{\text{target}}(\rvy) \right]$.

The optimization process adjusts the parameters of $f$ (which define $p_{\text{model}}$) to maximize this quantity.
Maximizing $\mathbb{E}_{q_{\text{in}}(\rvx)} \left[ \log p_{\text{model}}(\rvx | \text{cond}) \right]$ is equivalent to minimizing $\mathrm{KL}(q_{\text{in}} \| p_{\text{model}})$ (since $-H(q_{\text{in}})$ does not depend on the parameters of the current layer $f$, but rather on the data and preceding layers/encoder).
When $\mathrm{KL}(q_{\text{in}} \| p_{\text{model}})$ is minimized, $p_{\text{model}}(\rvx | \text{cond})$ becomes a good approximation of the actual input distribution $q_{\text{in}}(\rvx)$.

Due to the identity $\mathrm{KL}(q_{\text{in}} \| p_{\text{model}}) = \mathrm{KL}(q_{\text{out}} \| p_{\text{target}})$, minimizing $\mathrm{KL}(q_{\text{in}} \| p_{\text{model}})$ simultaneously implies minimizing $\mathrm{KL}(q_{\text{out}} \| p_{\text{target}})$. This means that the actual output distribution $q_{\text{out}}(\rvy)$ is driven to match the fixed target distribution $p_{\text{target}}(\rvy)$.

In summary, each flow layer learns to:
\begin{enumerate}
\item \textbf{Model its input distribution}: The layer's parameters (defining $p_{\text{model}}(\rvx | \text{cond})$) are adjusted so that $p_{\text{model}}(\rvx | \text{cond})$ accurately represents the distribution $q_{\text{in}}(\rvx)$ of the data it receives. This corresponds to minimizing $\mathrm{KL}(q_{\text{in}} \| p_{\text{model}})$.
\item \textbf{Transform its input to a target distribution}: As a consequence of (1) and the construction of $f$, the layer transforms its input $\rvx$ into an output $\rvy$ whose distribution $q_{\text{out}}(\rvy)$ closely matches the predefined target distribution $p_{\text{target}}(\rvy)$. This corresponds to minimizing $\mathrm{KL}(q_{\text{out}} \| p_{\text{target}})$.
\end{enumerate}
This step-by-step process, repeated through the stack of flow layers, allows the overall normalizing flow to transform a complex data distribution into a simple base distribution (e.g., standard Gaussian), while enabling exact density calculation.
}

{
\section{$1$-D Mixture-of-Gaussian and Mixture-of-Logistics Coupling}
\label{ap:mog_and_mol_coupling}

The analysis in main text show that can be AR MoG prob distribution interpreted as an implicit normalizing flow layer. This layer utilizes the cumulative distribution function (CDF) of the Gaussian mixture to map between the data space $\rvu_t$ and a base space (implicitly standard normal). Similar transformations, built explicitly around mixture CDFs, are employed in various normalizing flow architectures. This section provides a detailed comparison between the transformation implicitly defined by our model and common explicit parameterizations, establishing their functional equivalence. We will consistently analyze the transformations in the **data-to-latent** direction for clarity, and then briefly discuss the inverse (latent-to-data) direction.

For conciseness within this section, we may drop the time index $t$ and dimension index $i$ from variables and parameters when the context is clear (e.g., using $x$ for a single data dimension, $\epsilon$ or $w$ for the corresponding latent dimension, and $\pi_k, \mu_k, \sigma_k$ for the mixture parameters predicted based on history). We maintain the standard normal PDF $\varphi(\cdot)$ and CDF $\Phi(\cdot)$, and the standard logistic CDF $\sigma(r) = 1/(1+e^{-r})$ and its inverse, the logit function $\sigma^{-1}(p) = \log(p/(1-p))$.

\paragraph{The Core Component: Gaussian Mixture CDF.}
Both the implicit and explicit layers rely on the same core component: the CDF of the Gaussian mixture distribution. For a single data dimension $x$, the mixture CDF $F_{\text{mix}}(x)$, conditioned on history (parameters $\pi_k, \mu_k, \sigma_k$), is:
\begin{equation}
F_{\text{mix}}(x) = \sum_{k=1}^{V'} \pi_k \Phi\left( \frac{x - \mu_k}{\sigma_k} \right)
\label{eq:cdf_scalar_revised}
\end{equation}
The derivative of this CDF with respect to $x$ is the probability density function (PDF) of the mixture:
\begin{equation}
m(x) = \frac{d F_{\text{mix}}(x)}{dx} = \sum_{k=1}^{V'} \pi_k \varphi\left( \frac{x - \mu_k}{\sigma_k} \right) \frac{1}{\sigma_k}
\label{eq:pdf_scalar_revised}
\end{equation}
This mixture PDF $m(x)$ plays a key role in the Jacobian determinant calculations.

\paragraph{Layer Definitions and Jacobians (Data-to-Latent Direction).}
We now define two primary variants of the mixture CDF transformation layer, both mapping the data variable $x$ to a latent variable, but differing in the choice of outer non-linearity and corresponding base distribution.

\paragraph{MoG Variant: ProbitMixtureCDF}
This layer corresponds to the transformation $g_{t,i}$ identified in Section~\cref{sec:vector_wise_prior_and_flow}. It maps the data $x$ to a latent variable $z$, assumed to follow a standard normal distribution $\mathcal{N}(0,1)$.
\begin{equation}
z = \Phi^{-1}(F_{\text{mix}}(x))
\label{eq:probit_mixcdf_def}
\end{equation}
Its Jacobian determinant, $\partial z / \partial x$, is calculated using the chain rule and properties of $\Phi^{-1}$ and $F_{\text{mix}}$ :
\begin{equation}
\frac{\partial z}{\partial x} = \frac{m(x)}{\varphi(z)}
\end{equation}
The log-Jacobian determinant is therefore:
\begin{equation}
\log \left| \frac{\partial z}{\partial x} \right| = \log m(x) - \log \varphi(z)
\label{eq:logdet_probit_mixcdf}
\end{equation}

\paragraph{MoL Variant: LogitMixtureCDF}
Explicit flow layers often use the logit function $\sigma^{-1}$ as the outer non-linearity, mapping the probability $F_{\text{mix}}(x)$ to a latent variable $w$, typically assumed to follow a standard logistic distribution. We consider the core transformation without additional affine terms for now:
\begin{equation}
w = \sigma^{-1}(F_{\text{mix}}(x))
\label{eq:logit_mixcdf_def}
\end{equation}
Its Jacobian determinant, $\partial w / \partial x$, is calculated using the chain rule. The derivative of $\sigma^{-1}(p)$ is $1/(p(1-p))$. The PDF of the standard logistic distribution is $p_{\text{logistic}}(w) = \sigma'(w) = \sigma(w)(1-\sigma(w))$. Since $F_{\text{mix}}(x) = \sigma(w)$, we have $F_{\text{mix}}(x)(1-F_{\text{mix}}(x)) = p_{\text{logistic}}(w)$.
\begin{equation}
\frac{\partial w}{\partial x} = \frac{d \sigma^{-1}(p)}{dp} \Bigg|_{p=F_{\text{mix}}(x)} \cdot \frac{d F_{\text{mix}}(x)}{dx} = \frac{1}{F_{\text{mix}}(x)(1-F_{\text{mix}}(x))} \cdot m(x) = \frac{m(x)}{p_{\text{logistic}}(w)}
\end{equation}
The log-Jacobian determinant is therefore:
\begin{equation}
\log \left| \frac{\partial w}{\partial x} \right| = \log m(x) - \log p_{\text{logistic}}(w)
\label{eq:logdet_logit_mixcdf}
\end{equation}

\paragraph{Comparison in the Data-to-Latent Direction.}

Table~\ref{tab:mixcdf_comparison_data_latent} summarizes the two variants when viewed in the data-to-latent direction.

\begin{table}[ht]
\caption{Comparison of Mixture CDF Flow Variants (Data $x \to$ Latent).}
\label{tab:mixcdf_comparison_data_latent}
\centering
\begin{tabular}{lccc}
\toprule
\textbf{Layer Variant} & \textbf{Transformation} & \textbf{Latent Base Dist.} & \textbf{Log-Jacobian} $\log |\partial(\text{latent}) / \partial x|$ \\
\midrule
ProbitMixtureCDF & $z = \Phi^{-1}(F_{\text{mix}}(x))$ & $\mathcal{N}(0,1)$ & $\log m(x) - \log \varphi(z)$ \\
LogitMixtureCDF (core) & $w = \sigma^{-1}(F_{\text{mix}}(x))$ & Logistic(0,1) & $\log m(x) - \log p_{\text{logistic}}(w)$ \\
\bottomrule
\end{tabular}
\end{table}

Both transformations share the same initial structure involving $F_{\text{mix}}(x)$ and yield a log-Jacobian determinant of the form $\log m(x) - \log p_{\text{base}}(\text{latent})$, where $p_{\text{base}}$ is the PDF of the corresponding base distribution (Gaussian or Logistic). The difference lies solely in the choice of the outer bijection ($\Phi^{-1}$ vs $\sigma^{-1}$) and the associated base distribution.

\paragraph{Connecting the Variants via Re-parameterization.}

The Probit and Logit variants are functionally equivalent because the standard normal and standard logistic distributions can be transformed into one another via a simple, smooth, invertible function. Define the map $g: \mathbb{R} \to \mathbb{R}$ that converts a standard logistic variable $w$ into a standard normal variable $z$:
\begin{equation}
z = g(w) = \Phi^{-1}(\sigma(w))
\label{eq:g_logit_to_probit_revised}
\end{equation}
The inverse map $g^{-1}: \mathbb{R} \to \mathbb{R}$ converts a standard normal variable $z$ into a standard logistic variable $w$:
\begin{equation}
w = g^{-1}(z) = \sigma^{-1}(\Phi(z))
\label{eq:g_inv_probit_to_logit_revised}
\end{equation}
These functions act as translators between the two latent spaces.

We can demonstrate the equivalence by composing one layer type with the appropriate function $g$ or $g^{-1}$:
\begin{itemize}
\item \textbf{LogitMixtureCDF followed by $g$}: If we take the output $w$ from the LogitMixtureCDF layer (Eq.~\eqref{eq:logit_mixcdf_def}) and apply $g$, we get:
\begin{equation}
  g(w) = g(\sigma^{-1}(F_{\text{mix}}(x))) = \Phi^{-1}(\sigma(\sigma^{-1}(F_{\text{mix}}(x)))) = \Phi^{-1}(F_{\text{mix}}(x))
\end{equation}
This result is exactly the output $z$ of the ProbitMixtureCDF layer (Eq.~\eqref{eq:probit_mixcdf_def}).
\item \textbf{ProbitMixtureCDF followed by $g^{-1}$}: If we take the output $z$ from the ProbitMixtureCDF layer (Eq.~\eqref{eq:probit_mixcdf_def}) and apply $g^{-1}$, we get:
\begin{equation}
  g^{-1}(z) = g^{-1}(\Phi^{-1}(F_{\text{mix}}(x))) = \sigma^{-1}(\Phi(\Phi^{-1}(F_{\text{mix}}(x)))) = \sigma^{-1}(F_{\text{mix}}(x))
\end{equation}
This result is exactly the output $w$ of the LogitMixtureCDF layer (Eq.~\eqref{eq:logit_mixcdf_def}).
\end{itemize}
This shows that the two layer types are related by composition with the fixed, invertible function $g$. In the context of normalizing flows, composing with such a function simply constitutes another valid flow layer, effectively re-parameterizing the latent space without changing the transformation's overall expressive power on the data $x$.

\paragraph{Likelihood Equivalence.}

The equivalence is further confirmed by examining the data log-likelihood $\log p(x)$ induced by each transformation. Using the change of variables formula for the data-to-latent direction $z = T^{-1}(x)$: $\log p(x) = \log p_{\text{base}}(z) + \log |\det J_{T^{-1}}(x)|$.
\begin{itemize}
\item \textbf{ProbitMixtureCDF:}
\begin{align}
  \log p(x) &= \log p_{\text{gaussian}}(z) + \log \left| \frac{\partial z}{\partial x} \right| \nonumber \\
  &= \log \varphi(z) + (\log m(x) - \log \varphi(z)) = \log m(x)
\end{align}
\item \textbf{LogitMixtureCDF:}
\begin{align}
  \log p(x) &= \log p_{\text{logistic}}(w) + \log \left| \frac{\partial w}{\partial x} \right| \nonumber \\
  &= \log p_{\text{logistic}}(w) + (\log m(x) - \log p_{\text{logistic}}(w)) = \log m(x)
\end{align}
\end{itemize}
Both formulations yield $\log p(x) = \log m(x)$, the log-PDF of the Gaussian mixture model defining the transformation. The contribution from the base density's log-PDF always exactly cancels the second term in the layer's log-Jacobian determinant. This confirms that both variants define the \textit{exact same probability distribution} over the data $x$, given the same mixture parameters $(\pi_k, \mu_k, \sigma_k)$. Consequently, the gradients with respect to these parameters during training will be identical.

\paragraph{Handling Affine Transformations.}

Explicit layers like the GaussMixCDF snippet often include an affine transformation $y = e^a w + b$ applied after the core logit transformation $w = \sigma^{-1}(F_{\text{mix}}(x))$. This affine transformation is itself an invertible flow layer with a simple constant log-Jacobian determinant:
\begin{equation}
\log \left| \frac{\partial y}{\partial w} \right| = \log |e^a| = a
\end{equation}
This term simply adds $a$ to the total log-determinant of the composed flow ($x \to w \to y$). Such affine transformations do not change the core functionality related to the mixture CDF and can be treated as separate layers (like Batch Normalization or ActNorm layers) or fused for implementation. Their presence or absence does not affect the fundamental equivalence discussed above.

\paragraph{Latent-to-Data Direction.}

For completeness, we consider the inverse (generative) direction, mapping the latent variable back to the data $x$.

\textit{MoG Variant: ProbitMixtureCDF Inverse}
This corresponds to the transformation $h$:
\begin{equation}
x = h(z) = F_{\text{mix}}^{-1}(\Phi(z))
\end{equation}
The log-Jacobian is the negative of the forward log-Jacobian (Eq.~\eqref{eq:logdet_probit_mixcdf}):
\begin{equation}
\log \left| \frac{\partial x}{\partial z} \right| = -(\log m(x) - \log \varphi(z)) = \log \varphi(z) - \log m(x)
\end{equation}

\textit{MoL Variant: LogitMixtureCDF Inverse}
Inverting $w = \sigma^{-1}(F_{\text{mix}}(x))$ gives $F_{\text{mix}}(x) = \sigma(w)$, so:
\begin{equation}
x = F_{\text{mix}}^{-1}(\sigma(w))
\end{equation}
The log-Jacobian is the negative of the forward log-Jacobian (Eq.~\eqref{eq:logdet_logit_mixcdf}):
\begin{equation}
\log \left| \frac{\partial x}{\partial w} \right| = -(\log m(x) - \log p_{\text{logistic}}(w)) = \log p_{\text{logistic}}(w) - \log m(x)
\end{equation}

Table~\ref{tab:mixcdf_comparison_latent_data} summarizes the inverse transformations.

\begin{table}[ht]
\caption{Comparison of Mixture CDF Flow Variants (Latent $\to$ Data $x$).}
\label{tab:mixcdf_comparison_latent_data}
\centering
\begin{tabular}{lccc}
\toprule
\textbf{Layer Variant} & \textbf{Transformation} & \textbf{Latent Base Dist.} & \textbf{Log-Jacobian} $\log |\partial x / \partial(\text{latent})|$ \\
\midrule
ProbitMixtureCDF Inv. & $x = F_{\text{mix}}^{-1}(\Phi(z))$ & $\mathcal{N}(0,1)$ & $\log \varphi(z) - \log m(x)$ \\
LogitMixtureCDF Inv. & $x = F_{\text{mix}}^{-1}(\sigma(w))$ & Logistic(0,1) & $\log p_{\text{logistic}}(w) - \log m(x)$ \\
\bottomrule
\end{tabular}
\end{table}

The transformation implicitly defined by our AR-MDN base prior is functionally equivalent to explicit Gaussian Mixture CDF flow layers found in the literature. Both leverage the mixture CDF $F_{\text{mix}}$ to perform a complex, data-dependent monotonic transformation. Differences in the choice of outer bijection (probit vs. logit) and associated base distributions (Gaussian vs. logistic), as well as optional affine components, amount to implementation choices or re-parameterizations connected by the simple diffeomorphism $g$. They represent the same class of transformations and induce the same probability density on the data $x$ for identical mixture parameters. Our VAE approach benefits from this expressive power implicitly by optimizing the mixture density $p(\rvu_t|\rvu_{<t}) = m(\rvu_t)$ directly, avoiding the potentially costly computation of $F_{\text{mix}}^{-1}$, $\Phi^{-1}$, or $\sigma^{-1}$ during training.
}

{
\section{Additional Experimental Results}
\label{ap:exp_details}

This section provides supplementary experimental details and results that support the findings presented in the main body of the paper.

\cref{fig:loss_comparison_affine_vs_mixture_full} offers a more complete view of the training and validation loss curves, expanding on the comparison shown in \cref{fig:loss_comparison_affine_vs_mixture} of the main text. These curves illustrate the learning progression for different model configurations. The results underscore the benefit of incorporating mixture-based coupling layers, which exhibit improved convergence and final performance (lower negative ELBO) compared to configurations relying solely on standard affine coupling layers. This observation reinforces the point made in the main text regarding the importance of mixture couplings for effectively modeling discrete text data within our continuous latent space framework.

\begin{figure}[ht]
\centering
\includegraphics[width=0.7\linewidth]{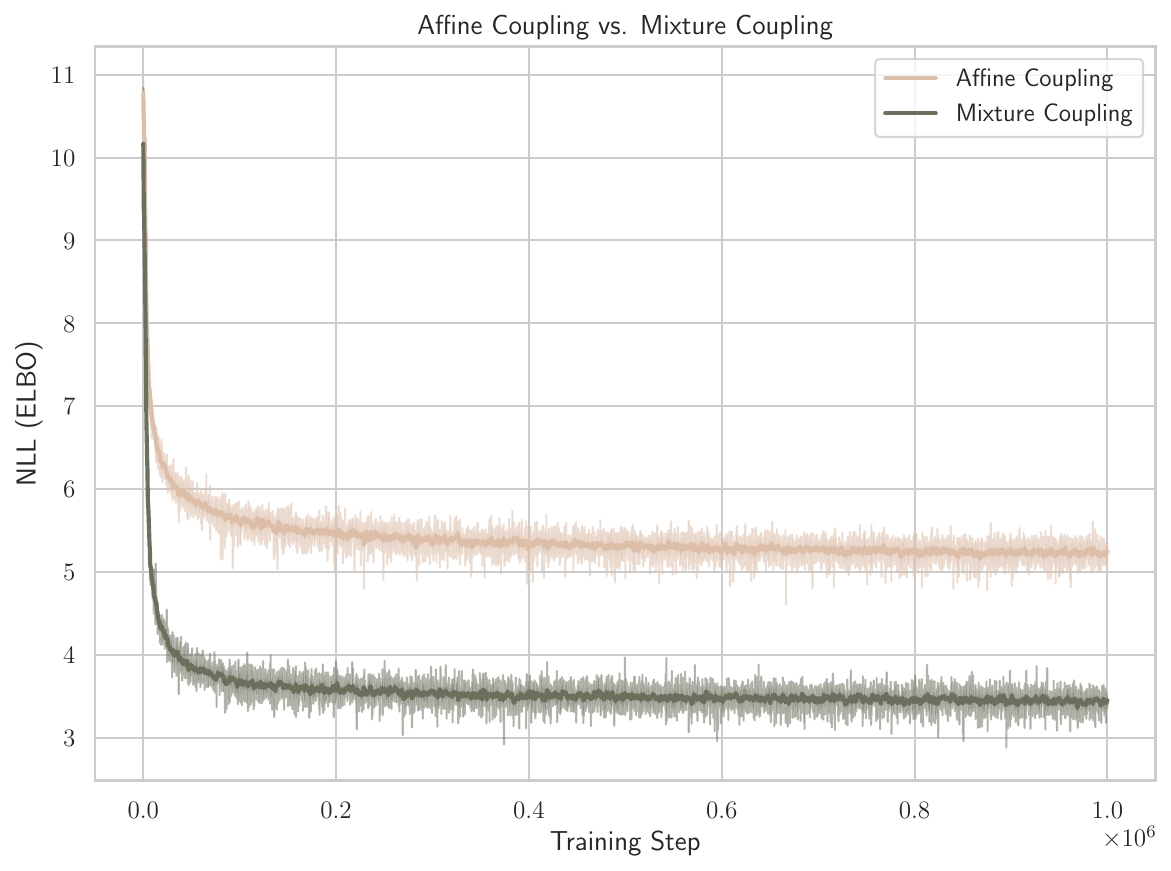}
\caption{Comparison of training and validation loss curves across different model configurations. Our mixture-based coupling layers demonstrate improved convergence compared to standard affine coupling.}
\label{fig:loss_comparison_affine_vs_mixture_full}
\end{figure}

To further clarify the mechanics of our proposed model variants, \cref{alg:d_dim_mixture_forward_transform_main} details the forward transformation process for the $d$-dimensional mixture flow layer. This transformation is a core component of the `Mix-d` model configuration (discussed in Section~\ref{sec:vector_wise_prior_and_flow}), enabling the direct modeling of token-wise dependencies in a $d$-dimensional latent space using a mixture of Gaussians. The algorithm outlines how an input vector $\rvz$ is mapped to an output vector $\rvu$ that follows a standard Gaussian distribution, through a sequence of conditional 1D mixture-CDF transformations.

\cref{alg:channel_mixing} describes the channel mixing and unmixing operations. These procedures are employed in the context of block-wise multi-token generation, as discussed in the "Flexible Patch Size" paragraph of the experiments section. By permuting latent dimensions between flow layers, channel mixing facilitates information exchange across different token positions within a patch, allowing the model to capture intra-patch dependencies more effectively as data propagates through the stack of flow transformations. The \texttt{ChannelUnmix} procedure is the exact inverse, ensuring the overall transformation remains bijective.

{
  \begin{algorithm}[t]
    \caption{Forward Transformation for $d$-Dimensional Mixture Flow Layer: $\rvu = \mathbf{g}_d(\rvz; \mathcal{C})$}
    \label{alg:d_dim_mixture_forward_transform_main}
    \begin{algorithmic}[1]
      \State \textbf{Input:} Data $\rvz = (z_1, \ldots, z_d) \in \R^d$; context $\mathcal{C}$; mixture parameters $\{\pi_j(\mathcal{C}), \mathbf{m}_j(\mathcal{C}), s_j^2(\mathcal{C})\}_{j=1}^K$.
      \State \textbf{Output:} Transformed variable $\rvu = (u_1, \ldots, u_d) \in \R^d$.
      \State Initialize weights: $\alpha_j^{(1)} \leftarrow \pi_j(\mathcal{C})$ for $j=1, \ldots, K$.
      \For{$i = 1, \ldots, d$}
      \State $m_{j,i} \leftarrow i$-th component of $\mathbf{m}_j(\mathcal{C})$; $s_j \leftarrow \sqrt{s_j^2(\mathcal{C})}$
      \State $p_{i}(z_i | \mathcal{C}, \rvz_{<i}) \leftarrow \sum_{j=1}^K \alpha_j^{(i)} \mathcal{N}(z_i; m_{j,i}, s_j^2)$
      \State $F_i(z_i | \mathcal{C}, \rvz_{<i}) \leftarrow \sum_{j=1}^K \alpha_j^{(i)} \Phi\left(\frac{z_i - m_{j,i}}{s_j}\right)$
      \State $u_i \leftarrow \Phi^{-1}(F_i(z_i | \mathcal{C}, \rvz_{<i}))$
      \If{$i < d$}
      \State $\alpha_j^{(i+1)} \leftarrow \frac{\alpha_j^{(i)} \mathcal{N}(z_i; m_{j,i}, s_j^2)}{p_{i}(z_i | \mathcal{C}, \rvz_{<i})}$ for $j=1, \ldots, K$.
      \EndIf
      \EndFor
      \State \textbf{Return} $\rvu$
    \end{algorithmic}
  \end{algorithm}
}

{
\begin{algorithm}[t]
\caption{Channel Mixing and Unmixing Operations}
\label{alg:channel_mixing}
\begin{algorithmic}[1]
\Procedure{ChannelMix}{$\mathbf{X}$}
\State $D \leftarrow \text{size of the last dimension of } \mathbf{X}$
\State $\mathbf{X}_{\text{even}} \leftarrow \text{elements of } \mathbf{X} \text{ at even indices along the last dimension}$
\State $\mathbf{X}_{\text{odd}} \leftarrow \text{elements of } \mathbf{X} \text{ at odd indices along the last dimension}$
\State $\mathbf{X}' \leftarrow \text{Concatenate}(\mathbf{X}_{\text{even}}, \mathbf{X}_{\text{odd}})$ along the last dimension.
\State \Return $\mathbf{X}'$
\EndProcedure
\Statex
\Procedure{ChannelUnmix}{$\mathbf{X}'$}
\Comment{$\mathbf{X}'$ is a tensor, e.g., $\in \mathbb{R}^{B \times T \times \text{PDim}}$ or $\in \mathbb{R}^{\dots \times D}$}
\State $D \leftarrow \text{size of the last dimension of } \mathbf{X}'$
\State \textbf{Assert} $D$ is an even number.
\State $D_{\text{half}} \leftarrow D / 2$
\State $\mathbf{X}'_{\text{first\_half}} \leftarrow \text{the first } D_{\text{half}} \text{ elements of } \mathbf{X}' \text{ along the last dimension}$
\State $\mathbf{X}'_{\text{second\_half}} \leftarrow \text{the last } D_{\text{half}} \text{ elements of } \mathbf{X}' \text{ along the last dimension}$
\State Initialize $\mathbf{X}_{\text{out}}$ with the same shape as $\mathbf{X}'$.
\State Place elements of $\mathbf{X}'_{\text{first\_half}}$ into the even-indexed positions of $\mathbf{X}_{\text{out}}$ along the last dimension.
\State Place elements of $\mathbf{X}'_{\text{second\_half}}$ into the odd-indexed positions of $\mathbf{X}_{\text{out}}$ along the last dimension.
\State \Return $\mathbf{X}_{\text{out}}$
\EndProcedure
\end{algorithmic}
\end{algorithm}

}

\section{Connection to Discrete Autoregressive Language Models}
\label{app:connection_discrete_lm}

A specific configuration of our continuous latent space framework reveals a direct relationship with conventional discrete autoregressive language models. This connection arises when we consider the model structure outlined in Section \ref{sec:continuous_latent_modeling} under particular assumptions for the prior $p(\rvz_{1:T})$ and the encoder/decoder.

Specifically, we focus on the token-wise autoregressive prior for $p(\rvz_{1:T})$ as factorized in Section \ref{sec:prior_modeling_and_flow_realization}, where each conditional $p(\rvz_t | \rvz_{<t})$ is modeled as a $d$-dimensional mixture of Gaussians. We adopt the formulation from Eq.~\eqref{eq:ar_mixture_shared_prior_conditional_structure_main}, where a shared codebook of Gaussian components is used.
The key assumptions for this connection are:
     \textbf{Tied Prior and Encoder Components:} The $V$ Gaussian components $\{\bm{\mu}_{k}^{\text{prior}}, (\sigma_{k}^{\text{prior}})^{2}\}_{k=1}^{V}$ used in the prior $p(\rvz_t | \rvz_{<t})$ are identical to the VAE encoder's Gaussian components $\{\vmu_k, \sigma_k^2\}_{k=1}^V$ defined in Section \ref{sec:encoder_decoder_bridge} (Eq.~\eqref{eq:encoder_component_bridge}). Thus, $V$ equals the vocabulary size, $\bm{\mu}_{k}^{\text{prior}} = \vmu_k$, and $(\sigma_{k}^{\text{prior}})^{2} = \sigma_k^2$. The prior conditional then becomes:
    \begin{equation}
    p(\rvz_t | \rvz_{<t}) = \sum_{k=1}^{V} \bm{\pi}_{t}(\rvz_{<t})[k] \mathcal{N}(\rvz_t; \vmu_k, \sigma_k^2 \mI) = \sum_{k=1}^{V} \bm{\pi}_{t}(\rvz_{<t})[k] \mathcal{N}_k(\rvz_t)
    \label{eq:app_tied_prior_conditional}
    \end{equation}
    where $\bm{\pi}_{t}(\rvz_{<t})$ are the mixture weights predicted by a Transformer based on the history $\rvz_{<t}$.
    \textbf{Tied Encoder-Decoder:} The decoder $p(x_t | \rvz_t)$ uses the tied Bayesian formulation from Eq.~\eqref{eq:tied_decoder_bridge}:
    \begin{equation}
    p(x_t=k | \rvz_t) = \frac{\mathcal{N}_k(\rvz_t)}{\sum_{j=1}^V \mathcal{N}_j(\rvz_t)}
    \label{eq:app_tied_decoder}
    \end{equation}
    \textbf{No Additional Flow Transformations for Prior:} We consider the case where the prior $p(\rvz_{1:T}) = \prod_t p(\rvz_t | \rvz_{<t})$ with $p(\rvz_t | \rvz_{<t})$ defined by Eq.~\eqref{eq:app_tied_prior_conditional} is used directly in the ELBO (Eq.~\eqref{eq:elbo_general}). This corresponds to setting $L=0$ in the context of stacked flow layers (Section \ref{sec:stacking_flow_layers_expressive_priors}), meaning $\rvz_{1:T}$ is not further transformed into $\rvu_{1:T}$ via additional flow layers for this analysis.

Under these conditions, the ELBO (Eq.~\eqref{eq:elbo_general}) is maximized. Let's analyze the terms in the ELBO related to a single token $x_t$, given its history $\rvx_{<t}$. The relevant part of the ELBO expectation, for a sample $\rvz_{1:T} \sim q(\cdot|\rvx_{1:T})$, can be written per token $t$ as:
\begin{equation}
\mathcal{L}_t^{\text{eff}} = \log p(x_t|\rvz_t) + \log p(\rvz_t|\rvz_{<t}) - \log q(\rvz_t|x_t)
\end{equation}
Substituting the definitions:
\begin{align}
\mathcal{L}_t^{\text{eff}} &= \log \left( \frac{\mathcal{N}_{x_t}(\rvz_t)}{\sum_{j=1}^V \mathcal{N}_j(\rvz_t)} \right) + \log \left( \sum_{k=1}^{V} \bm{\pi}_{t}(\rvz_{<t})[k] \mathcal{N}_k(\rvz_t) \right) - \log \mathcal{N}_{x_t}(\rvz_t) \\
&= \log \mathcal{N}_{x_t}(\rvz_t) - \log \left( \sum_{j=1}^V \mathcal{N}_j(\rvz_t) \right) + \log \left( \sum_{k=1}^{V} \bm{\pi}_{t}(\rvz_{<t})[k] \mathcal{N}_k(\rvz_t) \right) - \log \mathcal{N}_{x_t}(\rvz_t) \\
&= \log \left( \sum_{k=1}^{V} \bm{\pi}_{t}(\rvz_{<t})[k] \mathcal{N}_k(\rvz_t) \right) - \log \left( \sum_{j=1}^V \mathcal{N}_j(\rvz_t) \right) \nonumber \\
&= \log \left( \sum_{k=1}^{V} \bm{\pi}_{t}(\rvz_{<t})[k] \frac{\mathcal{N}_k(\rvz_t)}{\sum_{j=1}^V \mathcal{N}_j(\rvz_t)} \right) \nonumber \\
&= \log \left( \sum_{k=1}^{V} \bm{\pi}_{t}(\rvz_{<t})[k] p(x_t=k|\rvz_t) \right)
\label{eq:app_elbo_term_simplified}
\end{align}
The full ELBO involves an expectation $\mathbb{E}_{\rvz_{1:T} \sim q(\cdot|\rvx_{1:T})}[\sum_t \mathcal{L}_t^{\text{eff}}]$.

\begin{prop}{(Connection to Cross-Entropy Discrete AR LM).}
Under the tying conditions specified above (tied prior/encoder components, tied decoder, and $L=0$ for additional flows), in the limit where the encoder/prior Gaussian components become infinitely narrow (i.e., $\sigma_k^2 \to 0$ for all $k=1, \ldots, V$), minimizing the negative of the ELBO terms related to prior prediction and reconstruction for token $x_t$ approaches minimizing $-\log \bm{\pi}_{t}(\vmu_{x_1}, \dots, \vmu_{x_{t-1}})[x_t]$. This is equivalent to the negative log-likelihood (cross-entropy) objective for a discrete autoregressive model predicting token $x_t$ given the (deterministic embedding of) previous tokens.
\end{prop}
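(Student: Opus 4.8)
The plan is to continue from the per-token identity already derived in Eq.~\eqref{eq:app_elbo_term_simplified},
\[
\mathcal{L}_t^{\text{eff}} = \log\!\left( \sum_{k=1}^{V} \bm{\pi}_{t}(\rvz_{<t})[k]\, p(x_t{=}k\mid \rvz_t) \right), \qquad p(x_t{=}k\mid \rvz_t)=\frac{\mathcal{N}_k(\rvz_t)}{\sum_{j}\mathcal{N}_j(\rvz_t)},
\]
which already absorbs the reconstruction, prior, and encoder-entropy contributions attached to token $x_t$. Since with $L=0$ there are no further flow layers, the negative ELBO is exactly $-\mathbb{E}_{\rvz_{1:T}\sim q(\cdot\mid\rvx_{1:T})}\big[\sum_t\mathcal{L}_t^{\text{eff}}\big]$ with no $\rvu$-side terms, so it suffices to evaluate $\lim_{\sigma\to0}\mathbb{E}_q[\mathcal{L}_t^{\text{eff}}]$ (I take a common width $\sigma_k\equiv\sigma$ for notational brevity; distinct widths $\sigma_k\to0$ are analogous, since exponential decay dominates the polynomial prefactor ratios). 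Using the reparametrization $\rvz_s=\vmu_{x_s}+\sigma\bm{\epsilon}_s$ with $\bm{\epsilon}_s\sim\mathcal{N}(\mathbf 0,\mI)$ independent, I would recast this as a limit of an integral over $\bm{\epsilon}_{\le t}$.

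First I would analyze the tied decoder posterior. Substituting $\rvz_t=\vmu_{x_t}+\sigma\bm{\epsilon}_t$, the $(2\pi\sigma^2)^{-d/2}$ prefactors cancel and $p(x_t{=}k\mid\rvz_t)$ becomes a softmax over logits $-\|\vmu_{x_t}+\sigma\bm{\epsilon}_t-\vmu_k\|^2/(2\sigma^2)$: the $k=x_t$ logit equals $-\tfrac12\|\bm{\epsilon}_t\|^2$, whereas for $k\ne x_t$ the logit is $-\|(\vmu_{x_t}-\vmu_k)+\sigma\bm{\epsilon}_t\|^2/(2\sigma^2)\to-\infty$, using the assumption that the codebook means are distinct ($\vmu_k\ne\vmu_{x_t}$). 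Hence for every fixed $\bm{\epsilon}_t$, $p(x_t{=}k\mid\rvz_t)\to\mathbf{1}[k=x_t]$ as $\sigma\to0$. Second, for the mixture weights, $\bm{\pi}_t(\cdot)$ is a continuous (Transformer) map and $\rvz_{<t}=(\vmu_{x_1}+\sigma\bm{\epsilon}_1,\dots,\vmu_{x_{t-1}}+\sigma\bm{\epsilon}_{t-1})\to(\vmu_{x_1},\dots,\vmu_{x_{t-1}})$ almost surely, so $\bm{\pi}_t(\rvz_{<t})\to\bm{\pi}_t(\vmu_{x_1},\dots,\vmu_{x_{t-1}})$. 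Combining, the integrand converges pointwise to $\log\bm{\pi}_t(\vmu_{x_1},\dots,\vmu_{x_{t-1}})[x_t]$.

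It then remains to push the limit through the expectation, for which I would invoke dominated convergence. The integrand $\mathcal{L}_t^{\text{eff}}$ is $\le0$ (it is the log of a convex combination of probabilities $\le1$), and retaining only the $k=x_t$ summand gives $\mathcal{L}_t^{\text{eff}}\ge\log\bm{\pi}_t(\rvz_{<t})[x_t]+\log p(x_t{=}x_t\mid\rvz_t)$, where the softmax denominator is a sum of $V$ terms each at most $1$, so $\log p(x_t{=}x_t\mid\rvz_t)\ge-\tfrac12\|\bm{\epsilon}_t\|^2-\log V$. Thus $|\mathcal{L}_t^{\text{eff}}|$ is dominated by $-\log\bm{\pi}_t(\rvz_{<t})[x_t]+\tfrac12\|\bm{\epsilon}_t\|^2+\log V$, which is $q$-integrable and admits a $\sigma$-independent majorant under the mild assumption that $-\log\bm{\pi}_t[x_t]$ has at most polynomial growth in its argument (automatic for the usual softmax heads, whose pre-activations grow at most polynomially in $\|\rvz_{<t}\|$). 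This yields $\mathbb{E}_q[\mathcal{L}_t^{\text{eff}}]\to\log\bm{\pi}_t(\vmu_{x_1},\dots,\vmu_{x_{t-1}})[x_t]$, hence $-\mathbb{E}_q[\sum_t\mathcal{L}_t^{\text{eff}}]\to-\sum_t\log\bm{\pi}_t(\vmu_{x_1},\dots,\vmu_{x_{t-1}})[x_t]$, which is precisely the cross-entropy next-token loss of a discrete autoregressive language model whose token embeddings are the fixed codebook vectors $\vmu_{x_s}$ and whose conditional output over the vocabulary is $\bm{\pi}_t$.

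The hard part will be the \emph{joint} nature of the limit: as $\sigma\to0$ the encoder sample $\rvz_t$ concentrates on $\vmu_{x_t}$ while, at the same rate, the tied decoder posterior $p(x_t{=}\cdot\mid\rvz_t)$ sharpens into a one-hot at $x_t$, so both effects must be tracked together through the $1/\sigma^2$ scaling rather than taken sequentially — and it is exactly the distinctness of the codebook means that makes them compatible (if two means coincided, $p(x_t{=}\cdot\mid\vmu_{x_t})$ would split its mass and the one-hot limit would fail). A secondary technical point is the integrability/domination condition on $-\log\bm{\pi}_t(\rvz_{<t})[x_t]$ needed to interchange limit and expectation; for a standard softmax head this is routine, but it is the only place where a regularity assumption on the network is used.
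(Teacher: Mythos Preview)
Your proposal is correct and follows essentially the same route as the paper's proof: both start from the simplified per-token expression $\mathcal{L}_t^{\text{eff}}=\log\sum_k\bm{\pi}_t(\rvz_{<t})[k]\,p(x_t{=}k\mid\rvz_t)$, argue that $\rvz_s\to\vmu_{x_s}$ and $p(x_t{=}k\mid\rvz_t)\to\mathbf{1}[k=x_t]$ under distinct codebook means, and then substitute these limits. Your version is more careful than the paper's, which simply invokes convergence in probability and substitutes limits directly; you instead reparametrize, establish pointwise convergence in $\bm{\epsilon}$, and justify the interchange of limit and expectation via an explicit dominated-convergence bound---rigor the paper omits.
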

\begin{proof}
Consider the term $\mathcal{L}_t^{\text{eff}}$ from Eq.~\eqref{eq:app_elbo_term_simplified}. The latent variables $\rvz_t$ are sampled from $q(\rvz_t|x_t) = \mathcal{N}_{x_t}(\rvz_t) = \mathcal{N}(\rvz_t; \vmu_{x_t}, \sigma_{x_t}^2 \mI)$.
As $\sigma_k^2 \to 0$ for all $k$:
\begin{enumerate}
    \item For a given true token $x_t$, the sample $\rvz_t \sim \mathcal{N}_{x_t}(\rvz_t)$ will converge in probability to its mean: $\rvz_t \to \vmu_{x_t}$.
    \item Consequently, the history $\rvz_{<t} = (\rvz_1, \ldots, \rvz_{t-1})$ will converge to $(\vmu_{x_1}, \ldots, \vmu_{x_{t-1}})$.
    \item The decoder probability $p(x_t=k|\rvz_t)$ (Eq.~\eqref{eq:app_tied_decoder}) evaluated at $\rvz_t \approx \vmu_{x_t}$ will behave as follows:
    If $\vmu_k$ are distinct, then for $\rvz_t \approx \vmu_{x_t}$, $\mathcal{N}_{x_t}(\rvz_t)$ will be large, while $\mathcal{N}_j(\rvz_t)$ for $j \neq x_t$ will be very small.
    Thus, $p(x_t=k|\rvz_t \approx \vmu_{x_t}) \to \begin{cases} 1 & \text{if } k = x_t \\ 0 & \text{if } k \neq x_t \end{cases}$.
\end{enumerate}
Substituting these limits into Eq.~\eqref{eq:app_elbo_term_simplified}, the sum $\sum_{k=1}^{V} \bm{\pi}_{t}(\rvz_{<t})[k] p(x_t=k|\rvz_t)$ becomes:
\begin{equation}
\bm{\pi}_{t}(\vmu_{x_1}, \dots, \vmu_{x_{t-1}})[x_t] \cdot 1 + \sum_{k \neq x_t} \bm{\pi}_{t}(\vmu_{x_1}, \dots, \vmu_{x_{t-1}})[k] \cdot 0 = \bm{\pi}_{t}(\vmu_{x_1}, \dots, \vmu_{x_{t-1}})[x_t]
\end{equation}
Therefore, in this limit, the ELBO contribution $\mathcal{L}_t^{\text{eff}}$ approaches $\log \bm{\pi}_{t}(\vmu_{x_1}, \dots, \vmu_{x_{t-1}})[x_t]$.
Maximizing the ELBO thus involves maximizing this term, which is equivalent to minimizing its negative: $-\log \bm{\pi}_{t}(\vmu_{x_1}, \dots, \vmu_{x_{t-1}})[x_t]$. This is precisely the cross-entropy loss for predicting the true token $x_t$ given the sequence of means of the previous true tokens as context.
\end{proof}

This connection illustrates that our continuous latent variable framework, under specific simplifying assumptions (most importantly, very low-variance, well-separated encoder components and direct use of the tied mixture prior), can recover the objective of standard discrete autoregressive models. It offers a perspective on how continuous space modeling can generalize or relate to established discrete paradigms. When $\sigma_k^2 > 0$, the objective involves a "soft" version of this cross-entropy, where the target $p(x_t=k|\rvz_t)$ is not one-hot.

{
\section{Latent Space Evolution Metrics}
\label{ap:latent_metrics}

Our framework's formulation of language modeling within a continuous latent space, processed by stacked autoregressive normalizing flows, allows for a step-by-step observation of text formation.
At any intermediate stage $\ell$ of the $L$ flow transformations, the continuous latent sequence $\rvh_{1:T}^{(\ell)}$ can be decoded using the decoder to yield a corresponding textual output.
This ability to materialize text from intermediate representations offers a view into how the model refines an initial latent state towards a final coherent sequence.
The continuous nature of the latent variables naturally enables this step-wise refinement process. Each flow layer performs a smooth, differentiable transformation across the entire sequence representation, effectively functioning as a sophisticated editing operation.
This type of fine-grained, fully learnable editing process is challenging to implement in discrete token spaces, where making intermediate adjustments typically involves solving complex discrete optimization problems.

To further characterize quantitatively the transformations, we analyze statistics of the latent codes $\rvh_{1:T}^{(\ell)}$ after each flow layer.
These include: (1) the mean L2 norm of token embeddings $\rvh_t^{(\ell)}$; (2) the mean pairwise cosine similarity among these embeddings within a sequence; (3) the mean Participation Ratio (PR) of the set of token embeddings $\{\rvh_t^{(\ell)}\}_{t=1}^T$; and (4) the mean centroid movement, measuring the Euclidean distance between the average sequence representation $\text{mean}(\rvh_{1:T}^{(\ell)})$ and that of the preceding layer, $\text{mean}(\rvh_{1:T}^{(\ell-1)})$.

Observations of these metrics reveal a structured evolution.
A consistent decrease in the mean L2 norm across flow layers suggests that the transformations guide the latent representations towards a more defined and compact manifold.
Simultaneously, an increasing mean pairwise cosine similarity indicates that token embeddings within a sequence become more semantically clustered, enhancing internal coherence.
The utilization of embedding dimensions, can be particularly revealing when considering alternating flow directions.
For example, an initial Left-to-Right (L2R) flow might establish a foundational representation with a certain value (e.g., 7).
A subsequent Right-to-Left (R2L) flow could then see a drop.
This reduction might occur as the R2L pass focuses on integrating suffix-based or future context, temporarily constraining the representations to a lower-dimensional manifold that captures these specific right-side dependencies.
The final L2R flow, benefiting from the synthesis of both left-anchored and right-anchored information, might then exhibit a significant increase, indicating that it expands the representational capacity to integrate these now bi-directionally informed features into a richer, more expressive state.
A diminishing mean centroid movement with each subsequent layer further points to a coarse-to-fine adjustment, naturally enabled by the stacked continuous transformations: initial flow layers induce larger global changes, while later layers perform more subtle, fine-tuning modifications.
This hierarchical refinement is a direct benefit of operating in a continuous space where gradual adjustments are possible, unlike the often all-or-nothing choices in discrete generation.

This section provides detailed definitions and computation methods for the metrics employed to analyze the evolution of continuous latent representations $\rvh_{1:T}^{(\ell)}$ across $L$ stacked autoregressive flow layers. We consider a batch of $B$ sequences, where each sequence consists of $T$ tokens. The $d$-dimensional latent embedding of the $t$-th token in the $b$-th sequence, after processing by the $\ell$-th flow transformation, is denoted $\rvh_{b,t}^{(\ell)} \in \R^d$. The initial state of these embeddings, prior to any flow transformations (e.g., sampled noise or an encoder's output), is designated $\rvh_{b,t}^{(0)}$. The subsequent states, corresponding to the outputs of the $L$ flow layers, are $\rvh_{b,t}^{(1)}, \ldots, \rvh_{b,t}^{(L)}$. Our statistical analysis primarily focuses on characterizing these $L$ flow layer outputs.

\paragraph{Mean L2 Norm}
The L2 norm (Euclidean norm) of a token embedding $\rvh_{b,t}^{(\ell)}$ measures its magnitude in the $d$-dimensional space:
\begin{equation}
\| \rvh_{b,t}^{(\ell)} \|_2 = \sqrt{\sum_{j=1}^d (h_{b,t,j}^{(\ell)})^2}
\end{equation}
For each sequence $b$ at flow layer $\ell$, the average L2 norm of its token embeddings is calculated as:
\begin{equation}
\bar{N}_{b}^{(\ell)} = \frac{1}{T} \sum_{t=1}^T \| \rvh_{b,t}^{(\ell)} \|_2
\end{equation}
The Mean L2 Norm reported for flow layer $\ell$ is the average of these per-sequence values over all $B$ sequences in the batch:
\begin{equation}
\text{Mean L2 Norm}^{(\ell)} = \frac{1}{B} \sum_{b=1}^B \bar{N}_{b}^{(\ell)}
\end{equation}
The standard deviation of the set $\{\bar{N}_{b}^{(\ell)}\}_{b=1}^B$ across the batch is also computed. This metric indicates whether the flow transformations tend to expand, contract, or preserve the overall scale of the token embeddings.

\paragraph{Mean Pairwise Cosine Similarity (Intra-Sequence)}
Cosine similarity quantifies the angular relationship between two embedding vectors. For any two token embeddings $\rvh_{b,t_1}^{(\ell)}$ and $\rvh_{b,t_2}^{(\ell)}$ within the same sequence $b$ at flow layer $\ell$, their cosine similarity is:
\begin{equation}
\text{cos\_sim}(\rvh_{b,t_1}^{(\ell)}, \rvh_{b,t_2}^{(\ell)}) = \frac{\rvh_{b,t_1}^{(\ell)} \cdot \rvh_{b,t_2}^{(\ell)}}{\| \rvh_{b,t_1}^{(\ell)} \|_2 \| \rvh_{b,t_2}^{(\ell)} \|_2 + \epsilon_{\text{cos}}}
\end{equation}
where $\epsilon_{\text{cos}}$ is a small constant (e.g., $10^{-9}$) added to the denominator for numerical stability. For each sequence $b$ at flow layer $\ell$ (assuming $T > 1$), the average cosine similarity is computed over all unique pairs of distinct tokens $(t_1, t_2)$ where $t_1 < t_2$:
\begin{equation}
\bar{C}_{b}^{(\ell)} = \frac{2}{T(T-1)} \sum_{t_1=1}^{T-1} \sum_{t_2=t_1+1}^{T} \text{cos\_sim}(\rvh_{b,t_1}^{(\ell)}, \rvh_{b,t_2}^{(\ell)})
\end{equation}
If $T \le 1$, this average is typically considered undefined or assigned a default value (e.g., 1.0 if $T=1$, as there are no distinct pairs). The Mean Pairwise Cosine Similarity for flow layer $\ell$ is the average of these per-sequence values over the batch:
\begin{equation}
\text{Mean Pairwise Cosine Sim.}^{(\ell)} = \frac{1}{B} \sum_{b=1}^B \bar{C}_{b}^{(\ell)}
\end{equation}
The standard deviation of $\{\bar{C}_{b}^{(\ell)}\}_{b=1}^B$ is also reported. This metric provides insight into the internal coherence or degree of representational similarity among tokens within a sequence.

\paragraph{Mean Participation Ratio (PR) (Intra-Sequence)}
The Participation Ratio (PR) estimates the effective number of dimensions utilized by a collection of embeddings. For a given sequence $b$ at flow layer $\ell$, let $\mathbf{H}_{b}^{(\ell)} \in \R^{T \times d}$ be the matrix where each row is a token embedding $\rvh_{b,t}^{(\ell)}$. This analysis requires $T > 1$ and $d > 0$. First, the embeddings are centered by subtracting their mean $\bar{\rvh}_{b}^{(\ell)} = \frac{1}{T}\sum_{t=1}^T \rvh_{b,t}^{(\ell)}$ from each $\rvh_{b,t}^{(\ell)}$ to obtain the centered matrix $\mathbf{H}_{b, \text{centered}}^{(\ell)}$. The $d \times d$ sample covariance matrix of these $T$ centered $d$-dimensional embeddings is:
\begin{equation}
\Sigma_{b}^{(\ell)} = \frac{1}{T-1} (\mathbf{H}_{b, \text{centered}}^{(\ell)})^\top \mathbf{H}_{b, \text{centered}}^{(\ell)}
\end{equation}
Let $\{\lambda_1, \ldots, \lambda_d\}$ be the non-negative eigenvalues of $\Sigma_{b}^{(\ell)}$. The PR for sequence $b$ at layer $\ell$ is defined as:
\begin{equation}
\text{PR}_{b}^{(\ell)} = \frac{(\sum_{j=1}^d \lambda_j)^2}{\sum_{j=1}^d \lambda_j^2 + \epsilon_{\text{PR}}}
\end{equation}
where $\epsilon_{\text{PR}}$ is a small constant to prevent division by zero if all eigenvalues are zero. The PR ranges from 1 (if all centered embeddings are collinear, indicating usage of one effective dimension) to $d$ (if the variance is distributed isotropically across all $d$ dimensions). The Mean Participation Ratio for flow layer $\ell$ is the average over the batch:
\begin{equation}
\text{Mean PR}^{(\ell)} = \frac{1}{B} \sum_{b=1}^B \text{PR}_{b}^{(\ell)}
\end{equation}
The standard deviation of $\{\text{PR}_{b}^{(\ell)}\}_{b=1}^B$ is also reported. This metric indicates the breadth of the dimensional subspace actively occupied by the token representations within a sequence.

\paragraph{Mean Centroid Movement (Inter-Step)}
The centroid of a sequence $b$ at a specific original step $s \in \{0, \ldots, L\}$ (where $s=0$ denotes the initial state, and $s=1, \ldots, L$ denote the outputs of the successive flow layers) is the mean of its token embeddings at that step:
\begin{equation}
\vmu_{b}^{(s)} = \frac{1}{T} \sum_{t=1}^T \rvh_{b,t}^{(s)}
\end{equation}
The centroid movement for sequence $b$ induced by the $\ell$-th flow layer (which transforms the representation from original step $\ell-1$ to original step $\ell$) is the Euclidean distance between the centroids of these two consecutive states:
\begin{equation}
M_{b}^{(\ell)} = \| \vmu_{b}^{(\ell)} - \vmu_{b}^{(\ell-1)} \|_2, \quad \text{for } \ell = 1, \ldots, L
\end{equation}
The Mean Centroid Movement for flow layer $\ell$ is the average of these per-sequence movement magnitudes over the batch:
\begin{equation}
\text{Mean Centroid Movement}^{(\ell)} = \frac{1}{B} \sum_{b=1}^B M_{b}^{(\ell)}
\end{equation}
The standard deviation of $\{M_{b}^{(\ell)}\}_{b=1}^B$ is also reported. This metric measures the magnitude of change that each flow layer imparts on the average representation of a sequence.

}
}

{
\section{FLOPs Calculation Details}
\label{appendix:flops_calculation}

This appendix provides a detailed breakdown of how the Floating Point Operations (FLOPs) for the forward pass of an entire sequence are calculated for both the regular Transformer model and the Flow Transformer model featuring the MAF-MLP head, as presented in Figure~\ref{fig:transformer_flops_comparision}. We assume a multiply-accumulate operation constitutes 2 FLOPs. FLOPs from bias terms, normalization layers, and activation functions (like GELU or ELU in the main Transformer body or within the MAF-MLP module) are generally not counted, as their contribution is typically much smaller than that of matrix multiplications.

\paragraph{Regular Transformer Model}
The calculation for a standard autoregressive Transformer model is as follows.
Let $L$ be the number of Transformer layers, $d$ be the model dimension (embedding dimension, \texttt{d\_model}), $S$ be the input sequence length, and $V$ be the vocabulary size.
We assume an MLP (Feed-Forward Network) ratio of 4, meaning the inner dimension of the FFN is $4d$. We also assume that the sum of attention head dimensions $d_{\rm attn}$ equals $d$.

The FLOPs for each component for processing an entire sequence of length $S$ are:

First, for \textit{Input Embeddings (Token + Positional)}, each token's embedding lookup and addition of positional encoding is approximated. For $S$ tokens, this is $C_{\rm embed} = 4\,d\,S$.

Second, for \textit{Attention QKV Projections}, for each of $L$ layers, input of shape $(S, d)$ is projected to queries, keys, and values. This can be viewed as a multiplication by a weight matrix of effective shape $(d, 3d)$. FLOPs per layer: $2 \times S \times d \times 3d = 6\,S\,d^2$. Total for $L$ layers: $C_{\rm QKV} = 6\,L\,d^2\,S$.

Third, for \textit{Attention Logits (Query-Key Dot Products)}, in each of $L$ layers, each of $S$ query vectors (dimension $d$) computes dot products with $S$ key vectors (dimension $d$). FLOPs for one query vector to attend to all $S$ key vectors: $S \times (2d) = 2\,d\,S$. For all $S$ query vectors in a layer: $S \times (2\,d\,S) = 2\,d\,S^2$. Total for $L$ layers: $C_{\rm QK} = 2\,L\,d\,S^2$.

Fourth, for \textit{Attention Output Projection}, for each of $L$ layers, the attention output (shape $(S,d)$) is projected by a weight matrix of shape $(d,d)$. FLOPs per layer: $2 \times S \times d \times d = 2\,S\,d^2$. Total for $L$ layers: $C_{\rm proj} = 2\,L\,d^2\,S$.

Fifth, for \textit{Feed-Forward Network (FFN/MLP)}, each FFN block has two linear layers. The first linear layer ($d \to 4d$): $2 \times S \times d \times 4d = 8\,S\,d^2$. The second linear layer ($4d \to d$): $2 \times S \times 4d \times d = 8\,S\,d^2$. Total FFN FLOPs per layer: $16\,S\,d^2$. Total for $L$ layers: $C_{\rm FF} = 16\,L\,d^2\,S$.

Finally, for \textit{Output Linear Head (De-embedding)}, it maps final hidden states (shape $(S,d)$) to vocabulary logits (shape $(S,V)$) using a weight matrix of shape $(d,V)$. FLOPs: $C_{\rm head} = 2 \times S \times d \times V = 2\,d\,V\,S$.

Summing these components, the total forward FLOPs for the entire sequence are:
\begin{align*}
C_{\rm fwd/sequence}^{\rm regular} &= C_{\rm embed} + C_{\rm QKV} + C_{\rm QK} + C_{\rm proj} + C_{\rm FF} + C_{\rm head} \\
&= 4\,d\,S + 6\,L\,d^2\,S + 2\,L\,d\,S^2 + 2\,L\,d^2\,S + 16\,L\,d^2\,S + 2\,d\,V\,S \\
&= (6+2+16)\,L\,d^2\,S + 2\,L\,d\,S^2 + 4\,d\,S + 2\,d\,V\,S \\
&= 24\,L\,d^2\,S + 2\,L\,d\,S^2 + 4\,d\,S + 2\,d\,V\,S
\end{align*}
This can be factored by $S$:
$$ C_{\rm fwd/sequence}^{\rm regular} = S\,\Bigl[\;24\,L\,d^2 + 2\,L\,d\,S + 4\,d + 2\,d\,V \;\Bigr] $$
For the calculations related to Figure~\ref{fig:transformer_flops_comparision}, the parameters $L$ (number of layers) and $d$ (model dimension) correspond to the GPT-2 configurations specified in the main text. The sequence length $S$ is set to $1024$, and the vocabulary size $V$ is $50257$.

\paragraph{Flow Transformer Model (with MAF-MLP Head)}
This model uses the same Transformer body as the regular model but replaces the standard linear output head with the custom MAF-MLP module. The input sequence structure is also modified based on a patch size $P$.

Let $L$ be the number of Transformer layers in the main body, $d_{\rm tf}$ be the model dimension of the main Transformer body (i.e., \texttt{n\_embd} from GPT-2 configs), $S_{\rm base}$ be the original sequence length (e.g., 1024), $P$ be the patch size, and $S_{\rm special} = \lfloor S_{\rm base} / P \rfloor$ be the effective sequence length for the Transformer body, calculated using integer division. If this value becomes 0 (e.g., if $P > S_{\rm base}$), it is clamped to 1 for calculation purposes.

The FLOPs for the Transformer body (excluding any output head) are:
$$ C_{\rm body/no-head} = 24\,L\,d_{\rm tf}^2\,S_{\rm special} + 2\,L\,d_{\rm tf}\,S_{\rm special}^2 + 4\,d_{\rm tf}\,S_{\rm special} $$

To this, we add the FLOPs from the MAF-MLP module, calculated per token and then multiplied by $S_{\rm special}$.

For the MAF-MLP module, let $c_{\rm in}$ be the module's input channels, determined by $16P$, $\text{num\_mixtures} = 64$, $c_{\rm out\_per\_in}$ be the module's output channels per input channel, defined as $3 \times \text{num\_mixtures} = 3 \times 64 = 192$, $D_{\rm module}$ (the module's \texttt{hidden\_size} parameter, fixed at 128 for the plot) be the module's internal hidden dimension, and $E$ be the module's computed \texttt{embed\_size}, calculated as $E = \min(\max(1, \text{int}(D_{\rm module} \times 9.0 / 16.0 / (c_{\rm in} - 1))), 96)$, where $\text{int}(\cdot)$ denotes integer casting. (We ensure $c_{\rm in}-1 > 0$ as $P \ge 1 \implies c_{\rm in} \ge 16$).

The FLOPs for the MAF-MLP module ($C_{\rm module}$) per token, considering only its linear layers, are:

First, the \texttt{in\_to\_features} layer projects an input of effective dimension $3(c_{\rm in}-1)$ to $E(c_{\rm in}-1)$. FLOPs: $C_1 = 2 \times 3(c_{\rm in}-1) \times E(c_{\rm in}-1) = 6\,E\,(c_{\rm in}-1)^2$.

Second, the \texttt{features\_to\_hidden} layer projects an input of dimension $d_{\rm tf} + E(c_{\rm in}-1)$ (concatenation of transformer output \texttt{features} of dimension $d_{\rm tf}$ and module's internal \texttt{in\_features}) to an output dimension of $(D_{\rm module}/2)c_{\rm in}$. FLOPs: $C_2 = 2 \times [d_{\rm tf}+E(c_{\rm in}-1)] \times (D_{\rm module}/2 \times c_{\rm in}) = D_{\rm module}\,c_{\rm in}\,[d_{\rm tf}+E(c_{\rm in}-1)]$.

Third, the \texttt{hidden\_to\_out} layer projects an input of dimension $(D_{\rm module}/2)c_{\rm in}$ to $c_{\rm out\_per\_in}c_{\rm in}$. FLOPs: $C_3 = 2 \times (D_{\rm module}/2 \times c_{\rm in}) \times (c_{\rm out\_per\_in}c_{\rm in}) = D_{\rm module}\,c_{\rm in}^2\,c_{\rm out\_per\_in}$.

The total FLOPs per token for the module is $C_{\rm module} = C_1 + C_2 + C_3$:
$$ C_{\rm module} = 6\,E\,(c_{\rm in}-1)^2 + D_{\rm module}\,c_{\rm in}\,[d_{\rm tf}+E(c_{\rm in}-1)] + D_{\rm module}\,c_{\rm in}^2\,c_{\rm out\_per\_in} $$

The total forward FLOPs for the entire sequence in the Flow Transformer model are:
\begin{align*}
C_{\rm fwd/seq}^{\rm special} &= C_{\rm body/no-head} + S_{\rm special} \times C_{\rm module} \\
&= S_{\rm special}\,\Bigl[\; (24\,L\,d_{\rm tf}^2 + 2\,L\,d_{\rm tf}\,S_{\rm special} + 4\,d_{\rm tf}) + C_{\rm module} \;\Bigr]
\end{align*}

The specific parameter values used to generate the data for Figure~\ref{fig:transformer_flops_comparision} are summarized below. For the regular Transformer, the base model parameters ($L$, $d$, $S=1024$, $V=50257$) are taken from standard GPT-2 configurations (e.g., \texttt{gpt2}, \texttt{gpt2-medium}, \texttt{gpt2-large}, \texttt{gpt2-xl}). For the Flow Transformer (Special Transformer), the main body parameters $L$ and $d_{\rm tf}$ also correspond to these GPT-2 configurations. The parameters specific to its MAF-MLP head and sequence processing depend on the patch size $P \in \{1, 2, 4\}$ as follows: $S_{\rm special} = \lfloor 1024 / P\rfloor$ (clamped to 1 if 0); $c_{\rm in} = 16P$; $D_{\rm module}$ (the module's \texttt{hidden\_size}) is $128$; $\text{num\_mixtures} = 64$ resulting in $c_{\rm out\_per\_in} = 192$; and $E = \min(\max(1,\text{int}(\lfloor D_{\rm module}\times9/(16\,(c_{\rm in}-1))\rfloor)),96)$, where $\text{int}(\cdot)$ denotes integer casting. These values directly correspond to those listed in Table~1 and Table~2 of the user-provided values document that informed these calculations.
}

{
\section{Related Works}

\paragraph{Coupling-based Normalizing Flows}
A substantial body of work has focused on designing expressive invertible transformations with tractable Jacobian determinants for normalizing flows. NICE \cite{dinh2014nice} pioneered the use of additive coupling layers, which made the Jacobian determinant computation straightforward. RealNVP \cite{dinh2017density} built upon this by introducing scaling and shifting operations, thereby increasing model flexibility. Glow \cite{kingma2018glow} further improved these models by incorporating invertible $1 \times 1$ convolutions, which led to better performance in image generation. Flow++ \cite{ho2019flow++} enhanced expressiveness by integrating attention mechanisms. iResNet \cite{behrmann2019invertible} showed that standard ResNet architectures \cite{he2016deep} could be made invertible by adding normalization steps. Additionally, normalizing flows have been instrumental in improving Variational Autoencoders (VAEs) \cite{kingma2013auto} by enabling more flexible posteriors \cite{su2018f,zhang2020perceptual}, and in diffusion models \cite{songscore} by introducing flexible nonlinear drift and diffusion terms \cite{kim2022maximum}. However, these approaches often require carefully designed and restrictive architectures, which can hinder scalability.

\paragraph{Continuous Normalizing Flows}
Neural Ordinary Differential Equations \cite{chen2018neural} reinterpret the ResNet architecture as a deterministic ODE in the continuous-time limit, which naturally extends to the concept of Continuous Normalizing Flows. In this setting, invertibility is guaranteed, and the Jacobian determinant reduces to the trace of the Jacobian. The adjoint method, based on Pontryagin's Maximum Principle \cite{pontryagin2018mathematical}, enables efficient gradient computation with constant memory requirements. FFJORD \cite{grathwohlffjord} made Jacobian trace estimation more efficient by using Hutchinson's estimator \cite{hutchinson1989stochastic}. Despite these advances, such models can be numerically unstable during training and sampling, as discussed in \cite{zhuang2021mali,liu2021second}. Their expressiveness can be further increased by introducing auxiliary variables \cite{dupont2019augmented,chalvidal2020go}.

\paragraph{Autoregressive Normalizing Flows}
There has also been considerable progress in combining normalizing flows with autoregressive models. IAF~\cite{kingma2016improved} proposed dimension-wise affine transformations conditioned on previous dimensions for variational inference, while MAF~\cite{papamakarios2017masked} utilized the MADE~\cite{germain2015made} architecture to realize invertible mappings via autoregressive transformations. Neural autoregressive flow~\cite{huang2018neural} replaced the affine transformation in MAF with a monotonic neural network per dimension, increasing expressiveness at the expense of analytical invertibility. T-NAF~\cite{patacchiola2024transformer} extended NAF by employing a single autoregressive Transformer. Block Neural Autoregressive Flow~\cite{de2020block} instead fits an end-to-end autoregressive monotonic neural network, as opposed to NAF's dimension-wise approach, but also loses analytical invertibility.

\paragraph{Probability Flow in Diffusion}
Diffusion models \cite{sohl2015deep,ho2020denoising,songscore} generate samples by simulating stochastic differential equations. \citet{songscore} introduced a deterministic ODE formulation, known as scoreflow \cite{song2021maximum}, as a counterpart to the stochastic process. This scoreflow can be viewed as a special case of continuous normalizing flows, where the learned score and base drift are combined into a new drift term. However, \citet{lu2022maximum} showed that the standard training objective in diffusion models, which is based on a first-order score approximation, does not maximize the likelihood for the scoreflow.

\paragraph{Diffusion models, other generative models}
Diffusion models \cite{ho2020denoising,songscore} have recently emerged as powerful generative models, achieving impressive results. Stable Diffusion \cite{podellsdxl} and OpenSora \cite{opensora} have demonstrated the ability of diffusion models to generate extremely high-dimensional data. Other prominent generative models include Variational Autoencoders (VAEs) \cite{kingma2013auto} and Generative Adversarial Networks (GANs) \cite{goodfellow2014generative}. VQ-VAE \cite{van2017neural} addresses posterior collapse and achieves strong generative performance, serving as a key component in later latent diffusion models \cite{podellsdxl}. In the GAN domain, works such as \cite{karras2019style,kang2023scaling,brock2018large} have demonstrated the ability to generate high-resolution images with relatively low inference cost, though training GANs remains challenging due to stability issues \cite{wiatrak2019stabilizing}.

\paragraph{Continuous Diffusion for Discrete Data}
A common strategy for modeling discrete data is to operate in a continuous embedding space, adapting continuous diffusion techniques. This enables the use of powerful continuous models, but requires mapping back to the discrete domain. \textbf{Diffusion-LM} \citep{li2022diffusion} applied continuous diffusion to word embeddings, allowing for controllable text generation via gradient-based guidance. \textbf{Plaid} \citep{plaid} focused on likelihood-based text modeling, jointly optimizing embeddings and model parameters using the VLB, categorical reparameterization, an output prior, a learned conditional likelihood $p(x|z_0)$, and self-conditioning. \textbf{CDCD} \citep{dieleman2022continuousdiffusioncategoricaldata} used a probability flow ODE on embeddings, employing score interpolation to jointly train embeddings and a denoising Transformer with a cross-entropy loss and time warping. \textbf{Bit Diffusion} \citep{chen2022analog} represented discrete data as continuous "analog bits," incorporating self-conditioning and asymmetric time intervals. While these methods are effective, they rely on continuous relaxations or embeddings, motivating the development of models that operate directly on discrete spaces. Many of these works also explore non-autoregressive, parallel generation approaches~\citep{bowman2015generating,gu2017non,li2022diffusion,hoogeboom2021argmax,savinov2021step,che2017maligan,zhang2020learning,zhang2022learning,yu2017seqgan,de2019training,deng2020residual,zhang2022robust}, in contrast to sequential autoregressive models.

\paragraph{Discrete Diffusion Models}
Another line of research focuses on diffusion processes that are inherently defined on discrete state spaces, typically using Markov chains. Building on early work \citep{sohl2015deep, hoogeboom2021argmax}, \textbf{D3PM} \citep{d3pm} generalized discrete diffusion by employing various structured transition matrices (such as uniform, absorbing, and Gaussian-like) and training with a hybrid VLB/cross-entropy loss. \citet{campbell_ctmc} extended this to Continuous-Time Markov Chains (CTMCs), deriving a continuous-time ELBO and proposing efficient sampling methods like tau-leaping and predictor-corrector schemes, leveraging factorization for scalability.

Rather than simulating Markov chains directly, some works define score-like quantities for discrete diffusion. The concrete score, given by the ratio of marginal probabilities $p_t(\rvy)/p_t(\rvx)$, serves as a discrete analogue to the continuous score \citep{meng2022concrete, sedd}. \textbf{SEDD} \citep{sedd} trained models using a score entropy objective ($L_{DSE}$) based on this ratio, relating it to the ELBO and employing Tweedie $\tau$-leaping for sampling. \citet{sun2022score} introduced categorical ratio matching in a CTMC framework, learning singleton conditionals $p_t(x^d|\rvx^{\setminus d})$ with a tractable loss and an analytical reverse sampler. Building on this, \citet{ou2024your} showed that for absorbing diffusion, the concrete score decomposes into a time-independent conditional and a time-dependent scalar, simplifying the model (\textbf{RADD}) and leading to the Denoising Cross-Entropy (DCE) loss.

Masked (absorbing) diffusion, which replaces tokens with a special [MASK] token during the forward process, has proven highly effective. \textbf{MDLM} \citep{mdlm} introduced a substitution-based parameterization (SUBS) and derived a simplified Rao-Blackwellized ELBO equivalent to weighted Masked Language Modeling (MLM) losses, enabling generative training of encoder-only models. \citet{shi2024simplified} (\textbf{MD4}) further unified this framework, deriving a simple ELBO with SNR invariance properties akin to continuous diffusion and generalizing to state-dependent masking schedules.

Further advances have refined the parameterization and mechanisms of discrete diffusion. \textbf{Reparameterized Discrete diffusion Models (RDM)} \citep{zheng2023reparameterized} identified a route-and-denoise mechanism, reducing the objective to cross-entropy on noisy tokens and enabling adaptive routing during sampling. \citet{liu2024think} proposed \textbf{Discrete Diffusion with Planned Denoising (DDPD)}, which factorizes the reverse process into a planner (predicting corruption) and a denoiser, allowing for adaptive sampling via the Gillespie algorithm guided by the planner.

Discrete Flow Matching provides another generalization. \citet{discrete_flow_matching} defined probability paths interpolating between discrete distributions and derived corresponding probability velocities, analogous to continuous flow matching, yielding a unified sampling theory. \citep{campbell2024generative} formulated discrete flows using CTMCs, learning scores via cross-entropy and enabling flexible inference by adjusting the rate matrix family at test time without retraining, also supporting multimodal generation. Discrete diffusion concepts have also been extended to structured data, such as graphs, as in \textbf{DiGress} \citep{vignac2022digress}, which uses specialized noise transitions, auxiliary features, and classifier guidance.

\end{document}